\newcommand{\R}{\mathbb R}
\newcommand{\hL}{\mathcal L}
\newcommand{\hR}{\mathcal R}
\newcommand{\mg}{\mathbf g}
\newcommand{\mmu}{\mathbf u}
\newcommand{\mx}{\mathbf x}
\newcommand{\mA}{\mathbf A}
\newcommand{\mB}{\mathbf B}
\newcommand{\mE}{\mathbf E}
\newcommand{\mC}{\mathbf C}
\newcommand{\mI}{\mathbf I}
\newcommand{\mN}{\mathcal{N}}
\newcommand{\mS}{\mathbf{S}}
\newcommand{\mU}{\mathbf U}
\newcommand{\mV}{\mathbf V}
\newcommand{\mX}{\mathbf X}
\newcommand{\mY}{\mathbf Y}
\newcommand{\mtheta}{\bm{\theta}}
\newcommand{\mSig}{\bm{\Sigma}}
\def\ie{\textit{i.e.}}
\def\eg{\textit{e.g.}}
\def\defineas{\dot{=}}
\DeclarePairedDelimiter{\@norm}{\lVert}{\rVert}
\def\norm{\@norm}
\newtheorem{theorem}{Theorem}
\newtheorem{corollary}{Corollary}
\newtheorem{lemma}{Lemma}
\newtheorem{proposition}{Proposition}
\title{Efficient Differentiable Approximation of Generalized Low-rank Regularization}
\author{
Naiqi Li$^1$
\and
Yuqiu Xie$^1$\and
Peiyuan Liu$^1$\and
Tao Dai$^{2,}$\thanks{Corresponding authors: Tao Dai and Yong Jiang.}\and
Yong Jiang$^{1,}$\footnotemark[1]\And
Shu-Tao Xia$^{1}$\\
\affiliations
$^1$Tsinghua Shenzhen International Graduate School\\
$^2$Shenzhen University\\
\emails
\{linaiqi, jiangy, xiast\}@sz.tsinghua.edu.cn,
\{jieyq22, lpy23\}@mails.tsinghua.edu.cn,
daitao.edu@gmail.com
}
\begin{document}

\maketitle

\begin{abstract}
Low-rank regularization (LRR) has been widely applied in various machine learning tasks, but the associated optimization is challenging. Directly optimizing the rank function under constraints is NP-hard in general. To overcome this difficulty, various relaxations of the rank function were studied. However, optimization of these relaxed LRRs typically depends on singular value decomposition, which is a time-consuming and nondifferentiable operator that cannot be optimized with gradient-based techniques.
To address these challenges, in this paper we propose an efficient differentiable approximation of the generalized LRR. 
The considered LRR form subsumes many popular choices like the nuclear norm, the Schatten-$p$ norm, and various nonconvex relaxations.
Our method enables LRR terms to be appended to loss functions in a plug-and-play fashion, and the GPU-friendly operations enable efficient and convenient implementation. 
Furthermore, convergence analysis is presented, which rigorously shows that both the bias and the variance of our rank estimator rapidly reduce with increased sample size and iteration steps.
In the experimental study, the proposed method is applied to various tasks, which demonstrates its versatility and efficiency. Code is available at \url{https://github.com/naiqili/EDLRR}.
\end{abstract}

\section{Introduction}

Low-rank structures have proven to be effective in a wide range of machine learning tasks, encompassing computer vision \cite{ren2022robust}, model compression \cite{idelbayev2020low}, representation learning \cite{fu2021hierarchical,fu2022one}, and large language model adaptation \cite{HuSWALWWC22}.
To discover and utilize the low-rank structures, a typical paradigm is to introduce low-rank regularization (LRR) terms to the models, which can conveniently express various low-rank priors and assumptions.

However, with the presence of LRR terms, the optimization problem can be extremely difficult. It is well-known that even under linear constraints, optimizing the rank function is NP-hard \cite{wright2022high}.
To alleviate this computational intractability, many relaxations of the rank function have been proposed.
For example, the nuclear norm is known to be the tightest convex approximation of the rank function, and thus has been extensively investigated and applied \cite{yang2016nuclear}.
The Schatten-$p$ norm and its variants are considered as the generalization of the nuclear norm, and also found successful applications \cite{xu2017unified,chen2021efficient}.
These relaxation techniques penalize all singular values of the target matrix simultaneously, while in practical applications it is desired that the large singular values are less impacted, so the key information of the target matrix is preserved.
With this motivation, recent studies proposed novel nonconvex relaxations \cite{kang2015robust,peng2015subspace,friedman2012fast,gao2011feasible}.

% Though these relaxation techniques make the regularized optimization problem solvable in principle, there are still huge obstacles in developing concrete optimization strategies.

However, it is still challenging to efficiently optimize these models with relaxed low-rank regularization.
Current optimization methods can be categorized as the matrix factorization approach and the rank function optimization approach, but both of them suffer severe shortcomings in practice.

The basic idea of matrix factorization is to decompose the target matrix into the product of multiple low-rank sub-matrices.
This formulation allows gradient to be propagated, and so the optimization is straightforward. 
A key problem of this approach is that it demands strong prior knowledge about the matrix's true rank or its upper bound. In almost all practical situations such knowledge is unavailable, and so the good performance frequently depends on laborious parameter selection.
Arguably, the goal of rank regularization is to discover the true rank of a matrix, but this approach leaves this burden to the users.

In contrast, the rank function optimization approach incorporates a regularization term into the loss function, allowing the appropriate matrix rank to be automatically determined. However, solving the associated optimization problem is nontrivial. Previous works attempted to adapt techniques from the convex optimization literature, such as the proximal gradient and alternating direction method of multipliers (ADMM). Unfortunately, many of these methods require the loss function to be convex, which severely limits their applicability. Additionally, these approaches are inconvenient to implement, and each work needs to laboriously derive the specific optimization rules.
Critically, all these methods rely on singular value decomposition (SVD).
Besides being time-consuming, SVD is generally a nondifferentiable operation that hinders gradient propagation 
(although there exists sophisticated method for computing SVD that allows gradient propagation, it incurs $O(D^4)$ computational complexity which is prohibitive for practical applications \cite{papadopoulo2000estimating}).
Consequently, applying gradient-based optimization techniques or utilizing popular deep learning libraries (\eg, PyTorch and TensorFlow) becomes infeasible, and high-performing GPUs cannot be fully utilized.

In this paper, we propose a novel differentiable approximation of a general form of LRR, which covers a broad range of relaxations of the rank function. 
The main idea of our work is to introduce an equivalent stochastic definition of the rank function, as well as its relaxed variants.
This significant discovery enables us to approximate the LRR term with finite samples and a partial sum of the series expansion.
Our implementation is publicly available in the supplementary material. The main contributions of this paper are summarized as follows:
\begin{enumerate}
    \item We propose an efficient differentiable approximation of the generalized LRR, which covers the nuclear norm, the Schatten-$p$ norm, and many recently proposed nonconvex relaxations of the rank function.
    \item The proposed LRR approximation is convenient and efficient. It can be directly appended to loss functions and optimized by existing deep learning libraries. In most cases, a few lines of code are sufficient to adapt to new problems. The operations are GPU-friendly, which enable highly parallel and efficient computation.
    \item Theoretical convergence analysis is presented, which rigorously demonstrates that both the bias and the variance of the proposed rank estimator rapidly reduce with increased sample size and iteration steps.
    \item We performed extensive experiments over various tasks, including matrix completion, video fore-background separation, and image denoising, which demonstrate the versatility and efficiency of our proposed method.
\end{enumerate}

\section{Related Work}

\subsection{Relaxations of the Rank Function}

% The low-rank structure has been successfully employed in diverse machine learning problems, including computer vision tasks  \cite{hu2012fast, zhang2012matrix, ren2022robust, zhang2021learning, chen2022reweighted} and
% low-rank matrix completion \cite{hu2012fast, zhang2012matrix}.
% Robust PCA is a general framework based on LRRs, assuming that the observed matrix can be decomposed as a low-rank and a sparse component \cite{WrightGRPM09, candes2011robust}.
% Video fore-background separation is one of the important applications of Robust PCA \cite{li2004statistical, kang2015robust}.

Minimizing the rank function is challenging, and even finding the optimal solution under the linear constraint is NP-hard \cite{wright2022high}.
To address this challenge, various relaxations of the rank function have been proposed.
The nuclear norm $\norm{\mS}_*=\sum_{i=1}^r \sigma_i(\mS)$ is the tightest convex relaxation of the rank function, and is one of the most popular substitution \cite{yang2016nuclear}.
However, the nuclear norm penalizes all singular values simultaneously. Since for many matrices in practical applications, the major information is captured by a few singular values, so it is desired that they are less impacted when reducing the rank.
Motivated by this, advanced relaxations of the general form $\hR(\mS)=\sum_{i=1}^r h(\sigma_i(\mS))$ are considered, where $h$ is a function that increases penalties on small singular values.
Typical examples of the relaxed rank function include the $\gamma$-nuclear norm \cite{kang2015robust}, Laplace \cite{trzasko2008highly}, LNN \cite{peng2015subspace}, Logarithm \cite{friedman2012fast}, ETP \cite{gao2011feasible}, and Geman \cite{geman1995nonlinear}.
A summary of these relaxations and the corresponding penalty function $h$ can be found in \cite{hu2021low}.

\subsection{Optimization of the Low-rank Regularization}

% The optimization of functions with low-rank regularization is a long-standing and well-investigated challenge. Related research can be broadly categorized as the matrix factorization approach and the rank function optimization approach.

% The idea behind the matrix factorization approach is rather simple.
% For a matrix $\mS\in \R^{m \times n}$, these methods decompose it as $\mS=\mathbf{A} \mathbf{B}$, where $\mathbf{A}\in \R^{m \times r}$ and $\mathbf{B}\in \R^{r \times n}$, so that the rank of matrix is no larger than $r$.
% This simple strategy has yielded many successful results.

Matrix factorization and rank function optimization are two prominent approaches for optimizing low-rank models.

The concept of matrix factorization involves decomposing the target matrix into the product of multiple low-rank components. Under this formulation, the optimization is straightforward. Notable examples include the decomposition of weight matrices into low-rank factors. % in works such as \cite{AlvarezS17, yang2020learning, xu2021trp}, which have found applications in neural network pruning and compression. 
Additionally, \citeauthor{GengGCLWL21} demonstrated that this strategy enables neural networks to learn global information and can even replace the attention mechanism \cite{GengGCLWL21}. \citeauthor{ornhag2020bilinear} (\citeyear{ornhag2020bilinear}) introduced a differentiable bilinear parameterization of the nuclear norm, relying on matrix decomposition. 
% The authors also provided theoretical analyses regarding the optimality and convergence of the method \cite{OrnhagOH19}. 
The authors further proposed VarPro, which utilizes second-order optimization that enjoys faster convergence \cite{ornhag2021bilinear}. 
This idea was further extended in \cite{xu2017unified,chen2021efficient}, where the multi-Schatten-$p$ norm was considered as a generalization of the bilinear parameterization. However, a significant challenge of this approach lies in the requirement of strong prior knowledge about the true rank of the matrix.

On the other hand, the rank function optimization approach introduces a regularization term to the loss, allowing the appropriate matrix rank to be automatically determined during training. Initial research attempts applied existing convex optimization techniques to the problem, such as the proximal gradient algorithm \cite{yao2018large} and the iteratively re-weighted algorithm \cite{mohan2012iterative}.
However, these methods require the loss function to be convex.
Alternating direction method of multipliers (ADMM)  is a popular method for optimizing the regularized loss. %, and has been successfully applied in various tasks \cite{gu2017weighted,li2015conformal,yang2016nuclear}. 
\citeauthor{shang2017bilinear} (\citeyear{shang2017bilinear}) proposed the double nuclear penalty, which covers the Schatten-$p$ norm with $p=1/2$ and $2/3$. %The method is based on ADMM, without supports for gradient propagation.
Recently, it is discovered that one of the optimization sub-procedures within ADMM can be seen as performing image denoising, enabling the integration of existing denoising neural networks into the ADMM framework \cite{HuJZ22,liu2023combining}. 
However, all these ADMM based methods do not support gradient propagation, so deep learning libraries and high-performing GPUs cannot be fully utilized.%, which hinder their practical applications.

There are a few works that consider SVD-free optimization of LRRs, which draw inspirations from the variational characterization of the nuclear norm, \ie, $\norm{\mX}_*=\min_{\mA \mB=\mX}\frac{1}{2}(\norm{\mA}_F^2+\norm{\mB}_F^2)$, where $\mX\in\R^{m\times n}$, $\mA\in\R^{m\times d}$, $\mB\in\R^{d\times n}$ and $d\geq rank(\mX)$ \cite{srebro2004maximum,rennie2005fast}.
Recent research further extends these methods to handle the Schatten-$p$ quasi-norm, \ie, $\norm{\mx}_p=(\sum_i |x_i|^p)^\frac{1}{p}$ with $p\in (0,1)$ \cite{shang2016scalable,fan2019factor,giampouras2020novel}.
\citeauthor{jia2020generalized} (\citeyear{jia2020generalized}) proposed GUIG for low-rank matrix recovery, and the associated bilinear variational problem can be solved without computing SVD.

To summarize, all these methods suffer at least one of the following drawbacks:
1) Their optimization is based on ADMM or its variant, which hinder gradient propagation.
2) They require the upper bound of the true rank $d$ as input. When $d$ is too large, the slow convergence brings huge computational burden, while a too small $d$ deteriorates the performance.
3) They can only optimize a restricted class of LRRs, particularly the nuclear norm and the Schatten-$p$ quasi-norm with $p\in (0,1)$. They cannot be applied to various recently proposed nonconvex LRRs like the $\gamma$-Nuclear norm \cite{kang2015robust} and Laplace \cite{trzasko2008highly}. %In contrast, our proposed method is quite general, applicable to a much broader types of LRRs.

\section{Methodology}

\subsection{Notations and Problem Statement}

\textbf{Notations} \quad In this paper, uppercase bold letters (\eg, $\mX$) denote matrices, and lowercase bold letters (\eg, $\mx$) denote vectors.
For a vector $\mx$, $\norm{\mx}_p=(\sum_i |x_i|^p)^\frac{1}{p}$ represents its $l_p$-norm. For a matrix $\mX$, $\sigma_i(\mX)$ is its $i$th largest singular value, or abbreviated as $\sigma_i$. We use $\mX\succeq 0$ to indicate matrix $\mX$ is positive semi-definite. $\norm{\mX}_p=(\sum_{i=1}^r \sigma_i(\mX)^p)^\frac{1}{p}$ is the Schatten-$p$ norm, where $r$ is the rank of $\mX$. Particularly, the nuclear norm is $\norm{\mX}_*=\norm{\mX}_1$, the spectral norm is $\norm{\mX}=\sigma_1(\mX)$, and the rank is equivalently represented as $\norm{\mX}_0$.
$\mX^\top$ and $\mX^\dag$ denote the transpose and the pseudo-inverse of the matrix respectively. $span(\mX)$ is the linear space spanned by the columns of $\mX$, and $P_\mX[\mmu]=\mX\mX^\dag\mmu$ denotes the projection of the vector $\mmu$ onto the column space $span(\mX)$.

\noindent\textbf{Problem statement} \quad For a given input-output pair $(\mX, \mY)$ where $\mX \in \R^{a\times b}$ and $\mY \in \R^{c \times d}$, consider the empirical loss w.r.t. a learning function $f$ parameterized by $\mtheta$, denoted as:
\begin{equation}
    \hL(\mX, \mY, \mtheta) = {l}(f(\mX;\mtheta),\mY) + \hR(\mS),  \label{eq:defopt}
\end{equation}
where $\mS=g(\mX, \mY, \mtheta) \in \R^{m \times n}.$ Here the matrix $\mS$ can depend on both the data and function parameters. 

The regularization term $\hR(\mS)$ enforces $\mS$ to be low-rank. Ideally, the definition could be directly applied, \ie, $\hR(\mS)=\lVert\mS\rVert_0$.
However, this regularization term is nondifferentiable, and the associated optimization problem is NP-hard in general.
To address this problem, in this work $\hR(\mS)$ will represent some form of approximation or relaxation of the rank.
For example, it is well-known that the nuclear norm (\ie, $\lVert\mS\rVert_*=\sum_{i=1}^r \sigma_i(\mS)$) is the convex envelope of the rank function \cite{wright2022high}, and is used as the surrogate function in many works.
However, the nuclear norm penalizes all singular values simultaneously, while in practical problems it is desired that large singular values are less impacted, so the important information of the matrix is preserved.
Motivated by this observation, many nonconvex relaxations of the rank function have been introduced \cite{kang2015robust,peng2015subspace,friedman2012fast,gao2011feasible}.
To encompass all these formulations, in this work the regularization term is considered to be of the form:
\begin{equation}
    \hR(\mS)=\sum_{i=1}^r h(\sigma_i(\mS)). \label{eq:defR}
\end{equation}
Here function $h$ generally increases the penalty of small singular values. In this paper, we refer to this form of regularization as the \textbf{generalized low-rank regularization (LRR)}.

The goal of this paper is to develop a method to approximately compute the generalized LRR $\hR(\mS)$ in Eq. (\ref{eq:defR}), which is a differentiable operation that allows the gradients to be computed and propagated.
Thus $\hR(\mS)$ can be conveniently used in a plug-and-play fashion, and the optimization problem in Eq. (\ref{eq:defopt}) can be conveniently solved by off-the-shelf optimization frameworks, and utilize high-performing GPUs for efficient parallel computation.

Several concrete applications are presented below:

\textbf{Matrix completion} \quad Suppose $\mS\in\R^{ m \times n}$ is a low-rank matrix, $\Omega\subset \{1,..,m\}\times\{1,...,n\}$ is an index set that denotes the observable entries, and $P_\Omega[\mS]_{ij}=\mathbf{1}_{\{(i,j)\in\Omega\}}\cdot \mS_{ij}$ represents the observation projection. The problem of matrix completion aims to recover $\mS$ based on partial observations, \ie, $\min_\mX \norm{P_\Omega[\mX]-P_\Omega[\mS]}_F^2 + \lambda \hR(\mX)$.

\textbf{Video fore-background separation} \quad A video sequence is represented as a 3D tensor $\mV\in\R^{a\times b \times t}$, where $a$, $b$ denote the width and height of each frame, and $t$ indexes time. Let $\mV'\in\R^{ab \times t}$ be a reshaped 2D matrix. 
In video fore-background separation it is assumed that the reshaped matrix can be decomposed as $\mV'=\mS+\mathbf{O}$, where $\mS$ is a low-rank matrix that represents the background, and $\mathbf{O}$ is a sparse matrix that represents the foreground object. So the problem can be solved by optimizing $\min_\mX \norm{\mV'-\mX}_1 + \lambda \hR(\mX).$

\textbf{DNN-based image denoising} \quad In image denoising an observed image $\mX$ is considered as a clean image $\mS$ corrupted by noise $\mathbf{N}$, \ie, $\mX=\mS+\mathbf{N}$. A DNN model learns a function $f$ to predict the noise by optimizing $\min_{\mtheta} E\left[\norm{f(\mX;\mtheta)-\mathbf{N}}_F^2\right]$.
Since the clean images are approximately low-rank,  it regularizes the loss function as
$\min_{\mtheta} E\left[\norm{f(\mX;\mtheta)-\mathbf{N}}_F^2 + \lambda \hR(f(\mX;\mtheta)-\mathbf{N})\right]$.
Note that most existing work has difficulty in optimizing such general formulations.

\subsection{Efficient Differentiable Low-rank Regularization}

\subsubsection{Iterative matrix pseudo-inverse and square root}

We first show that two fundamental operations, \ie, matrix pseudo-inverse and matrix square root, have differentiable approximations. These results serve as the cornerstones of the proposed method, as we later demonstrate that the general LRR can be constructed with these operations.
%Due to space limitation, all the proofs are deferred to the Appendix.

\begin{proposition}[Iterative matrix pseudo-inverse \cite{ben1966iterative}]
\label{prop:inv}
For a given matrix $\mS\in \R^{m \times n}$, define the recursive sequence $\mS_{i+1}=2 \mS_i-\mS_i \mS \mS_i$, with $\mS_0=\alpha\mS^\top$. Then $\lim_{i \to \infty} \mS_i=\mS^\dag$, provided $0<\alpha<2/\sigma_1^2(\mS)$.
\end{proposition}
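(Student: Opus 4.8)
The plan is to diagonalize the recursion through the singular value decomposition so that the matrix iteration decouples into a family of independent scalar recursions, each of which can be analyzed explicitly. Write $\mS = \mU \mSig \mV^\top$, where $\mU\in\R^{m\times m}$ and $\mV\in\R^{n\times n}$ are orthogonal and $\mSig\in\R^{m\times n}$ carries the singular values $\sigma_1\ge\dots\ge\sigma_r>0$ on its main diagonal and zeros elsewhere. Then $\mS^\dag = \mV \mSig^\dag \mU^\top$, where $\mSig^\dag$ has $1/\sigma_j$ on its diagonal for $j\le r$ and zeros otherwise. I would first claim, and prove by induction, that every iterate retains the form $\mS_i = \mV \mathbf{D}_i \mU^\top$ with $\mathbf{D}_i\in\R^{n\times m}$ diagonal. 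The base case holds because $\mS_0 = \alpha\mS^\top = \mV(\alpha\mSig^\top)\mU^\top$, so $\mathbf{D}_0 = \alpha\mSig^\top$. For the inductive step, orthogonality of $\mU$ and $\mV$ gives $\mS_i\mS\mS_i = \mV \mathbf{D}_i \mSig \mathbf{D}_i \mU^\top$, hence $\mathbf{D}_{i+1} = 2\mathbf{D}_i - \mathbf{D}_i \mSig \mathbf{D}_i$; since the product of rectangular diagonal matrices is again diagonal, $\mathbf{D}_{i+1}$ stays diagonal.

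Having reduced everything to the diagonal, I would track each diagonal entry separately. Writing $x_i$ for the $j$-th diagonal entry of $\mathbf{D}_i$, the recursion becomes the scalar map $x_{i+1} = 2x_i - \sigma_j x_i^2$ with $x_0 = \alpha\sigma_j$. The key algebraic observation is that the rescaled error $e_i \defineas 1 - \sigma_j x_i$ satisfies $e_{i+1} = 1 - \sigma_j(2x_i - \sigma_j x_i^2) = (1-\sigma_j x_i)^2 = e_i^2$. Iterating yields the closed form $e_i = e_0^{2^i}$ with $e_0 = 1 - \alpha\sigma_j^2$, which exhibits quadratic convergence whenever $|e_0|<1$.

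It then remains to check that the stated bound on $\alpha$ forces $|e_0|<1$ for every singular value simultaneously. The condition $|1-\alpha\sigma_j^2|<1$ is equivalent to $0<\alpha<2/\sigma_j^2$; because $\sigma_j\le\sigma_1$ for all $j\le r$, the single hypothesis $0<\alpha<2/\sigma_1^2$ guarantees $0<\alpha<2/\sigma_j^2$ for each nonzero singular value, so $e_i\to 0$ and $x_i\to 1/\sigma_j$. For indices with $\sigma_j=0$ the corresponding entry starts at $x_0=\alpha\sigma_j=0$ and the map fixes it at zero, matching the zero entries of $\mSig^\dag$. Collecting the limits gives $\mathbf{D}_i\to\mSig^\dag$, and therefore $\mS_i = \mV\mathbf{D}_i\mU^\top \to \mV\mSig^\dag\mU^\top = \mS^\dag$.

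I do not expect a single hard obstacle here, since the SVD change of basis turns the statement into elementary scalar dynamics; the points that most need care are (i) verifying that the diagonal structure of $\mathbf{D}_i$ is genuinely preserved under the rectangular matrix products, so that the decoupling into scalar recursions is legitimate, and (ii) pinning down why the convergence radius governed by the \emph{largest} singular value $\sigma_1$ is exactly what makes every mode contract at once. A minor technical check is the handling of the null singular directions, which I would dispatch by noting they are annihilated from the outset by $\mS_0=\alpha\mS^\top$.
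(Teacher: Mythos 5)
Your proposal is correct. The SVD change of basis is legitimate: writing $\mS=\mU\mSig\mV^\top$ and inducting on the form $\mS_i=\mV\mathbf{D}_i\mU^\top$ works because products of rectangular diagonal matrices remain diagonal, the error recursion $e_{i+1}=e_i^2$ with $e_0=1-\alpha\sigma_j^2$ is an exact algebraic identity, the hypothesis $0<\alpha<2/\sigma_1^2(\mS)$ forces $|e_0|<1$ simultaneously for every nonzero singular value, and the null modes start at zero and stay there, so $\mathbf{D}_i\to\mSig^\dag$ and $\mS_i\to\mS^\dag$. You should know, however, that the paper does not prove this proposition at all: it is imported from \cite{ben1966iterative}, and the only in-text justification is the heuristic remark that a fixed point of the iteration satisfies $\mS_i=\mS_i\mS\mS_i$. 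Your argument is therefore a genuine self-contained proof rather than a variant of the paper's, and it buys strictly more than the paper's heuristic in two ways. First, the fixed-point remark verifies only one of the four Penrose conditions (a matrix $\mX$ can satisfy $\mX=\mX\mS\mX$ without being $\mS^\dag$) and says nothing about whether the iteration converges in the first place; your decoupling both proves convergence and identifies the limit as the true Moore--Penrose inverse, since $\mV\mSig^\dag\mU^\top$ is it by construction. Second, your closed form $e_i=(1-\alpha\sigma_j^2)^{2^i}$ immediately gives the quantitative rate $\norm{\mS_k-\mS^\dag}=\max_{j\le r}\,|1-\alpha\sigma_j^2|^{2^k}/\sigma_j$, which is precisely the doubly exponential ($2^k$ in the exponent) decay that the paper later states separately, again only by citation, as its appendix theorem on convergence of the pseudo-inverse iteration; so your single argument subsumes both of the paper's cited facts.
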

Intuitively, when the iteration converges we have $\mS_i=\mS_{i+1}$. So $\mS_i=\mS_i \mS \mS_i$, which is exactly the definition of pseudo-inverse.
In practice, taking a large enough iteration step $N$, results in a satisfactory approximation of the matrix's pseudo inverse, \ie, $\mS_N \approx \mS^\dag$. Furthermore, since the iterative computation only involves matrix multiplication and subtraction, it is obviously a differentiable operation.

As a consequence, the projection operator $P_\mS[\mmu]$ also has a differentiable approximation. Recall that $P_\mS[\mmu]=\mS\mS^\dag\mmu$ denotes the projection of the vector $\mmu$ onto the column space of matrix $\mS$.
So $P_\mS[\mmu]\approx\mS\mS_N\mmu$, where $\mS_N$ is the approximation of the pseudo-inverse computed by the iterative method. Again the r.h.s. is obviously differentiable.

% For a positive semi-definite (PSD) matrix, a differentiable approximation of the matrix square root can be obtained by the power series method:
% \begin{proposition}[Power series method for matrix square root \cite{SongS022}]
% \label{prop:droot}
% For a PSD matrix $\mA\in \R^{m \times m}$, $\mA\succeq 0$, its matrix square root can be computed as
% $\mA^{\frac{1}{2}}=\norm{\mA}^{\frac{1}{2}}\left(\mI - \sum_{n=1}^\infty\left\lvert {1/2 \choose n} \right\rvert(\mI-\frac{\mA}{\norm{\mA}})^n\right)$.
% \end{proposition}
% In the following discussions we take $\mA=\mS\mS^\top$. Proposition (\ref{prop:droot}) presents a method to approximate its matrix square root with a finite power series, which is a differentiable operation.

The matrix square root can also be computed by an iterative method, which is called the Newton-Schulz iteration:
\begin{proposition}[NS iteration for matrix square root]
\label{prop:droot}
For $\mA\in \R^{m \times m}$, initialize $\boldsymbol{Y}_0=\frac{1}{\|\boldsymbol{A}\|_{\mathrm{F}}} \boldsymbol{A}, \boldsymbol{Z}_0=\mI$. The Newton-Schulz method defines the following iteration:
$$
\boldsymbol{Y}_{k+1}=\frac{1}{2} \boldsymbol{Y}_k\left(3 \boldsymbol{I}-\boldsymbol{Z}_k \boldsymbol{Y}_k\right), \boldsymbol{Z}_{k+1}=\frac{1}{2}\left(3 \boldsymbol{I}-\boldsymbol{Z}_k \boldsymbol{Y}_k\right) \boldsymbol{Z}_k.
$$
Then $\sqrt{\|\boldsymbol{A}\|_{\mathrm{F}}} \boldsymbol{Y}_k$ quadratically converges to $\boldsymbol{A}^{\frac{1}{2}}$, \ie, $\sqrt{\|\boldsymbol{A}\|_{\mathrm{F}}} \boldsymbol{Y}_k \rightarrow \boldsymbol{A}^{\frac{1}{2}}$.
\end{proposition}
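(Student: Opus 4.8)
The plan is to reduce the matrix recursion to a scalar one by exploiting commutativity, and then analyze the resulting scalar map. First I would normalize: set $\boldsymbol{B} = \boldsymbol{A}/\norm{\boldsymbol{A}}_{\mathrm F}$, so that $\boldsymbol{Y}_0 = \boldsymbol{B}$ and $\boldsymbol{Z}_0 = \boldsymbol{I}$, and note that proving $\boldsymbol{Y}_k \to \boldsymbol{B}^{1/2}$ is equivalent to the claim, since $\sqrt{\norm{\boldsymbol A}_{\mathrm F}}\,\boldsymbol{B}^{1/2} = \boldsymbol{A}^{1/2}$. An easy induction shows that every $\boldsymbol{Y}_k$ and $\boldsymbol{Z}_k$ is a polynomial in $\boldsymbol{B}$ (each update multiplies by $\tfrac12(3\boldsymbol I - \boldsymbol Z_k\boldsymbol Y_k)$, itself a polynomial in $\boldsymbol B$); hence all iterates commute with one another and with $\boldsymbol B$. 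I would take $\boldsymbol A$ to be symmetric positive definite, which is the setting in which the iteration is applied here, as $\boldsymbol A$ arises as a Gram matrix $\boldsymbol S^\top\boldsymbol S$; this guarantees that $\boldsymbol B$ is orthogonally diagonalizable.

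Next I would extract two algebraic invariants. Writing $\boldsymbol P_k = \boldsymbol Z_k \boldsymbol Y_k$ and using commutativity, the two coupled recursions collapse to a single decoupled recursion for the product,
\begin{equation*}
\boldsymbol P_{k+1} = \tfrac14\,\boldsymbol P_k\,(3\boldsymbol I - \boldsymbol P_k)^2,
\end{equation*}
while the ``ratio'' $\boldsymbol Y_k = \boldsymbol B\,\boldsymbol Z_k$ is preserved for all $k$ (both updates share the common left factor $\tfrac12(3\boldsymbol I - \boldsymbol P_k)$, which commutes with $\boldsymbol B$). Combining the two yields $\boldsymbol Y_k^2 = \boldsymbol B\,\boldsymbol P_k$, so the whole problem reduces to showing $\boldsymbol P_k \to \boldsymbol I$.

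I would then diagonalize $\boldsymbol B = \boldsymbol Q \boldsymbol\Lambda \boldsymbol Q^\top$ and study the recursion eigenvalue by eigenvalue. On each eigenvalue $\mu \in (0,1]$ of $\boldsymbol B$ — the normalization by $\norm{\boldsymbol A}_{\mathrm F} \ge \lambda_{\max}$ is precisely what forces $\mu \le 1$ — the scalar map is $p_{k+1} = \tfrac14 p_k (3-p_k)^2$ with $p_0 = \mu$. Setting $e_k = 1 - p_k$ gives $e_{k+1} = \tfrac14 e_k^2 (3 + e_k)$. Since $e_0 = 1 - \mu \in [0,1)$, an induction shows $e_k$ stays in $[0,1)$ and satisfies $e_{k+1} \le e_k^2$, so $p_k \to 1$ quadratically; as there are finitely many eigenvalues, $\boldsymbol P_k \to \boldsymbol I$ at the same rate. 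Finally, from $\boldsymbol Y_k^2 = \boldsymbol B \boldsymbol P_k \to \boldsymbol B$ together with positivity of the eigenvalues of $\boldsymbol Y_k$ (guaranteed because $y_0 = \mu > 0$ and each factor $3 - p_k \ge 2$ is positive, so we remain on the principal branch), I conclude $\boldsymbol Y_k \to \boldsymbol B^{1/2}$, and the quadratic rate transfers through the identity $\boldsymbol Y_k^2 - \boldsymbol B = \boldsymbol B(\boldsymbol P_k - \boldsymbol I)$.

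The main obstacle is the structural reduction in the first two paragraphs rather than the scalar estimate: one must verify that the matrix recursion genuinely decouples, which hinges on all iterates being polynomials in $\boldsymbol B$ and hence simultaneously diagonalizable. This is clean for symmetric positive definite $\boldsymbol A$ but would require Jordan-form bookkeeping for a general matrix, so I would state the positive-definite hypothesis explicitly. A secondary point to check carefully is that the scalar iterates remain in the basin of attraction $e_k \in [0,1)$ for every eigenvalue, since this — and the correct sign of the limiting square root — both rely on the Frobenius normalization built into the initialization.
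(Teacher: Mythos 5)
Your proof is correct under the symmetric positive-definite hypothesis you impose, and it is worth noting that it is more self-contained than what the paper provides: the paper offers no proof of this proposition at all, treating the Newton--Schulz iteration as a known result and, in its appendix, merely citing Higham's \emph{Functions of Matrices} for the corresponding error bound. Your route --- showing by induction that all iterates are polynomials in $\mathbf{B}=\mathbf{A}/\|\mathbf{A}\|_{\mathrm{F}}$ and hence commute, decoupling the product $\mathbf{P}_k=\mathbf{Z}_k\mathbf{Y}_k$ into the recursion $\mathbf{P}_{k+1}=\tfrac14\mathbf{P}_k(3\mathbf{I}-\mathbf{P}_k)^2$, and recovering $\mathbf{Y}_k$ from the invariants $\mathbf{Y}_k=\mathbf{B}\mathbf{Z}_k$ and $\mathbf{Y}_k^2=\mathbf{B}\mathbf{P}_k$, then reading everything off eigenvalue-by-eigenvalue via $e_{k+1}=\tfrac14 e_k^2(3+e_k)\le e_k^2$ --- is the standard analysis of this iteration, and each algebraic identity checks out. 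What your argument buys over the paper's citation is an explicit, elementary verification that the Frobenius normalization places every eigenvalue in the basin of attraction and that the positive branch of the square root is selected; what the citation buys is coverage of non-normal matrices, which your commutativity-plus-diagonalization reduction deliberately avoids.

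Two points should be tightened. First, your justification of positive definiteness ("$\mathbf{A}$ arises as a Gram matrix") only yields positive \emph{semi}-definiteness, and in this paper the relevant matrices $\mathbf{S}\mathbf{S}^\top$ are intended to be (near) singular, since $\mathbf{S}$ is low-rank; a zero eigenvalue gives $e_0=1$, which sits outside your claimed basin $[0,1)$. The fix is immediate --- on the kernel the scalar iterate satisfies $y_k=0$ for all $k$, which already equals the corresponding eigenvalue of $\mathbf{B}^{1/2}$, so convergence there is exact --- but it should be stated, otherwise your hypothesis excludes the paper's actual use case. Second, your instinct to add an explicit hypothesis is sound rather than a limitation: as stated for arbitrary $\mathbf{A}\in\mathbb{R}^{m\times m}$ the proposition is false (e.g.\ $\mathbf{A}=-\mathbf{I}$ has no real square root), so some restriction of this kind is necessary for the statement to hold at all.
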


We highlight the following points of Proposition \ref{prop:inv} and \ref{prop:droot}:
1) Both are \textbf{differentiable} operations, which allow gradient-based methods for the associated optimization; 2) Both are \textbf{parallelizable} operations, which are GPU-friendly and thus efficient for large-scale datasets.
Our proposed method inherits these advantages, which results in differentiable and parallelizable approximation of the generalized LRR.

\subsubsection{Differentiable rank approximation}

The rank of a matrix is defined as the dimension of its column space (or equivalently the row space). The most well-known method for computing the rank is to first apply SVD, and then count the number of nonzero singular values. However, the SVD step is nondifferentiable. Interestingly, the following proposition presents an alternative approach for computing ranks without using SVD.
Due to space limitation, all the proofs are deferred to the Appendix.

\begin{proposition}[Equivalent definition of matrix rank \cite{wright2022high}]
\label{prop:drank}
The rank of a matrix $\mS$ can be equivalently computed as
the average squared length of a random Gaussian vector   $\mg \sim \mN(\mathbf{0},\mI)$)
projected onto its column space:
\begin{align}
    \norm{\mS}_0=rank(\mS)=E\left[\norm{P_\mS [\mg ]}_2^2\right].
\end{align}
\end{proposition}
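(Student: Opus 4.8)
The plan is to exploit the fact that the projection operator $P_\mS$ is represented by an orthogonal projection matrix onto $span(\mS)$, and then evaluate the expected squared length via a trace computation. First I would set $\mathbf{P}=\mS\mS^\dag$, so that $P_\mS[\mg]=\mathbf{P}\mg$. Using the defining identities of the Moore--Penrose pseudo-inverse, I would verify that $\mathbf{P}$ is symmetric ($\mathbf{P}^\top=\mathbf{P}$) and idempotent ($\mathbf{P}^2=\mathbf{P}$), i.e., that $\mathbf{P}$ is precisely the orthogonal projector onto the column space of $\mS$. Consequently its rank equals $rank(\mS)=r$, and since an orthogonal projector is diagonalizable with all eigenvalues in $\{0,1\}$, its trace equals the number of unit eigenvalues, namely $\mathrm{trace}(\mathbf{P})=r$.

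Next I would rewrite the squared norm as a quadratic form, $\norm{P_\mS[\mg]}_2^2=\mg^\top\mathbf{P}^\top\mathbf{P}\mg=\mg^\top\mathbf{P}\mg$, where the last equality uses symmetry and idempotence. Taking expectations and applying the trace identity $\mg^\top\mathbf{P}\mg=\mathrm{trace}(\mathbf{P}\mg\mg^\top)$ together with linearity gives $E\!\left[\norm{P_\mS[\mg]}_2^2\right]=\mathrm{trace}\!\left(\mathbf{P}\,E[\mg\mg^\top]\right)$. Since $\mg\sim\mN(\mathbf{0},\mI)$ has covariance $E[\mg\mg^\top]=\mI$, this collapses to $\mathrm{trace}(\mathbf{P})=r=rank(\mS)$, which is the claimed identity.

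An equivalent and perhaps more transparent route picks an orthonormal basis $\{\mmu_1,\dots,\mmu_r\}$ of $span(\mS)$ and writes $P_\mS[\mg]=\sum_{i=1}^r(\mmu_i^\top\mg)\,\mmu_i$, so that $\norm{P_\mS[\mg]}_2^2=\sum_{i=1}^r(\mmu_i^\top\mg)^2$ by orthonormality. Each scalar $\mmu_i^\top\mg$ is a univariate Gaussian of variance $\norm{\mmu_i}_2^2=1$, hence $E[(\mmu_i^\top\mg)^2]=1$, and summing the $r$ terms again yields $r$.

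The computation itself is routine; the only step demanding genuine care is the first one, namely justifying that $\mathbf{P}=\mS\mS^\dag$ is the symmetric idempotent orthogonal projector with $\mathrm{trace}(\mathbf{P})=rank(\mS)$. This rests on the pseudo-inverse properties $\mS\mS^\dag\mS=\mS$ and the symmetry of $\mS\mS^\dag$, which I expect to be the main (though standard) obstacle; everything downstream is elementary linear algebra together with the second moment of a standard Gaussian.
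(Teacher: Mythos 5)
Your proof is correct, but your primary route is genuinely different from the paper's. The paper argues via the compact SVD: writing $\mS=\mU\mSig\mV^\top$ and $\mS^\dag=\mV\mSig^{-1}\mU^\top$, it reduces $E\left[\norm{P_\mS[\mg]}_2^2\right]$ to $E\left[\norm{\mU^\top\mg}_2^2\right]=\sum_{i=1}^r E\left[(\mmu_i^\top\mg)^2\right]$ and checks that each term equals $1$ --- which is precisely your \emph{second}, alternative argument (the paper's orthonormal basis is just the columns of $\mU$). Your main argument --- verifying from the Moore--Penrose identities that $\mathbf{P}=\mS\mS^\dag$ is a symmetric idempotent of rank $r$, then computing $E[\mg^\top\mathbf{P}\mg]=\mathrm{trace}\left(\mathbf{P}\,E[\mg\mg^\top]\right)=\mathrm{trace}(\mathbf{P})=r$ --- is coordinate-free and avoids the SVD altogether. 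What your route buys: it isolates the only property of $\mg$ actually used, namely $E[\mg\mg^\top]=\mI$ (Gaussianity is irrelevant), and it extends immediately to anisotropic covariances, giving $\mathrm{trace}(\mathbf{P}\mSig)$ for $\mg\sim\mN(\mathbf{0},\mSig)$. What the paper's route buys: the explicit SVD bookkeeping ($\mS\mS^\dag=\mU\mU^\top$ and the statistics of $\mU^\top\mg$) is reused essentially verbatim in the proofs of Proposition 4 and Theorem 1, where the weights $(\mS\mS^\top)^{p/2}$ turn the projector into $\mU\mSig^p\mU^\top$ and the SVD becomes genuinely necessary, so the paper's choice provides a single uniform template across all three results, whereas your trace argument would need to be supplemented by the SVD at that later stage anyway.
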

% \begin{proof}
% Suppose the rank of matrix $\mS\in \R^{m \times n}$ is $rank(\mS)=r$. By applying (compact) SVD, $\mS =\mU \mSig \mV^\top$, with $\mU \in \R^{m \times r}, \mSig \in \R^{r \times r}, \mV \in \R^{n \times r}$. Also recall that the pseudo-inverse of matrix $\mS$ satisfies $\mS ^\dag=\mV \mSig^{-1} \mU^\top$. Thus,
% \begin{align*}
% E\left[\norm{P_\mS \left[\mg \right]}_2^2\right]
% =E\left[(\mS \mS ^\dag\mg )^\top(\mS \mS ^\dag\mg )\right]
% =E\left[\norm{\mU^\top\mg }_2^2\right]
% =E\left[(\mmu _1^\top\mg )^2\right]+...+E\left[({\mmu _r^\top\mg })^2\right].
% \end{align*}
% For any $\mmu\in\{\mmu_1,...,\mmu_r\}$, 
% $
% E\left[(\mmu ^\top\mg )^2\right]=\sum_{i,j=1}^m u_i u_j E\left[g_i g_j\right]=\sum_{i=1}^m u_i^2 E\left[g_i^2\right]=1.
% $
% So $\norm{\mS}_0=r=E\left[\norm{P_\mS [\mg ]}_2^2\right]$.
% \end{proof}
To apply the result for practical rank computation, first sample $N$ independent random Gaussian vectors $\mg_1, ..., \mg_N \sim \mN(\mathbf{0},\mI)$. Then the sample average is used to approximate the rank, \ie, $\norm{\mS}_0=E\left[\norm{P_\mS [\mg ]}_2^2\right]\approx\frac{1}{N}\sum_{i=1}^N \norm{P_\mS [\mg_i ]}_2^2$. Since it has been shown that the projection operator $P_\mS [\mg ]$ has a differentiable approximation, by substituting the routine we obtain a differentiable method for calculating the matrix rank.

However, rank is a piecewise constant function. Though it is now differentiable, it cannot provide useful gradient information for the overall optimization problem. To address this challenge, in the following we consider several relaxations of matrix rank, the approximations of which are not only differentiable, but also provide informatic gradients.

\subsubsection{Differentiable nuclear norm approximation}

The nuclear norm is the convex envelope of rank, and is the most popular relaxation. The common approach for computing the nuclear norm is to first apply SVD, and then sum up the singular values. The following proposition presents an alternative method, similar to the case of rank calculation.

\begin{proposition}
\label{prop:dnuclear}
The nuclear norm of a matrix $\mS$ can be equivalently computed as:
\begin{align}
\norm{\mS}_*=E\left[\langle{P_\mS \left[\mg \right],(\mS\mS^\top)^\frac{1}{2}\mg }\rangle\right],
\end{align}
where $\mg \sim \mN(\mathbf{0},\mI)$ is a random Gaussian vector.
\end{proposition}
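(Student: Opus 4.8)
The plan is to diagonalize everything through the singular value decomposition and then evaluate the Gaussian expectation coordinate by coordinate. First I would write the (reduced) SVD $\mS=\mU\mSig\mV^\top$, where $\mU=[\mmu_1,\dots,\mmu_r]$ has orthonormal columns (the left singular vectors), $\mV$ has orthonormal columns, and $\mSig=\mathrm{diag}(\sigma_1,\dots,\sigma_r)$ collects the nonzero singular values. From this I would record the two explicit spectral forms that drive the whole argument: the positive-semidefinite square root $(\mS\mS^\top)^{\frac12}=\mU\mSig\mU^\top=\sum_{i=1}^r\sigma_i\mmu_i\mmu_i^\top$, obtained from $\mS\mS^\top=\mU\mSig^2\mU^\top$, and the orthogonal projector $P_\mS=\mS\mS^\dag=\mU\mU^\top=\sum_{i=1}^r\mmu_i\mmu_i^\top$ onto the column space of $\mS$.

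Next I would simplify the integrand $\langle P_\mS[\mg],(\mS\mS^\top)^{\frac12}\mg\rangle=\mg^\top P_\mS^\top(\mS\mS^\top)^{\frac12}\mg$. Since $P_\mS$ is symmetric and idempotent with range equal to the column space of $\mS$, and the range of $(\mS\mS^\top)^{\frac12}$ is \emph{also} that same column space, multiplying the two spectral sums and invoking orthonormality $\mmu_j^\top\mmu_i=\delta_{ij}$ yields the key identity $P_\mS(\mS\mS^\top)^{\frac12}=(\mS\mS^\top)^{\frac12}$. Using $P_\mS^\top=P_\mS$, the inner product therefore collapses to the quadratic form $\langle P_\mS[\mg],(\mS\mS^\top)^{\frac12}\mg\rangle=\mg^\top(\mS\mS^\top)^{\frac12}\mg=\sum_{i=1}^r\sigma_i(\mmu_i^\top\mg)^2$.

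Finally I would take the expectation over $\mg\sim\mN(\mathbf{0},\mI)$, pulling it inside the finite sum by linearity. Because each $\mmu_i$ is a unit vector, the scalar $\mmu_i^\top\mg$ is a univariate $\mN(0,1)$ random variable, so $E[(\mmu_i^\top\mg)^2]=1$, and the sum telescopes to $\sum_{i=1}^r\sigma_i=\norm{\mS}_*$, as claimed. The only delicate point, and the step I would be most careful with, is the identity $P_\mS(\mS\mS^\top)^{\frac12}=(\mS\mS^\top)^{\frac12}$ together with the bookkeeping of zero singular values: one must ensure that the square root is the positive-semidefinite root built from exactly the left singular vectors spanning $span(\mS)$, so that it shares its range with the projector and nothing leaks into the kernel. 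Once the two ranges are matched the remaining computation is routine.
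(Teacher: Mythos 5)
Your proof is correct and follows essentially the same route as the paper's: both diagonalize via the compact SVD, identify $P_\mS=\mU\mU^\top$ and $(\mS\mS^\top)^{\frac12}=\mU\mSig\mU^\top$, collapse the quadratic form to $\sum_{i=1}^r\sigma_i(\mmu_i^\top\mg)^2$, and finish with $E[(\mmu_i^\top\mg)^2]=1$. Your explicit treatment of the identity $P_\mS(\mS\mS^\top)^{\frac12}=(\mS\mS^\top)^{\frac12}$ and the range-matching of the PSD root is merely a more careful spelling-out of a step the paper performs implicitly.
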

% \begin{proof}
% The proof is similar to that of Proposition \ref{prop:drank}. Keeping the same notations, it follows that
% \begin{align*}
% E\left[\langle{P_\mS \left[\mg \right],(\mS\mS^\top)^\frac{1}{2}\mg}\rangle\right]
% &=E\left[\mg ^\top\mS \mS ^\dag(\mS\mS^\top)^\frac{1}{2}\mg \right]
% =E\left[\norm{\mSig^\frac{1}{2} \mU^\top\mg }_2^2\right]\\
% &=\sigma_1 E\left[(\mmu _1^\top\mg )^2\right]+...+\sigma_r E\left[({\mmu _d^\top\mg })^2\right]=\sum_{i=1}^r \sigma_i=\norm{\mS}_*. \qedhere
% \end{align*}
% \end{proof}
Note that besides the projection operator $P_\mS \left[\mg \right]$, this proposition further requires calculating the matrix square root $(\mS\mS^\top)^\frac{1}{2}$, which can be obtained with Proposition \ref{prop:droot}. Finally, by replacing the expectation with the sample mean, the nuclear norm can be approximately computed using differentiable operators. Furthermore, the approximated nuclear norm can provide useful gradient information for optimization.

\subsubsection{Differentiable approximation of the generalized low-rank regularization} 

In this subsection, we consider general LRR of the form $\hR(\mS)=\sum_{i=1}^r h(\sigma_i(\mS))$, where $r$ denotes the rank of $\mS$.
The function $h$ increases the penalty of the small singular values, so that the principle information of the matrix is preserved while reducing the rank. %However, function $h$ generally results in a nonconvex relaxation, which makes the optimization more challenging.

First, we introduce a lemma that allows any Schatten-$p$ norm to be stochastically computed, which can be viewed as an extension of Proposition \ref{prop:drank} and \ref{prop:dnuclear}.
\begin{theorem}
\label{the:pnorm}
For a matrix $\mS$, its Schatten-$p$ norm, defined as $\norm{\mS}_p=(\sum_{i=1}^r \sigma_i(\mS)^p)^\frac{1}{p}$ where $r$ is the rank of $\mS$, can be alternatively computed as
\begin{align}
\norm{\mS}_p^p=\sum_{i=1}^r \sigma_i^p=E\left[\langle{P_\mS \left[\mg \right],(\mS\mS^\top)^\frac{p}{2}\mg }\rangle\right],
\end{align}
where $p\in\mathbb{N}^+$ and $\mg \sim \mN(\mathbf{0},\mI)$.
\end{theorem}

Next we show that a broad range of relaxations of the rank function can be stochastically computed.
Particularly, we introduce two methods to utilize the above lemma, which are based on the Taylor expansion and the Laguerre expansion.
%Since the computation only involves differentiable operators, a differentiable approximation will be readily available.

\noindent\textbf{Taylor Expansion-based Generalized LRR}
\begin{theorem}
\label{thm:taylor}
Let $\mS$ be a matrix of rank $r$, and $h:\mathbb{R}\to\mathbb{R}$ be a sufficiently smooth function and $\mg \sim \mN(\mathbf{0},\mI)$. Then the generalized LRR defined in Eq. (\ref{eq:defR}) can be computed as
\begin{align}
\sum_{i=1}^r h(\sigma_i(\mS))=\sum_{p=0}^\infty \frac{h^{(p)}(0)}{p !}E\left[\langle{P_\mS \left[\mg \right],(\mS\mS^\top)^\frac{p}{2}\mg }\rangle\right].
\end{align}
\end{theorem}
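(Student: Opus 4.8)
The plan is to reduce Theorem~\ref{thm:taylor} to the power-sum identity of Theorem~\ref{the:pnorm} by expanding $h$ in its Taylor series about the origin. Since $h$ is sufficiently smooth, I would write, for each singular value $\sigma_i = \sigma_i(\mS)$,
\begin{align}
h(\sigma_i) = \sum_{p=0}^\infty \frac{h^{(p)}(0)}{p!}\,\sigma_i^p,
\end{align}
which holds provided this series converges to $h$ at $\sigma_i$. Summing over $i=1,\dots,r$ and interchanging the two summations gives
\begin{align}
\sum_{i=1}^r h(\sigma_i) = \sum_{p=0}^\infty \frac{h^{(p)}(0)}{p!}\sum_{i=1}^r \sigma_i^p .
\end{align}
The interchange is harmless here: the sum over $i$ is \emph{finite}, so it commutes term by term with each convergent series over $p$, and no Fubini-type argument for the expectation is needed. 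The remaining task is purely to identify the inner power sum $\sum_{i=1}^r \sigma_i^p$.

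For every $p\geq 1$, Theorem~\ref{the:pnorm} gives the exact (deterministic) equality $\sum_{i=1}^r \sigma_i^p = \norm{\mS}_p^p = E\!\left[\langle P_\mS[\mg],(\mS\mS^\top)^{p/2}\mg\rangle\right]$, so substituting this into the series above reproduces the $p\geq 1$ terms of the claimed formula. I would handle $p=0$ separately, since Theorem~\ref{the:pnorm} is stated only for $p\in\mathbb{N}^+$. In that case $(\mS\mS^\top)^0=\mI$, so the candidate term is $h(0)\,E[\langle P_\mS[\mg],\mg\rangle]$. Using that $P_\mS=\mS\mS^\dag$ is the orthogonal projector onto $span(\mS)$, hence symmetric and idempotent, I would rewrite $\langle P_\mS[\mg],\mg\rangle = \langle P_\mS[\mg],P_\mS[\mg]\rangle = \norm{P_\mS[\mg]}_2^2$, whence Proposition~\ref{prop:drank} yields $E[\norm{P_\mS[\mg]}_2^2]=rank(\mS)=r=\sum_{i=1}^r \sigma_i^0$. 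Thus the $p=0$ term matches the $h(0)\cdot r$ contribution of the left-hand side, and the two cases together establish the identity termwise.

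The main obstacle is not any of the above bookkeeping but making precise the hypothesis ``sufficiently smooth.'' What the argument genuinely requires is that the Taylor series of $h$ about $0$ converge to $h$ at every point of the spectrum, i.e.\ on $[0,\sigma_1(\mS)]$. This is guaranteed exactly when $h$ is real-analytic with Taylor radius of convergence exceeding $\norm{\mS}=\sigma_1(\mS)$; for a merely smooth $h$ the series need not represent $h$ at all. I would therefore sharpen the statement to require analyticity on a neighborhood of $[0,\sigma_1(\mS)]$ (equivalently, radius of convergence larger than the spectral norm). This is a real restriction rather than a formality, which is presumably what motivates the complementary Laguerre-expansion formulation introduced alongside this result; subject to it, the proof is a direct termwise substitution of Theorem~\ref{the:pnorm} together with the $p=0$ reduction to Proposition~\ref{prop:drank}.
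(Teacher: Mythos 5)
Your proof follows essentially the same route as the paper's: Taylor-expand $h$ about $0$, swap the finite sum over $i$ with the series over $p$, and substitute Theorem~\ref{the:pnorm} termwise. The paper's version is terser---it neither separates the $p=0$ term nor spells out what ``sufficiently smooth'' must mean---so your two refinements (reducing the $p=0$ term to Proposition~\ref{prop:drank}, and requiring $h$ to be real-analytic with Taylor radius of convergence exceeding $\sigma_1(\mS)$ so the series actually represents $h$ on the spectrum) are added rigor on top of the same argument, not a different one.
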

% \begin{proof}
% The result can be easily proved by using Lemma \ref{lemma:pnorm} and Taylor expansion.
% \begin{equation*}
% \sum_{i=1}^r h(\sigma_i(\mS))=\sum_{i=1}^r \sum_{p=0}^\infty \frac{h^{(p)}(0)}{p !}\sigma_i^p=\sum_{p=0}^\infty \frac{h^{(p)}(0)}{p !}E\left[\langle{P_\mS \left[\mg \right],(\mS\mS^\top)^\frac{p}{2}\mg }\rangle\right]. \qedhere
% \end{equation*}
% \end{proof}

However, Taylor expansion approximates the target function based on a fixed initial point, and the truncated error grows when the evaluation location moves away from the initial point. This motivates the application of advanced approximation techniques.

\begin{figure*}[!htb]
	\centering
	\subfigure[ground truth]{
		\begin{minipage}[t]{0.2\linewidth}
			\centering
			\includegraphics[width=1.1in]{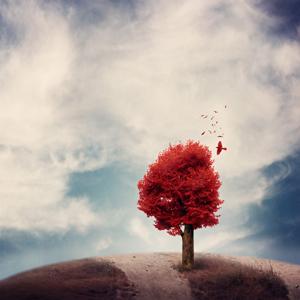}\\
		\end{minipage}%
	}%
	\subfigure[observed image]{
		\begin{minipage}[t]{0.2\linewidth}
			\centering
			\includegraphics[width=1.1in]{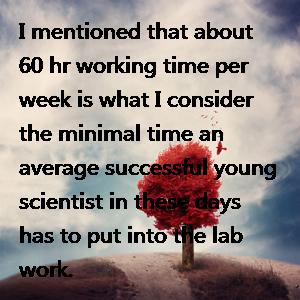}\\
		\end{minipage}%
	}%
	\subfigure[\parbox{2.2cm}{\centering RPCA (FGSR)}]{
		\begin{minipage}[t]{0.2\linewidth}
			\centering
			\includegraphics[width=1.1in]{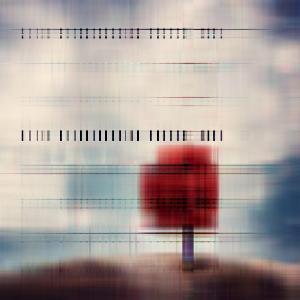}\\
		\end{minipage}%
	}%
	\subfigure[\parbox{2.4cm}{\centering fast-MDT-Tucker}]{
		\begin{minipage}[t]{0.2\linewidth}
			\centering
			\includegraphics[width=1.1in]{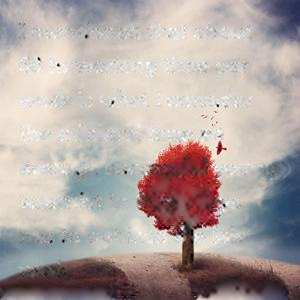}\\
		\end{minipage}%
	}%
        \subfigure[\parbox{2cm}{\centering MSS}]{
		\begin{minipage}[t]{0.2\linewidth}
			\centering
			\includegraphics[width=1.1in]{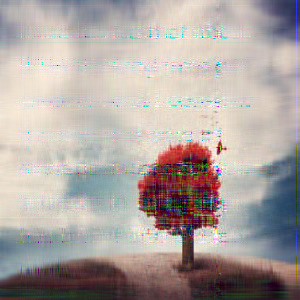}\\
		\end{minipage}%
	}%
 
        \subfigure[\parbox{2cm}{\centering IRNN}]{
		\begin{minipage}[t]{0.2\linewidth}
			\centering
			\includegraphics[width=1.1in]{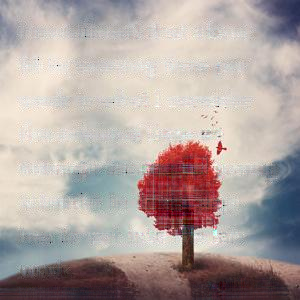}\\
		\end{minipage}%
	}%
        \subfigure[\parbox{2cm}{\centering TNN-3DTV }]{
		\begin{minipage}[t]{0.2\linewidth}
			\centering
			\includegraphics[width=1.1in]{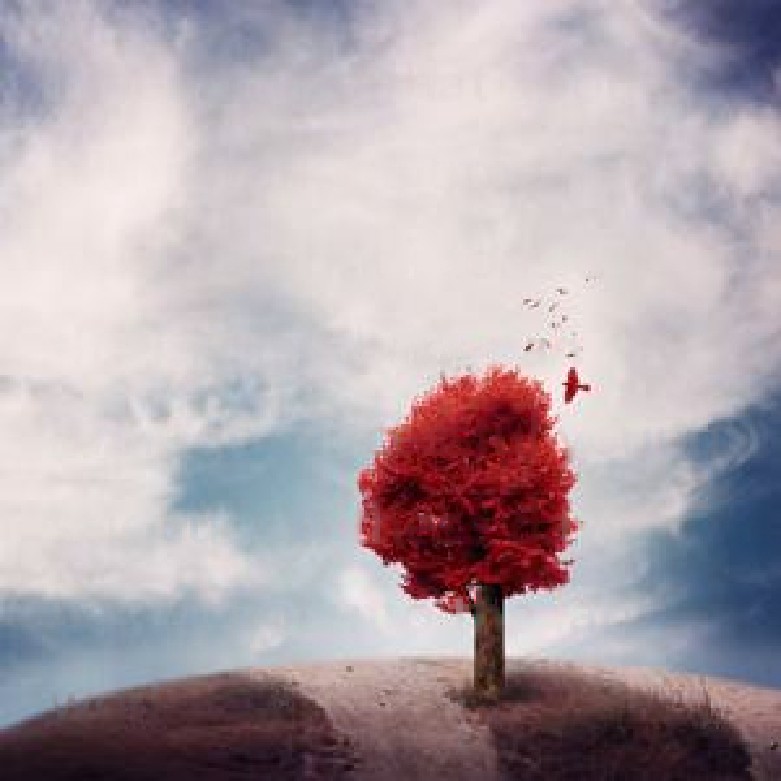}\\
		\end{minipage}%
	}%
	\subfigure[\parbox{2cm}{\centering DLRL}]{
		\begin{minipage}[t]{0.2\linewidth}
			\centering
			\includegraphics[width=1.1in]{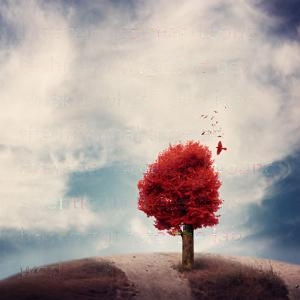}\\
		\end{minipage}%
	}%
        \subfigure[\parbox{2cm}{\centering ours-nuclear}]{
		\begin{minipage}[t]{0.2\linewidth}
			\centering
			\includegraphics[width=1.1in]{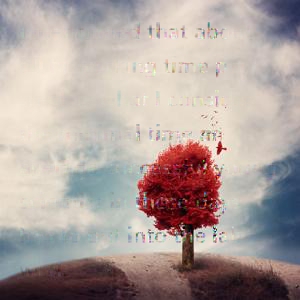}\\
		\end{minipage}%
	}%
        \subfigure[\parbox{2cm}{\centering ours-L-Lap} ]{
		\begin{minipage}[t]{0.2\linewidth}
			\centering
			\includegraphics[width=1.1in]{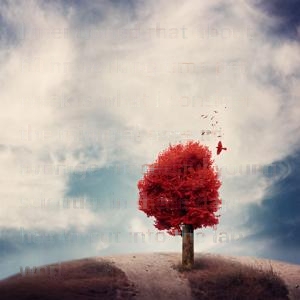}\\
		\end{minipage}%
	}%
	\centering
	\caption{Comparison of matrix completion for text removal. (a) ground truth; (b) image with text; (c)-(j) recovered images.}
	\vspace{-0.2cm}
	\label{fig:textrec0}
\end{figure*}

\noindent\textbf{Laguerre Expansion-based Generalized LRR}

\noindent Orthogonal polynomial approximation is an important family of function approximation techniques, including the well-known Legendre expansion and Laguerre expansion. In our problem the Laguerre expansion is particularly suitable, since it approximates functions with range $(0,+\infty)$.

The Laguerre expansion is based on the Laguerre polynomials $\mathfrak{L}=\{L_1(x), L_2(x), ...\}$, which is a countable infinite set of mutually orthogonal polynomials, each .denoted as $L_k(x)=\sum_p a_{k,p} x^p$.
For a target function $f(x)$ with range $(0,+\infty)$, the method decomposes it into $f(x)=\sum_{k \geq 0} c_k L_k(x)$, where $c_k$ are the coefficients computed as $c_k=\int_0^{\infty} L_k(x) e^{-x} f(x) \mathrm{d} x$.

\begin{theorem}
\label{thm:laguerre}
Keeping the same notations, we have
\begin{align*}
\sum_{i=1}^r h(\sigma_i(\mS))=\sum_{k \geq 0} \sum_p c_k a_{k,p} E\left[\langle{P_\mS \left[\mg \right],(\mS\mS^\top)^\frac{p}{2}\mg }\rangle\right].
\end{align*}
Here $h(x)=\sum_{k \geq 0} c_k L_k(x)$ utilizes the Laguerre expansion, $a_{k,p}$ are the polynomial coefficients.
\end{theorem}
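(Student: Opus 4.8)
The plan is to reduce Theorem~\ref{thm:laguerre} to Theorem~\ref{the:pnorm} by unfolding the two layers of expansion that define the Laguerre approximation---first the series $h = \sum_{k\ge 0} c_k L_k$, then the monomial expansion $L_k(x) = \sum_p a_{k,p} x^p$---and then applying the stochastic representation of each Schatten-$p$ term monomial by monomial. First I would substitute the Laguerre expansion of $h$ and interchange it with the finite sum over the $r$ singular values, writing $\sum_{i=1}^r h(\sigma_i(\mS)) = \sum_{k\ge 0} c_k \sum_{i=1}^r L_k(\sigma_i)$. Next, since each $L_k$ is a polynomial of finite degree, I would expand $\sum_{i=1}^r L_k(\sigma_i) = \sum_p a_{k,p} \sum_{i=1}^r \sigma_i^p$, where the inner quantity $\sum_{i=1}^r \sigma_i^p$ is exactly $\norm{\mS}_p^p$ for $p\ge 1$ and equals $rank(\mS)$ for $p=0$.

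The crucial step then applies Theorem~\ref{the:pnorm} to replace each $\sum_{i=1}^r \sigma_i^p$ by $E\bigl[\langle P_\mS[\mg], (\mS\mS^\top)^{p/2}\mg\rangle\bigr]$, with the constant ($p=0$) term handled separately by Proposition~\ref{prop:drank}: because $P_\mS$ is an orthogonal projection one has $\langle P_\mS[\mg],\mg\rangle = \norm{P_\mS[\mg]}_2^2$, whose expectation is $rank(\mS) = \sum_{i=1}^r \sigma_i^0$. Substituting back and collecting the scalar coefficients $c_k a_{k,p}$ yields precisely the claimed identity $\sum_{i=1}^r h(\sigma_i(\mS)) = \sum_{k\ge 0}\sum_p c_k a_{k,p}\, E\bigl[\langle P_\mS[\mg], (\mS\mS^\top)^{p/2}\mg\rangle\bigr]$.

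The main obstacle I anticipate is justifying the interchange of the infinite summation over $k$ with the expectation. The inner sum over $p$ is finite for each $k$ and poses no difficulty, so the delicacy lies entirely in the infinite series over the Laguerre index. I would control it by a domination argument built on the bound $\lvert\langle P_\mS[\mg], (\mS\mS^\top)^{p/2}\mg\rangle\rvert \le \sigma_1^p\,\norm{P_\mS[\mg]}_2^2$, which holds because $(\mS\mS^\top)^{p/2}$ vanishes on the orthogonal complement of $span(\mS)$ and has largest eigenvalue $\sigma_1^p$ on $span(\mS)$; the envelope $\norm{P_\mS[\mg]}_2^2$ is integrable with $E[\norm{P_\mS[\mg]}_2^2] = r < \infty$. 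Combined with the absolute convergence of the Laguerre expansion of $h$ on the bounded interval $[0,\sigma_1]$, this lets me exchange expectation and summation by dominated convergence.

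A clean way to organize the whole argument is to prove the identity first for the truncated partial sums $\sum_{k\le K} c_k L_k$, which are genuine polynomials and are therefore covered directly by the monomial-wise application of Theorem~\ref{the:pnorm} (the same mechanism underlying the Taylor-based Theorem~\ref{thm:taylor}), and then to pass to the limit $K\to\infty$. Since the outer sum over $i$ is finite, convergence of the partial sums to $h(\sigma_i)$ at each of the finitely many points $\sigma_1,\dots,\sigma_r$ immediately gives convergence of the left-hand side, while the domination argument above secures convergence of the right-hand side, completing the proof.
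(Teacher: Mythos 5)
Your proof follows essentially the same route as the paper's: expand $h$ in its Laguerre series, expand each $L_k$ into monomials, interchange the sums, and apply the stochastic Schatten-$p$ representation (Theorem 1) term by term to convert each power sum $\sum_{i=1}^r \sigma_i^p$ into $E\left[\langle P_\mS[\mg],(\mS\mS^\top)^{\frac{p}{2}}\mg\rangle\right]$. The paper's own proof is exactly this formal three-line interchange; your separate treatment of the $p=0$ term via the rank identity and your dominated-convergence justification for swapping the infinite sum over $k$ with the expectation are added rigor that the paper simply omits.
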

% \begin{proof}
% The result can be obtained by:
% \begin{equation*}
% \sum_{i=1}^r h(\sigma_i(\mS))=\sum_{i=1}^r \sum_{k \geq 0} c_k L_k(\sigma_i)=
% \sum_{i=1}^r \sum_{k \geq 0} \sum_p c_k a_{k,p} \sigma_i^p=\mathrm{R.H.S.}\qedhere
% %&=\sum_{k \geq 0} \sum_p c_k a_{k,p} E\left[\langle{P_\mS \left[\mg \right],(\mS\mS^\top)^\frac{p}{2}\mg }\rangle\right].\qedhere
% \end{equation*}
% \end{proof}

%In practice, the required coefficients can be computed by off-the-shelf tools like Mathematica.

In both Theorem \ref{thm:taylor} and \ref{thm:laguerre}, by using a finite sum to approximate the infinite series, the general regularization form becomes differentiable, and can provide useful gradients for the optimization.
Furthermore, the computation solely depends on matrix multiplication, which is a GPU-friendly operation that allows highly efficient parallel implementation.

\subsection{Algorithm and convergence analysis}

In what follows, we present the algorithm for computing the differentiable approximation of the Schatten-$p$ norm (Theorem \ref{the:pnorm}) and the main convergence result. Due to space limit,  detailed proofs, descriptions of applying truncated series expansion (Theorem \ref{thm:taylor} and \ref{thm:laguerre}), and further corresponding convergence analysis are left in the Appendix.

\begin{algorithm}[!t]
\caption{Differentiable approximation of $\norm{\mS}_p^p$}
\begin{algorithmic}[1]
\REQUIRE $\mS \in \R^{m \times m}$, $p$,  sample sizes ($N$), iteration steps for projection and matrix pseudo inverse ($k_1$ and $k_2$)
\ENSURE Approximation of $\norm{\mS}_p^p$
\STATE $res \gets 0$
\FOR{$i$ in $1,2,...,N$}
\STATE Sample $\mg_i \sim \mN(0,\mI)$
\STATE $\mathbf{v} \gets ApproxProject(\mS,\mg_i,k_1)$ \COMMENT{Approx. $P_\mS[\mg_i]$}
\STATE $\mathbf{M} \gets ApproxRoot(\mS\mS^\top,p,k_2)$ \COMMENT{Approx. $(\mS\mS^\top)^\frac{p}{2}$}
\STATE $res \gets res + \mathbf{v}^\top \mathbf{M} \mg_i $
\ENDFOR
\RETURN $res / N$
\end{algorithmic}
\end{algorithm}

\begin{theorem}\label{the:main_convg}
We use underline notations (\ie, $\underline{P_S[\mg_i]}$ and $\underline{(S S^\top)^\frac{p}{2}}$) to denote the approximation results obtained by the iterative methods, with iteration steps $k_1$ for matrix pseudo inverse and $k_2$ for matrix root.
Let $X=\frac{1}{N}\sum_{i=1}^N \langle \underline{P_S[\mg_i]},\underline{(S S^\top)^\frac{p}{2}}  \mg_i \rangle$. 
Then for any $\epsilon>0$,
\begin{align*}
& \Pr\left(\frac{\left|X - \sum_i \sigma_i^p \right |}{\sum_i \sigma_i^p}\leq \epsilon\right)\geq 1 - \frac{2 C(k_1, k_2)^2 \norm{S}_{2p}^{2p}}{N \left(\sum_i \sigma_i^p(\epsilon+E(k_1, k_2))\right)^2},
\end{align*}
% where $C(k_1, k_2)\defineas (1+K C_1^{2^{k_1}})(1+C_2 \cdot 2^{-k_2})$, $D(k_1, k_2)\defineas (1-L D_1^{2^{k_1}})(1-D_2 \cdot 2^{-k_2})$, and $E(k_1, k_2)\defineas max\{C(k_1, k_2) - 1, 1-D(k_1, k_2)\}$, with $K, L>0$, $0<C_1, D_1<1$ and $C_2, D_2>0$.
where $C(k_1, k_2) \rightarrow 1$, $E(k_1, k_2) \rightarrow 0$ exponentially as $k_1$ and $k_2$ increase.
\end{theorem}

\noindent\textbf{Remarks:} 1) The theorem fully characterize the convergence behavior w.r.t. three determining factors, \ie, sample size ($N$), and the iteration steps ($k_1$ and $k_2$); 
2) $X$ is an unbiased estimator of the Schatten-$p$ norm; 3) $C(k_1, k_2) \rightarrow 1$ and $E(k_1, k_2) \rightarrow 0$ converges exponentially fast as $k_1$ and $k_2$ increase; 3) Larger sample size $N$ can effectively reduce the estimator's variance.

\section{Experimental Results}
In this section, we perform various experiments to demonstrate the versatility, convenience, as well as efficiency of our method. We first examine two classic LRR tasks, \ie, matrix completion and video fore-background separation. 
One advantage of our proposed method is to conveniently introduce LRR terms into any loss function, particularly deep neural networks. So we further exploit this property in DNN-based image denoising.
Experiments about convergence and parameter sensitivity is deferred to Appendix due to space constraints.
All experiments were conducted on a machine equipped with 3080Ti GPU.

% \subsection{Real-world applications}
% In this section, we will compare our method with traditional low rank methods. These methods usually check their effectiveness by directly applying themselves to experiments like matrix completion, image denoising, and foreground-background separation. Therefore, we will also follow the settings and see if our method can achieve similar results. 

% \subsubsection{Synthetic data}
% We first compare the reconstruction error of our method with others under the problem setting of synthetic matrix completion. With basic setting similar to 

\subsection{Matrix completion}

%PSNR of each methods: (c) fast-MDT-Tucker 22.70, (d) MSS 24.58, (e) IRNN 26.23, (f) SNN 29.81, (g) TNN 29.81, (h) TNN-3DTV 29.99, (i) ours-nuclear 29.51, (j) ours-h 33.54.

\begin{table*}[!htb]
    % \fontsize{8}{10}\selectfont
    \centering
    % \small
    \caption{Comparison of matrix completion algorithms for image inpainting. Bold \textbf{terms} and underlined \underline{terms} denote the best and second best results. ``time'' denotes the average processing time per image. See the main text for details.} \label{tab:inpaint}
    \setlength{\tabcolsep}{2.2pt}
    \begin{tabular}{l | c c c c c | c c c c c c}
     %  \hline
     % \multirow{2}{*}{\diagbox[width=5em,trim=l]{PSNR}{Method}} & \multirow{2}{*}{nuclear} & $\gamma$- & \multirow{2}{*}{Laplace} & L-$\gamma$- & L- & RPCA & f-MDT & \multirow{2}{*}{MSS} & \multirow{2}{*}{IRNN} & TNN- & \multirow{2}{*}{DLRL}\\
     % & &nuclear& &nuclear&Laplace&FGSR&Tucker& & &3DTV& \\
      \hline
     \multirow{2}{*}{\diagbox[width=5em,trim=l]{PSNR}{Method}} & \multirow{2}{*}{nuclear} & \multirow{2}{*}{T-$\gamma^*$} & \multirow{2}{*}{T-Lap} & \multirow{2}{*}{L-$\gamma^*$} & \multirow{2}{*}{L-Lap}  & RPCA & f-MDT & \multirow{2}{*}{MSS} & \multirow{2}{*}{IRNN} & TNN- & \multirow{2}{*}{DLRL}\\
     & & & & & &FGSR&Tucker& & &3DTV& \\
      \hline
      drop 20\% & 37.09±0.18 & \underline{38.44±0.07} & 38.41±0.14 & 38.39±0.16 & \textbf{38.47±0.11} & 26.0 & 24.28 & 26.62 & 28.33 & 30.17 & 35.99 \\
      drop 30\% & 35.03±0.10 & \textbf{36.24±0.13} & 36.01±0.19 & 36.19±0.13 & \underline{36.20±0.13} & 24.89 & 24.44 & 25.58 & 28.51 & 30.03 & 34.58\\
      drop 40\% & 33.09±0.05 & \textbf{34.28±0.19} & 34.16±0.18 & 34.21±0.07 & \underline{34.22±0.15} & 16.41 & 24.16 & 24.83 & 27.52 & 29.82 & 33.58\\
      drop 50\% & 31.39±0.09 & \textbf{32.51±0.19} & 32.31±0.15 & 32.32±0.11 & 32.38±0.09 & 6.95 & 23.90 & 24.05 & 26.91 & 29.54 & \underline{32.43}\\
      block & 17.89±0.01 & \textbf{31.14±0.02} & 29.67±0.03 & 24.64±0.01 & 25.80±0.02 & 13.75 & 23.20 & 26.09 & 25.46 & 28.17 & \underline{30.82}\\
      text & 26.73±0.02 & \underline{35.07±0.06} & 34.99±0.03 & 34.97±0.05 & 34.98±0.01 & 21.49 & 22.68 & 24.56 & 26.20 & 29.99 & \textbf{37.19}\\
      \hline
      % \hline
      % drop 20\% & 36.85 & \underline{38.57} & \textbf{38.58} & 38.37 & 38.47 & 26.0 & 24.28 & 26.62 & 28.33 & 30.17 & 35.99 \\
      % drop 30\% & 35.13 & 35.92 & 36.16 & \textbf{36.38} & \underline{36.23} & 24.89 & 24.44 & 25.58 & 28.51 & 30.03 & 34.58\\
      % drop 40\% & 33.28 & 34.13 & \textbf{34.36} & \underline{34.35} & 34.28 & 16.41 & 24.16 & 24.83 & 27.52 & 29.82 & 33.58\\
      % drop 50\% & \underline{31.46} & \textbf{32.48} & 32.25 & 32.39 & 32.39 & 6.95 & 23.90 & 24.05 & 26.91 & 29.54 & 32.43\\
      % block & 17.89 & \underline{30.55} & 29.67 & 24.68 & 25.80 & 13.75 & 23.20 & 26.09 & 25.46 & 28.17 & \textbf{30.82}\\
      % text & 29.51 & 34.90 & 34.98 & 34.99 & \underline{35.01} & 21.49 & 22.68 & 24.56 & 26.20 & 29.99 & \textbf{37.19}\\
      % \hline

      time (s) & {4.35} & {3.83} & {3.88} & {3.83} & {3.86} & 16.83 & {1.73} & 10.70 & 8.82 & 24.85 & 494.46\\
      \hline
    \end{tabular}
\end{table*}

\begin{figure*}[!htb]
	\centering
	\subfigure[escalator]{
		\begin{minipage}[t]{0.33\linewidth}
			\centering
			\includegraphics[height=1in,width=1.25in]{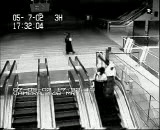}\\
			\vspace{0.02cm}
			\includegraphics[height=1in,width=1.25in]{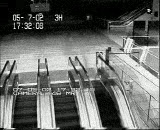}\\
			\vspace{0.02cm}
			\includegraphics[height=1in,width=1.25in]{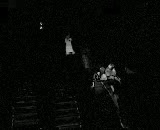}\\
			\vspace{0.1cm}
		\end{minipage}%
	}%
	\subfigure[highway]{
		\begin{minipage}[t]{0.33\linewidth}
			\centering
			\includegraphics[height=1in,width=1.25in]{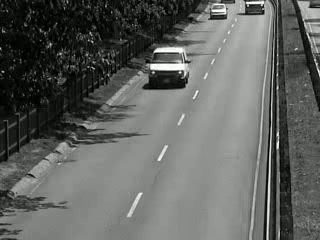}\\
			\vspace{0.02cm}
			\includegraphics[height=1in,width=1.25in]{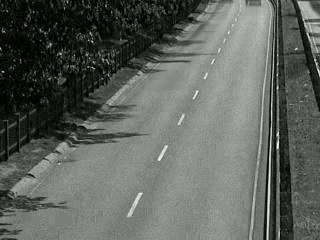}\\
			\vspace{0.02cm}
			\includegraphics[height=1in,width=1.25in]{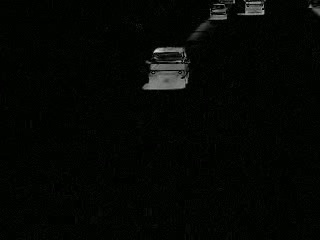}\\
			\vspace{0.1cm}
		\end{minipage}%
	}%
	\subfigure[shop]{
		\begin{minipage}[t]{0.33\linewidth}
			\centering
			\includegraphics[height=1in,width=1.25in]{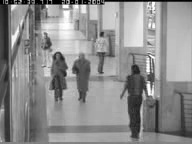}\\
			\vspace{0.02cm}
			\includegraphics[height=1in,width=1.25in]{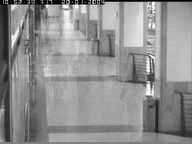}\\
			\vspace{0.02cm}
			\includegraphics[height=1in,width=1.25in]{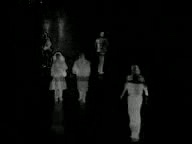}\\
			\vspace{0.1cm}
		\end{minipage}%
	}%
	\centering
	\caption{The results using our algorithm in fore-background separation. From top to bottom: original frames of the video, separated backgrounds, and foreground objects.}
	\label{fig:video}
\end{figure*}

Natural images can be represented as matrices that possess low-rank priors. 
Particularly, singular values of natural images are dominated by a few of the largest components, which allows us to model image restoration as a low-rank matrix completion problem. When dealing with colorful images, we process each color channel as an independent matrix. We evaluate the effectiveness of different methods using PSNR.
%We present the experimental result in text removal, where the original image is obstructed by text, and the task is to recover the image by removing the text. This task is generally considered more challenging than denoising because the text-covered pixels are not randomly distributed, and important information may be obscured.

Table \ref{tab:inpaint} presents the numerical result, and Figure \ref{fig:textrec0} visualizes the image qualities. 
In Table \ref{tab:inpaint}, the first five columns denote our method, using different approximation and relaxation strategies.
Particularly, ``nuclear'' denotes using the nuclear norm without other relaxation function, \ie, directly apply Proposition \ref{prop:dnuclear}.
``T-$\gamma^*$'' and ``T-Lap'' denote using the Taylor expansion-based method for function approximation, with the $\gamma$-nuclear norm \cite{kang2015robust} and Laplace \cite{trzasko2008highly,hu2021low} as the relaxation function (\ie, $h$) respectively.
Similarly, ``L-$\gamma^*$'' and ``L-Lap'' denote using the Laguerre expansion-based method for function approximation.
``drop XX\%'' means randomly removing a certain potion of pixels, while ``block'' and ``text'' use predefined patterns to obscure the image.
% In the figure, ours-nuclear denotes using the nuclear norm in our method, and ours-h denotes applying the penalty function $h$ proposed in \cite{kang2015robust}. 
The methods being compared include RPCA with FGSR \cite{fan2019factor}, f-MDT Tucker  \cite{yamamoto2022fast}, MSS \cite{oh2015partial}, IRNN \cite{lu2015nonconvex}, TNN-3DTV \cite{jiang2018anisotropic}, and DLRL \cite{chen2021efficient}.
% We can see that with a suitable penalty function $h$, our method outperforms other baselines.

\begin{figure*}[!htb]
	\centering
	\subfigure[\parbox{4cm}{\centering trained with noise level 15}]{
		\begin{minipage}[t]{0.33\linewidth}
			\centering
			\includegraphics[trim=120 0 0 0, clip, height = 1.75in, width=2.25in]{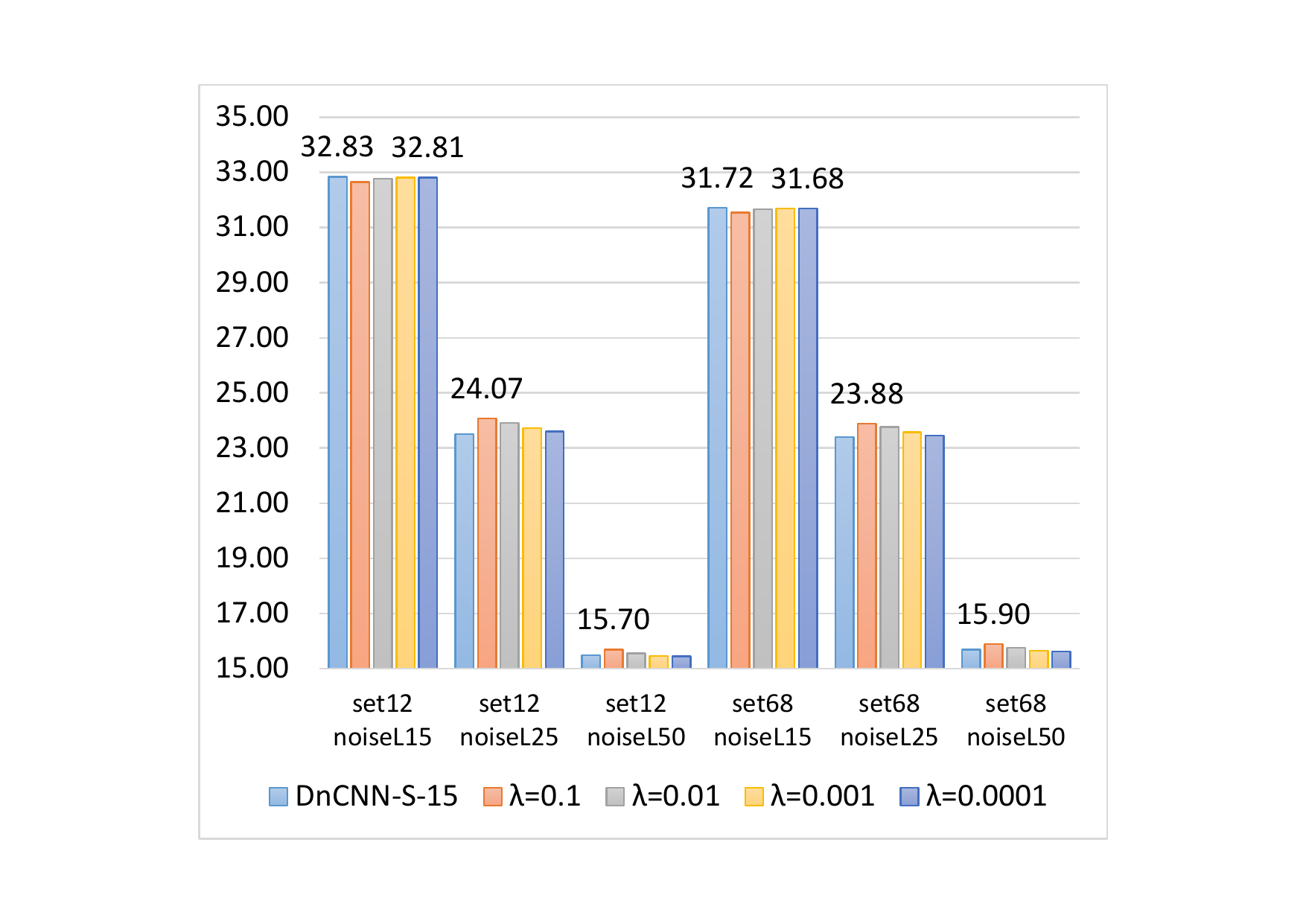}
                \vspace{-1.5cm}
		\end{minipage}
	}%
	\subfigure[\parbox{4cm}{\centering trained with noise level 25}]{
		\begin{minipage}[t]{0.33\linewidth}
			\centering
			\includegraphics[trim=120 0 0 0, clip, height = 1.75in, width=2.25in]{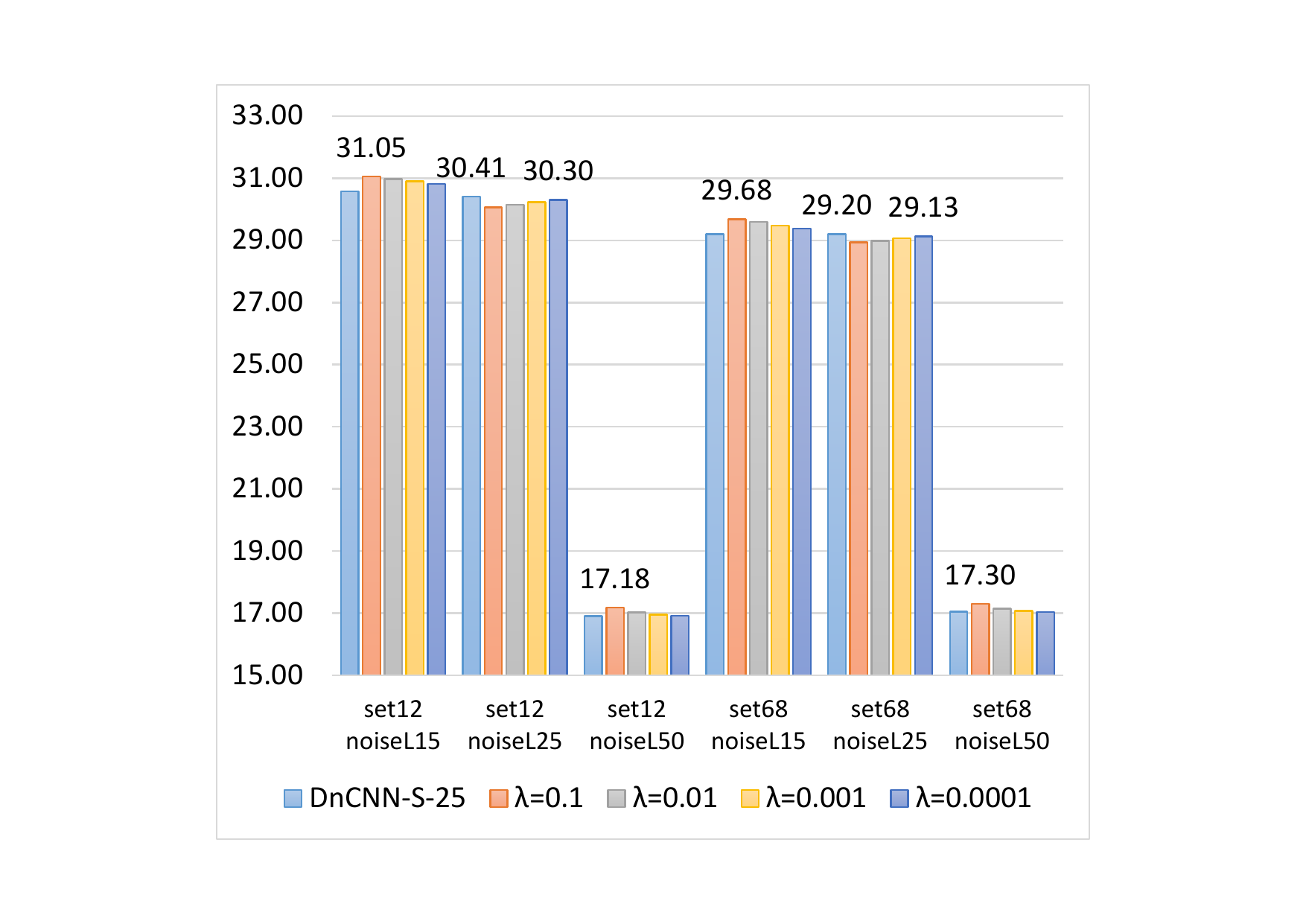}
                \vspace{-1.5cm}
		\end{minipage}
	}%
	\subfigure[\parbox{4cm}{\centering trained with noise level 50}]{
		\begin{minipage}[t]{0.33\linewidth}
			\centering
			\includegraphics[trim=120 0 0 0, clip, height = 1.75in, width=2.25in]{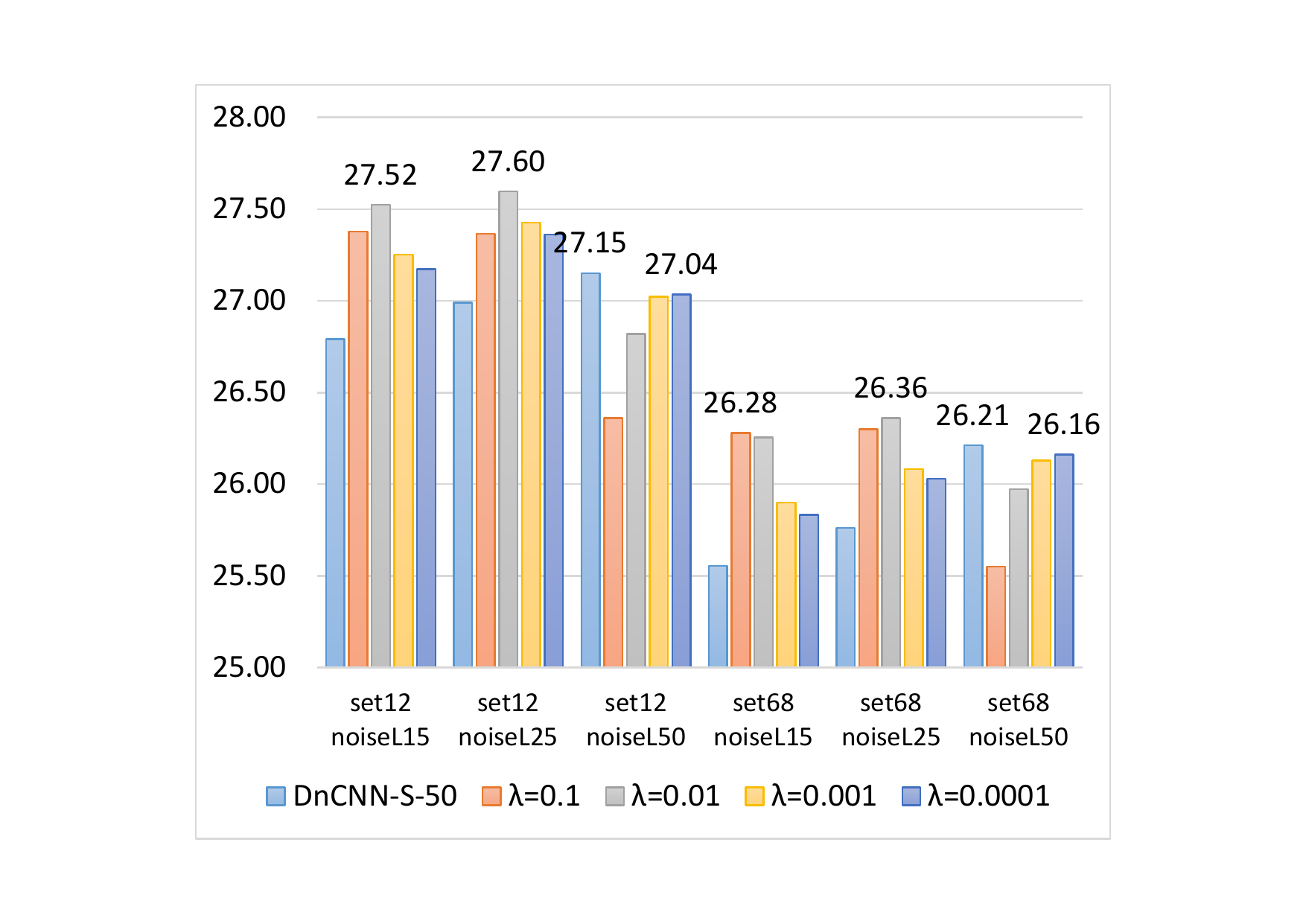}
                \vspace{-1.5cm}
		\end{minipage}
	}%
	\centering
	\caption{Results of applying low-rank regularization and the proposed differentiable approximation technique in the DnCNN denoising model, measured in PSNR.}
	\label{fig:dncnn}
\end{figure*}

From the results, we can see that: 1) Our method outperforms other baselines in almost all the cases; 2) By using relaxation methods, our performance can be further improved; 3) Both Taylor and Laguerre expansions are effective for function approximation, with each excelled in different scenarios; 4) By utilizing parallel computation and high-performing GPUs, our method strikes the best balance between performance and efficiency.

Further results and visualization of matrix completion can be found in the Appendix.

% To explore more about the performance of our method using nuclear norm and using h functions, Table 1 contains some of the results of experiments we did. Under all kinds of image inpainting tasks, ours with h functions provides better results, proving that h functions that give a approximation of pure rank also works better when applied to our method. Interestingly, when solving the block recover problem of Image2, although ours-nuclear didn't reach a desired PSNR, it still got a relatively high SSIM, showing that the algorithm did find some way to fill the missing part with structure that is similar to the original image.

\subsection{Video fore-background separation}

We now apply our method to the task of video fore-background separation. Given a video sequence $\mathbf{V} \in \mathbb{R}^{a \times b \times t}$, where $a$ and $b$ represent the dimensions of each frame, and $t$ indexes the time steps. For each frame $\mathbf{f} \in \mathbb{R}^{a \times b}$, we can reshape the matrix into a vector $\mathbf{f'} \in \mathbb{R}^{ab \times 1}$ and concatenate all frames together, resulting in the final matrix $\mathbf{V'} \in \mathbb{R}^{ab \times t}$. 
We assume that the reshaped matrix can be decomposed as $\mathbf{V'} = \mathbf{S} + \mathbf{O}$, where $\mathbf{S}$ is a low-rank matrix representing the background, and $\mathbf{O}$ is a sparse matrix representing the foreground object. Thus, the problem can be solved by optimizing $\min_{\mathbf{X}} \|\mathbf{V'} - \mathbf{X}\|_1 + \lambda \mathcal{R}(\mathbf{X})$, where $\|\cdot\|_1$ denotes the L1 norm and $\mathcal{R}(\mathbf{X})$ represents the LRR term.
We use the nuclear norm as the surrogate rank function.

% The reason why previous low-rank regularization methods are having difficulties in applying to this task is that the dimension of $S$ is too large. One is the multiplication of width and height of a frame, and the other one is the number of frames. For our algorithm, as long as there is one dimension is relatively acceptable, the result can be achieved without a problem. This is because that we only use $S*S^T$ in our calculation. For example, we pick 100 frames from a video which frame size is 320*240, so that we have $S \in R^{76800 \times 100}$. During our calculation, we only need to deal with a matrix of size 100*100. Therefore, our algorithm is also good at lowering the rank of matrices that have one large dimension.

Figure \ref{fig:video} illustrates the results obtained by applying our method to fore-background separation. The algorithm effectively separates the backgrounds of individual frames by utilizing the global information of the complete video sequences. Notably, our approach produces distinct boundaries for the background, even in dynamic scenarios such as the continuously moving escalator example. Unlike conventional methods that tend to blur the details of moving objects like escalator steps, our algorithm maintains clear edges and boundaries in the obscured background region.

\subsection{Regularizing DNN-based denoising models}

One of the major strengths of our method is its flexibility in incorporating the LRR term into any loss function, and the optimization can be accomplished with deep learning libraries. Given the impressive performance of deep neural networks, it is highly desirable to apply our approach to leverage low-rank priors. We explore this avenue in DNN-based image denoising,
particularly using the denoising convolutional neural networks (DnCNNs) \cite{zhang2017beyond}.

% The main advantage our algorithm has compared to traditional low rank methods is that ours can be easily applied to nearly every situation that has a low rank prior. To apply our algorithm, all you need is to add a loss term to the network. Therefore, it is natural to combine our algorithm with existing networks. We’ve already proved the denoising ability of our algorithm, and now it’s time to see how it works when combining with machine learning denoising methods.

% \subsubsection{DnCNN}
% We conducted experiments based on two state-of-the-art denoising models, \ie, denoising convolutional neural networks (DnCNNs) \cite{zhang2017beyond}. % and Swin Transformer (SwinIR) \cite{liang2021swinir}. %Due to space constraints, we have included the results of the SwinIR model in the supplemental materials for reference.

Following the experimental configuration of \cite{zhang2017beyond}, we conducted our experiments using a training set of 400 images with dimensions of 180 × 180. Gaussian noise levels were set at $\sigma = 15, 25, 50$. Since the low-rank structure is only applicable to the entire image, we utilized the full image as input when calculating the regularization loss. For the reconstruction loss, the settings remained unchanged, with the images divided into patches of size 40 × 40. The datasets in our experiments were the Berkeley segmentation dataset (BSD68) and the Set12 dataset, which were consistent with previous studies.

The comparison between the original network and the network augmented with LRR is depicted in Figure \ref{fig:dncnn}. One of the challenges faced by these denoising networks is their reliance on prior knowledge of the noise level ($\sigma$) during training. Consequently, the trained models tend to overfit to the specific noise level provided, resulting in inferior performance when confronted with different noise levels during testing. However, as observed in the figure, this issue is significantly mitigated when the LRR and our proposed differentiable approximation are applied. Although there is a slight drop in performance when the test noise level matches the training noise level, substantial gains are observed at unseen noise levels. 
This is most evident in Figure \ref{fig:dncnn} (c), in which our method frequently improve the performance by a large margin.

% \subsubsection{Swin-IR}

% Figure 4 (d) is the result when we apply the same loss term to one of the latest image denoising network:  Image Restoration Using Swin Transformer(SwinIR)\cite{liang2021swinir}. The figure still shows the same pattern as the last section, that our loss term can improve the network's performance when facing images with unseen noise levels, proving the wide range of our method's application.

\section{Conclusion}

In this paper, a novel differentiable approximation of the generalized LRR was proposed. %, which is based on the key observation that the rank function and its relaxations have equivalent stochastic definitions.
The form of the regularization is quite general, covering a broad range of both convex and nonconvex relaxations.
The key advantages of the proposed method include its versatility, convenience, and efficiency.
By appending the differentiable LRR to a loss function in a plug-and-play fashion, the optimization can be automatically accomplished with gradient-based machine learning libraries.
The operations are GPU-friendly, which facilitates parallel and  efficient computation.
On the theoretical side, we rigorously prove that both the bias and the variance of the rank approximation rapidly reduce with increased sample size and iteration steps.
In the experimental study, the proposed method was successfully applied to a variety of tasks.

%% The file named.bst is a bibliography style file for BibTeX 0.99c
\newpage

\section*{Acknowledgements}
This work is supported in part by the National Natural Science Foundation of China, under Grant (62302309, 62171248), Shenzhen Science and Technology Program (JCYJ20220818101014030, JCYJ20220818101012025), the PCNL KEY project (PCL2023AS6-1), the Shenzhen Science and Technology Innovation Committee under Grant (KJZD20230923114059020, KJZD20240903103702004), and the Department of Science and Technology of Guangdong Province under Grant (2024A0101010001).

\bibliographystyle{named}
\bibliography{ijcai25}

\newpage
\section*{Appendix}

\subsection*{Proofs of differentiable low-rank regularization}
\setcounter{proposition}{2}
\begin{proposition}[Equivalent definition of matrix rank \citep{wright2022high}]
\label{prop:drank}
The rank of a matrix $\mS$ can be equivalently computed as
the average squared length of a random Gaussian vector (\ie,  $\mg \sim \mN(\mathbf{0},\mI)$) 
projected onto the column space of $\mS$:
\begin{align}
    \norm{\mS}_0=rank(\mS)=E\left[\norm{P_\mS [\mg ]}_2^2\right].
\end{align}
\end{proposition}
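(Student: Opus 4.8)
The plan is to reduce the statement to two standard facts about the orthogonal projection matrix $P=\mS\mS^\dag$: that it is symmetric and idempotent, and that its trace equals $\text{rank}(\mS)$. First I would recall that $P_\mS[\mg]=\mS\mS^\dag\mg=P\mg$, where $P=\mS\mS^\dag$ is precisely the orthogonal projector onto the column space $span(\mS)$. From the defining Moore--Penrose axioms of the pseudo-inverse one verifies directly that $P^\top=P$ and $P^2=P$, i.e. $P$ is a symmetric idempotent matrix.

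With these properties in hand, I would rewrite the squared projected length as a quadratic form:
\begin{align*}
\norm{P_\mS[\mg]}_2^2=(P\mg)^\top(P\mg)=\mg^\top P^\top P\,\mg=\mg^\top P\,\mg,
\end{align*}
using $P^\top P=P^2=P$. Taking expectations, I would apply the standard identity for the expectation of a Gaussian quadratic form: for $\mg\sim\mN(\mathbf{0},\mI)$ one has $E[\mg^\top P\,\mg]=\text{trace}(P)$. This follows by expanding $E[\mg^\top P\,\mg]=\sum_{i,j}P_{ij}\,E[g_i g_j]=\sum_{i,j}P_{ij}\delta_{ij}=\text{trace}(P)$, since the coordinates of $\mg$ are independent standard normals with $E[g_i g_j]=\delta_{ij}$.

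The remaining step is to identify $\text{trace}(P)$ with the rank. Because $P$ is an orthogonal projector onto an $r$-dimensional subspace with $r=\text{rank}(\mS)$, its eigenvalues are $1$ with multiplicity $r$ and $0$ otherwise, so $\text{trace}(P)=r=\text{rank}(\mS)$. Chaining the equalities then yields $E\left[\norm{P_\mS[\mg]}_2^2\right]=\text{rank}(\mS)$, as claimed.

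I do not anticipate a serious obstacle: the argument is essentially a one-line trace computation once the orthogonal-projection structure of $\mS\mS^\dag$ is established. The only point requiring mild care is justifying that $\mS\mS^\dag$ is genuinely the symmetric idempotent projector whose trace equals the rank; this rests on the Moore--Penrose identities together with the fact that the trace of a projection equals the dimension of its range. Everything else is a direct consequence of the linearity of expectation and the covariance structure of the isotropic Gaussian.
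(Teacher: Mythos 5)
Your proof is correct, but it follows a different route from the paper's. The paper's argument is coordinate-based: it writes the compact SVD $\mS=\mU\mSig\mV^\top$, uses $\mS^\dag=\mV\mSig^{-1}\mU^\top$ to reduce $\norm{P_\mS[\mg]}_2^2$ to $\norm{\mU^\top\mg}_2^2=\sum_{i=1}^r(\mmu_i^\top\mg)^2$, and then computes $E[(\mmu_i^\top\mg)^2]=1$ for each orthonormal column $\mmu_i$ by expanding the expectation entrywise. You instead work abstractly with the projector $P=\mS\mS^\dag$: Moore--Penrose axioms give symmetry and idempotence, the Gaussian quadratic-form identity gives $E[\mg^\top P\mg]=\operatorname{trace}(P)$, and the spectral structure of an orthogonal projector gives $\operatorname{trace}(P)=\operatorname{rank}(\mS)$. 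Your version is arguably cleaner and SVD-free, which is aesthetically aligned with the paper's overall goal of avoiding SVD; the one point you correctly flag but should make explicit is that $\operatorname{range}(\mS\mS^\dag)=\operatorname{range}(\mS)$ (immediate from $\mS\mS^\dag\mS=\mS$), so that the projector's rank really is $r$. The paper's SVD-based route, on the other hand, sets up exactly the notation ($\mU$, $\mSig$) that its subsequent proofs reuse: the nuclear-norm and Schatten-$p$ statements (Proposition 4 and Theorem 1) insert $\mSig^{p}$ between $\mU^\top$ and $\mU$, and the variance computation in the convergence analysis (Lemma 2) also runs through this factorization. Your trace identity would generalize there too, via $E[\mg^\top\mU\mSig^p\mU^\top\mg]=\operatorname{trace}(\mU\mSig^p\mU^\top)=\sum_i\sigma_i^p$, so nothing is lost, but the paper's choice amortizes the SVD setup across several proofs.
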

\begin{proof}
Suppose the rank of matrix $\mS\in \R^{m \times n}$ is $rank(\mS)=r$. By applying (compact) SVD, $\mS =\mU \mSig \mV^\top$, with $\mU \in \R^{m \times r}, \mSig \in \R^{r \times r}, \mV \in \R^{n \times r}$. Also recall that the pseudo-inverse of matrix $\mS$ satisfies $\mS ^\dag=\mV \mSig^{-1} \mU^\top$. Thus,
\begin{align*}
E\left[\norm{P_\mS \left[\mg \right]}_2^2\right]
&=E\left[(\mS \mS ^\dag\mg )^\top(\mS \mS ^\dag\mg )\right]
=E\left[\norm{\mU^\top\mg }_2^2\right]\\
&=E\left[(\mmu _1^\top\mg )^2\right]+...+E\left[({\mmu _r^\top\mg })^2\right].
\end{align*}
For any $\mmu$, we have
$
E\left[(\mmu ^\top\mg )^2\right]=\sum_{i,j=1}^m u_i u_j E\left[g_i g_j\right]=\sum_{i=1}^m u_i^2 E\left[g_i^2\right]=1.
$
So $\norm{\mS}_0=r=E\left[\norm{P_\mS [\mg ]}_2^2\right]$.
\end{proof}

\begin{proposition}
% \label{prop:dnuclear}
The nuclear norm of a matrix $\mS$ can be equivalently computed as:
\begin{align}
\norm{\mS}_*=E\left[\langle{P_\mS \left[\mg \right],(\mS\mS^\top)^\frac{1}{2}\mg }\rangle\right],
\end{align}
where $\mg \sim \mN(\mathbf{0},\mI)$ is a random Gaussian vector.
\end{proposition}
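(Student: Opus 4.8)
The plan is to mirror the proof of Proposition~\ref{prop:drank}, exploiting the compact SVD of $\mS$ to reduce both the projection operator and the matrix square root to expressions built only from the left singular vectors. First I would write $\mS=\mU\mSig\mV^\top$ with $\mU\in\R^{m\times r}$ and $\mV\in\R^{n\times r}$ having orthonormal columns and $\mSig=\mathrm{diag}(\sigma_1,\dots,\sigma_r)$, and recall $\mS^\dag=\mV\mSig^{-1}\mU^\top$. The two ingredients of the integrand then simplify cleanly: the projection becomes $P_\mS[\mg]=\mS\mS^\dag\mg=\mU\mU^\top\mg$ because $\mV^\top\mV=\mI$, while $\mS\mS^\top=\mU\mSig^2\mU^\top$ gives the principal square root $(\mS\mS^\top)^\frac{1}{2}=\mU\mSig\mU^\top$.

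Next I would substitute these into the inner product and collapse the orthogonal factors. Using $\mU^\top\mU=\mI$,
\begin{align*}
\langle P_\mS[\mg],(\mS\mS^\top)^\frac{1}{2}\mg\rangle
&=(\mU\mU^\top\mg)^\top(\mU\mSig\mU^\top\mg)
=\mg^\top\mU\mSig\mU^\top\mg.
\end{align*}
Writing $\mU=[\mmu_1,\dots,\mmu_r]$ in terms of its columns, this equals $\sum_{i=1}^r\sigma_i(\mmu_i^\top\mg)^2$, since $\mU\mSig\mU^\top=\sum_{i=1}^r\sigma_i\mmu_i\mmu_i^\top$.

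Taking the expectation and using linearity, the quantity reduces to $\sum_{i=1}^r\sigma_i\,E[(\mmu_i^\top\mg)^2]$. Here I would reuse the Gaussian moment identity already established in the proof of Proposition~\ref{prop:drank}, namely $E[(\mmu^\top\mg)^2]=1$ for any unit vector $\mmu$, which applies because the columns of $\mU$ are orthonormal. This yields $\sum_{i=1}^r\sigma_i=\norm{\mS}_*$, completing the argument.

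I expect no serious obstacle; the proof is essentially routine once the SVD substitutions are in place. The one point requiring a little care is the identification $(\mS\mS^\top)^\frac{1}{2}=\mU\mSig\mU^\top$, which I would justify by noting that $\mU\mSig\mU^\top$ is positive semidefinite and squares to $\mS\mS^\top$, hence is the unique principal square root. The remainder is bookkeeping with the orthonormal factors together with the reuse of the single-variable Gaussian moment identity, so the main value of the statement lies in recognizing that the projection and the square root compose so that $\mU\mU^\top$ and $\mU\mSig\mU^\top$ cross-cancel into the spectral form $\sum_i\sigma_i(\mmu_i^\top\mg)^2$.
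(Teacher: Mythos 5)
Your proof is correct and follows essentially the same route as the paper's: substitute the compact SVD so that $P_\mS[\mg]=\mU\mU^\top\mg$ and $(\mS\mS^\top)^{\frac{1}{2}}=\mU\mSig\mU^\top$, collapse the inner product to $\mg^\top\mU\mSig\mU^\top\mg$, and finish with the Gaussian moment identity $E[(\mmu_i^\top\mg)^2]=1$. Your explicit justification that $\mU\mSig\mU^\top$ is the principal square root of $\mS\mS^\top$ is a welcome detail the paper leaves implicit.
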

\begin{proof}
The proof is similar to that of Proposition \ref{prop:drank}. Keeping the same notations, it follows that
\begin{align*}
&~~~~~E\left[\langle{P_\mS \left[\mg \right],(\mS\mS^\top)^\frac{1}{2}\mg}\rangle\right]\\
&=E\left[\mg ^\top\mS \mS ^\dag(\mS\mS^\top)^\frac{1}{2}\mg \right]
=E\left[\norm{\mSig^\frac{1}{2} \mU^\top\mg }_2^2\right]\\
&=\sigma_1 E\left[(\mmu _1^\top\mg )^2\right]+...+\sigma_r E\left[({\mmu _d^\top\mg })^2\right]=\sum_{i=1}^r \sigma_i=\norm{\mS}_*. \qedhere
\end{align*}
\end{proof}

\setcounter{theorem}{0}
\begin{theorem}
\label{the:pnorm}
For a matrix $\mS$, its Schatten-$p$ norm, defined as $\norm{\mS}_p=(\sum_{i=1}^r \sigma_i(\mS)^p)^\frac{1}{p}$ where $r$ is the rank of $\mS$, can be alternatively computed as
\begin{align}
\norm{\mS}_p^p=\sum_{i=1}^r \sigma_i^p=E\left[\langle{P_\mS \left[\mg \right],(\mS\mS^\top)^\frac{p}{2}\mg }\rangle\right],
\end{align}
where $p\in\mathbb{N}^+$ and $\mg \sim \mN(\mathbf{0},\mI)$.
\end{theorem}
The proof is almost identical to the above proposition.

\begin{theorem}
% \label{thm:taylor}
Let $\mS$ be a matrix of rank $r$, and $h:\mathbb{R}\to\mathbb{R}$ be a sufficiently smooth function and $\mg \sim \mN(\mathbf{0},\mI)$. Then the generalized LRR can be computed as
\begin{align}
\sum_{i=1}^r h(\sigma_i(\mS))=\sum_{p=0}^\infty \frac{h^{(p)}(0)}{p !}E\left[\langle{P_\mS \left[\mg \right],(\mS\mS^\top)^\frac{p}{2}\mg }\rangle\right].
\end{align}
\end{theorem}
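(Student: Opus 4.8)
The plan is to reduce the statement to the already-established stochastic Schatten-$p$ identity (Theorem \ref{the:pnorm}) by expanding $h$ in a Maclaurin (Taylor-at-zero) series and then summing term by term over the singular values. First I would write $h(x)=\sum_{p=0}^\infty \frac{h^{(p)}(0)}{p!}x^p$ and evaluate it at each singular value, obtaining the scalar identity $h(\sigma_i(\mS))=\sum_{p=0}^\infty \frac{h^{(p)}(0)}{p!}\sigma_i^p$ for every $i\in\{1,\dots,r\}$. Summing over the $r$ (finitely many, nonzero) singular values and interchanging the finite sum over $i$ with the infinite sum over $p$ gives
\begin{align*}
\sum_{i=1}^r h(\sigma_i(\mS))=\sum_{p=0}^\infty \frac{h^{(p)}(0)}{p!}\sum_{i=1}^r \sigma_i^p.
\end{align*}

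The second step is to recognize the inner sum $\sum_{i=1}^r \sigma_i^p$ as precisely the quantity computed stochastically in Theorem \ref{the:pnorm}, recalling that the eigenvalues of $\mS\mS^\top$ are the $\sigma_i^2$, so those of $(\mS\mS^\top)^{\frac{p}{2}}$ are the $\sigma_i^p$. For each $p\ge 1$ I would substitute $\sum_{i=1}^r \sigma_i^p=E\left[\langle P_\mS[\mg],(\mS\mS^\top)^{\frac{p}{2}}\mg\rangle\right]$. The $p=0$ term needs separate handling since Theorem \ref{the:pnorm} assumes $p\ge 1$: here $\sum_{i=1}^r \sigma_i^0=r$, and since $(\mS\mS^\top)^0=\mI$ and $P_\mS$ is an idempotent self-adjoint projection so that $\langle P_\mS[\mg],\mg\rangle=\norm{P_\mS[\mg]}_2^2$, Proposition \ref{prop:drank} yields $r=E\left[\langle P_\mS[\mg],(\mS\mS^\top)^0\mg\rangle\right]$. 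Substituting both cases into the displayed series produces the claimed identity.

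The main obstacle is not the algebra but the analytic justification, and it forces a sharpening of the hypothesis. Interchanging the outer finite sum over $i$ with the series over $p$ is elementary: a finite linear combination of convergent series converges to the combination of the limits, so the interchange is valid as soon as each of the $r$ scalar Maclaurin series converges. The genuine requirement is therefore that the Maclaurin series of $h$ actually converge to $h(\sigma_i)$ at every $\sigma_i$. Thus ``sufficiently smooth'' must be strengthened to real-analyticity on an interval $[0,R)$ with $R>\sigma_1(\mS)=\norm{\mS}$, i.e.\ the radius of convergence of the series at the origin strictly exceeds the largest singular value; under this condition each evaluation point lies inside the disk of convergence and the pointwise expansion holds. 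I would state this radius-of-convergence assumption explicitly and remark that, since the method in practice uses a truncated partial sum, only convergence of the series (not any uniform or termwise-differentiability bound) is needed for the identity itself.
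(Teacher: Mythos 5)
Your proposal follows essentially the same route as the paper's own proof: expand $h$ in its Maclaurin series at each singular value, interchange the finite sum over $i$ with the series over $p$, and substitute the stochastic Schatten-$p$ identity of Theorem \ref{the:pnorm} term by term. In fact your version is more careful than the paper's two-line argument, since you explicitly handle the $p=0$ term via Proposition \ref{prop:drank} (which Theorem \ref{the:pnorm}, stated only for $p\in\mathbb{N}^+$, does not cover) and correctly note that ``sufficiently smooth'' must really mean the Maclaurin series converges to $h$ on an interval containing all $\sigma_i$, i.e.\ radius of convergence exceeding $\norm{\mS}$ --- both points the paper's proof silently assumes.
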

\begin{proof}
The result can be easily proved by using Theorem \ref{the:pnorm} and Taylor expansion.
\begin{align*}
\sum_{i=1}^r h(\sigma_i(\mS))=&\sum_{i=1}^r \sum_{p=0}^\infty \frac{h^{(p)}(0)}{p !}\sigma_i^p\\
=&\sum_{p=0}^\infty \frac{h^{(p)}(0)}{p !}E\left[\langle{P_\mS \left[\mg \right],(\mS\mS^\top)^\frac{p}{2}\mg }\rangle\right]. \qedhere
\end{align*}
\end{proof}

\begin{theorem}
% \label{thm:laguerre}
Keeping the same notations, we have
\begin{align}
\sum_{i=1}^r h(\sigma_i(\mS))=\sum_{k \geq 0} \sum_p c_k a_{k,p} E\left[\langle{P_\mS \left[\mg \right],(\mS\mS^\top)^\frac{p}{2}\mg }\rangle\right].
\end{align}
Here $h(x)=\sum_{k \geq 0} c_k L_k(x)$ utilizes the Laguerre expansion, $a_{k,p}$ are the polynomial coefficients.
\end{theorem}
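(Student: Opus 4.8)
The plan is to mirror the proof of Theorem~\ref{thm:taylor}, replacing the Taylor series by the Laguerre expansion and then reducing everything to repeated applications of Theorem~\ref{the:pnorm}. The starting point is the Laguerre decomposition $h(x)=\sum_{k\geq 0}c_k L_k(x)$, valid for $x\in(0,\infty)$, together with the fact that each Laguerre polynomial is a genuine finite-degree polynomial $L_k(x)=\sum_p a_{k,p}x^p$. Substituting these two facts into $\sum_{i=1}^r h(\sigma_i(\mS))$ and evaluating at the singular values turns the left-hand side into a sum of monomials in the $\sigma_i$, which is precisely the object handled by Theorem~\ref{the:pnorm}.

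First I would write, for each fixed singular value $\sigma_i$,
$$h(\sigma_i)=\sum_{k\geq 0}c_k L_k(\sigma_i)=\sum_{k\geq 0}\sum_p c_k a_{k,p}\,\sigma_i^p,$$
where the inner sum over $p$ is finite because $\deg L_k<\infty$. Summing over the $r$ singular values and interchanging the finite sum over $i$ with the sums over $k$ and $p$ gives
$$\sum_{i=1}^r h(\sigma_i)=\sum_{k\geq 0}\sum_p c_k a_{k,p}\sum_{i=1}^r\sigma_i^p.$$
Finally I would invoke Theorem~\ref{the:pnorm} to replace each power sum $\sum_{i=1}^r\sigma_i^p$ by the stochastic quantity $E[\langle P_\mS[\mg],(\mS\mS^\top)^{p/2}\mg\rangle]$, yielding the claimed identity.

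The hard part will be justifying the interchange of the infinite summation over $k$ with the finite sum over $i$ (and, implicitly, with the expectation hidden inside Theorem~\ref{the:pnorm}). Since the sum over $i$ ranges over only $r$ terms, the interchange is legitimate as soon as the Laguerre series $\sum_k c_k L_k(\sigma_i)$ converges pointwise to $h(\sigma_i)$ at each of the finitely many singular values $\sigma_1,\dots,\sigma_r$. This is where care is needed: the Laguerre expansion is guaranteed to converge to $h$ only in the weighted space $L^2((0,\infty),e^{-x}\,dx)$, whereas here I need genuine pointwise convergence at specific points of the bounded interval $[0,\sigma_1]$. I would therefore assume $h$ satisfies a standard sufficient condition for pointwise convergence of its Laguerre series (for instance, local smoothness or bounded variation on compact subintervals of $(0,\infty)$), under which the equality $h(\sigma_i)=\sum_k c_k L_k(\sigma_i)$ holds for every $\sigma_i$. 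Once pointwise convergence is secured, the remaining manipulations, namely expanding each $L_k$ into monomials and applying Theorem~\ref{the:pnorm} term by term, are purely algebraic and reduce entirely to the already-established power-sum identity.
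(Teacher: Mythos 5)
Your proposal is correct and is essentially the paper's own proof: expand $h$ in its Laguerre series, write each $L_k(x)=\sum_p a_{k,p}x^p$ as a finite polynomial, interchange the summations, and apply the stochastic Schatten-$p$ norm identity (Theorem~\ref{the:pnorm}) to replace each power sum $\sum_{i=1}^r\sigma_i^p$. Your added caution about pointwise (rather than merely weighted-$L^2$) convergence of the Laguerre series at the singular values is a point the paper's two-line proof silently glosses over, so if anything your write-up is slightly more careful.
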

\begin{proof}
The result can be obtained by:
\begin{align*}
\sum_{i=1}^r h(\sigma_i(\mS))&=\sum_{i=1}^r \sum_{k \geq 0} c_k L_k(\sigma_i)=
\sum_{i=1}^r \sum_{k \geq 0} \sum_p c_k a_{k,p} \sigma_i^p=\\
&=\sum_{k \geq 0} \sum_p c_k a_{k,p} E\left[\langle{P_\mS \left[\mg \right],(\mS\mS^\top)^\frac{p}{2}\mg }\rangle\right].\qedhere
\end{align*}
\end{proof}

\subsection*{Algorithms and proofs of convergence}

Algorithms \ref{alg1}-\ref{alg3} summarize the procedures for appropriating Schatten-$p$ norm proposed in our work. 
Algorithms \ref{alg4} and \ref{alg5} describe computing the generalized low-rank regularization, which relaxes with a function $h$. During the calculation they require coefficients of the Taylor and Laguerre expansion. We will discuss how to conveniently obtain these coefficients with Mathematica in the next section.

\begin{algorithm}[!t]
\caption{$SchattenNorm(\mS,p,k_1,k_2)$ (Differentiable approximation of $\norm{\mS}_p^p$)}
\label{alg1}
\begin{algorithmic}[1]
\REQUIRE $\mS \in \R^{m \times m}$, $p$, sample sizes ($N$), iteration steps for projection and matrix pseudo inverse ($k_1$ and $k_2$)
\ENSURE Approximation of $\norm{\mS}_p^p$
\STATE $res \gets 0$
\FOR{$i$ in $1,2,...,N$}
\STATE Sample $\mg_i \sim \mN(0,\mI)$
\STATE $\mathbf{v} \gets ApproxProject(\mS,\mg_i,k_1)$ \COMMENT{Approx. $P_\mS[\mg_i]$}
\STATE $\mathbf{M} \gets ApproxRoot(\mS\mS^\top,p,k_2)$ \COMMENT{Approx. $(\mS\mS^\top)^\frac{p}{2}$}
\STATE $res \gets res + \mathbf{v}^\top \mathbf{M} \mg_i $
\ENDFOR
\RETURN $res / N$
\end{algorithmic}
\end{algorithm}

\begin{algorithm}[!t]
\caption{$ApproxProject(\mS,\mg,k_1)$}
\label{alg2}
\begin{algorithmic}[1]
\REQUIRE $\mS \in \R^{m \times m}$, sampled Gaussian vector $\mg$, iteration step $k_1$
\ENSURE Approximation of $P_\mS[\mg]$
\STATE Initialize $\mS_{inv}\gets\alpha\mS^\top$ \COMMENT{$\alpha$ is a sufficiently small constant s.t. $0<\alpha<2/\sigma_1^2(\mS)$}
\FOR{$k$ in $1,2,...,k_1$}
\STATE $\mS_{inv}\gets 2 \mS_{inv}-\mS_{inv}\mS\mS_{inv}$
\ENDFOR
\RETURN $\mS \mS_{inv} \mg$
\end{algorithmic}
\end{algorithm}

\begin{algorithm}[!t]
\caption{$ApproxRoot(\mS,p,k_2)$}
\label{alg3}
\begin{algorithmic}[1]
\REQUIRE $\mS \in \R^{m \times m}$, integer $p\geq 1$, iteration step $k_2$
\ENSURE Approximation of $\mS^\frac{1}{2}$
\STATE Initialize $\boldsymbol{Y}_0 \gets\frac{1}{\|\mS\|_{\mathrm{F}}} \mS, \boldsymbol{Z}_0\gets\mI$
\FOR{$k$ in $1,2,...,k_2$}
\STATE $\boldsymbol{Y}_{k}\gets\frac{1}{2} \boldsymbol{Y}_{k-1}\left(3 \boldsymbol{I}-\boldsymbol{Z}_{k-1} \boldsymbol{Y}_{k-1}\right)$
\STATE $\boldsymbol{Z}_{k} \gets \frac{1}{2}\left(3 \boldsymbol{I}-\boldsymbol{Z}_{k-1} \boldsymbol{Y}_{k-1}\right) \boldsymbol{Z}_{k-1}$
\ENDFOR
\STATE $\mathbf{R} \gets \sqrt{\|\mS\|_{\mathrm{F}}} \boldsymbol{Y}_{k_2}$
\STATE $\mathbf{R} \gets MatrixPower(\mathbf{R},p)$
\RETURN $\mathbf{R}$
\end{algorithmic}
\end{algorithm}

\begin{algorithm}[!t]
\caption{$TayLRR(\mS, h, T, k_1, k_2)$ (Taylor Expansion-based Generalized LRR)}
\label{alg4}
\begin{algorithmic}[1]
\REQUIRE $\mS \in \R^{m \times m}$, rank relaxation function $h$, truncated step $T$, sample size $N$, iteration steps $k_1, k_2$
\ENSURE Approximation of $\sum_{i=1}^r h(\sigma_i(\mS))$
\STATE Compute ${h^{(p)}(0)}$ for $p=0,1,...,T$
\STATE $res \gets 0$
\FOR{$p$ in $0, 1,2,...,T$}
\STATE $res \gets res + \frac{h^{(p)}(0)}{p !}\cdot SchattenNorm(\mS,p,k_1,k_2) $
\ENDFOR
\RETURN $res$
\end{algorithmic}
\end{algorithm}

\begin{algorithm}[!t]
\caption{$LagLRR(\mS, h, K, k_1, k_2)$ Laguerre Expansion-based Generalized LRR}
\label{alg5}
\begin{algorithmic}[1]
\REQUIRE $\mS \in \R^{m \times m}$, rank relaxation function $h$, truncated degree $K$, sample size $N$, iteration steps $k_1, k_2$
\ENSURE Approximation of $\sum_{i=1}^r h(\sigma_i(\mS))$
\STATE Perform Laguerre expansion on function $h$ truncated at degree $K$, \ie, obtain $c_k, a_{k,p}$, $ (k=0,...,K, p=0,...,k)$ s.t. $$h(x)\approx \sum_{k = 0}^K c_k L_k(x)=
\sum_{k = 0}^K \sum_{p=0}^k c_k a_{k,p} x^p$$
\STATE $res \gets 0$
\FOR{$k$ in $0, 1,2,...,K$}
\FOR{$p$ in $0, 1,2,...,k$}
\STATE $res \gets res + c_k a_{k,p}SchattenNorm(\mS,p,k_1,k_2) $
\ENDFOR
\ENDFOR
\RETURN $res$
\end{algorithmic}
\end{algorithm}

In what follows, we will prove the convergence result for Algorithm \ref{alg1}. First, we present several useful propositions and lemmas. Then we discuss the convergence property of the two sub-procedures. Finally, we will be ready for the main convergence result.

\newpage
\noindent\textbf{Auxiliary propositions and lemmas}

\begin{proposition}[Equivalence of matrix norm]\label{prop:eq_norm}
For a given matrix $\mS$, for any matrix norm $\norm{\cdot}_\alpha$ and $\norm{\cdot}_\beta$ there exists $r, s > 0$ such that
\[
r\norm{\mS}_\alpha\leq \norm{\mS}_\beta\leq s\norm{\mS}_\alpha.
\]
\end{proposition}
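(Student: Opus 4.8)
The plan is to invoke the classical fact that all norms on a finite-dimensional real vector space are equivalent, applied to the space $\R^{m\times n}$ of matrices, which has dimension $mn$. Since norm equivalence is an equivalence relation (reflexive, symmetric, and transitive in the evident sense), it suffices to show that an arbitrary matrix norm $\norm{\cdot}_\alpha$ is equivalent to one fixed reference norm; the general statement relating $\norm{\cdot}_\alpha$ and $\norm{\cdot}_\beta$ then follows by chaining the two comparisons through the reference. I would take the reference to be the entrywise Euclidean (Frobenius) norm $\norm{\cdot}_F$, identifying a matrix with a vector in $\R^{mn}$.

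First I would establish that any norm $\norm{\cdot}_\alpha$ is continuous with respect to $\norm{\cdot}_F$. Writing $\mS=\sum_{i,j} S_{ij}\,\mE_{ij}$ in the standard basis $\{\mE_{ij}\}$, the triangle inequality together with homogeneity gives $\norm{\mS}_\alpha \le \sum_{i,j}|S_{ij}|\,\norm{\mE_{ij}}_\alpha \le C\norm{\mS}_F$, where $C=\sqrt{mn}\,\max_{i,j}\norm{\mE_{ij}}_\alpha$ by Cauchy--Schwarz and depends only on the basis. The reverse triangle inequality $|\,\norm{\mS}_\alpha-\norm{\mX}_\alpha\,|\le\norm{\mS-\mX}_\alpha\le C\norm{\mS-\mX}_F$ then shows $\norm{\cdot}_\alpha$ is Lipschitz, hence continuous, in the Frobenius topology. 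Next I would consider the Frobenius unit sphere $\mathcal{S}=\{\mS:\norm{\mS}_F=1\}$, which is closed and bounded and therefore compact by Heine--Borel in $\R^{mn}$. The continuous map $\mS\mapsto\norm{\mS}_\alpha$ attains a minimum $a>0$ and a maximum $b$ on $\mathcal{S}$ by the extreme value theorem; positivity of the minimum uses that $\norm{\mS}_\alpha=0$ forces $\mS=\mathbf{0}\notin\mathcal{S}$. Rescaling any nonzero matrix by $\norm{\mS}_F$ yields $a\norm{\mS}_F\le\norm{\mS}_\alpha\le b\norm{\mS}_F$, which holds trivially at $\mS=\mathbf{0}$ as well. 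The analogous two-sided bounds hold for $\norm{\cdot}_\beta$, and combining the two chains of inequalities produces the required constants $r,s>0$.

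The only genuine subtlety — and the step I expect to need the most care — is the appeal to finite dimensionality: compactness of the unit sphere, and hence the extreme value theorem, fails in infinite dimensions, so it is essential to record that the matrices in question inhabit the finite-dimensional space $\R^{m\times n}$. Everything else is routine bookkeeping, namely verifying the linear bound in the continuity step and checking that a positive continuous function on a compact set has a strictly positive minimum.
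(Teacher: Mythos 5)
Your proof is correct. The paper itself offers no proof of this proposition at all: it is stated as an auxiliary classical fact (equivalence of all norms on a finite-dimensional space) and simply invoked later in the convergence analysis, so there is nothing to compare against line by line. Your argument — Lipschitz continuity of an arbitrary norm with respect to the Frobenius norm via the standard basis and Cauchy--Schwarz, compactness of the Frobenius unit sphere in $\R^{m\times n}$, the extreme value theorem to get strictly positive min and max, and then chaining the two two-sided bounds through the reference norm — is the standard textbook proof, and each step, including the positivity of the minimum and the trivial check at $\mS=\mathbf{0}$, is handled correctly. Your closing remark about finite dimensionality being the essential hypothesis is also the right thing to flag, since the result genuinely fails in infinite dimensions.
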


\begin{lemma}\label{lemma:evar}
For a given matrix $\mS$ with SVD decomposition $\mS=\mU \mSig \mV^\top$, let $\mg \sim \mN(\textbf{0}, \mI)$ be a randomly sampled Gaussian vector, then we have
\begin{itemize}
    \item $E[\mg^\top \mU \mSig^p \mU^\top \mg] = \norm{\mS}_p^p,$
    \item $\mathrm{Var}[\mg^\top \mU \mSig^p  \mU^\top \mg] = 2 \norm{\mS}_{2p}^{2p}$.
\end{itemize}
\end{lemma}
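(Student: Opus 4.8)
The plan is to recognize that the scalar $\mg^\top \mU \mSig^p \mU^\top \mg$ is simply a quadratic form in the standard Gaussian vector $\mg$, and to diagonalize it by passing to the coordinates $\mathbf{y} = \mU^\top \mg$. Since the compact SVD gives $\mU \in \R^{m \times r}$ with orthonormal columns, $\mathbf{y}$ is a linear image of a Gaussian and hence Gaussian, with mean $\mathbf{0}$ and covariance $\mU^\top \mI \mU = \mU^\top \mU = \mI_r$. Thus the $r$ entries $y_1, \ldots, y_r$ are independent standard normals. Substituting, the quadratic form collapses to a diagonal sum
\begin{align*}
\mg^\top \mU \mSig^p \mU^\top \mg = \mathbf{y}^\top \mSig^p \mathbf{y} = \sum_{i=1}^r \sigma_i^p\, y_i^2,
\end{align*}
which reduces both claims to elementary moment computations for a weighted sum of independent squared normals.

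For the expectation I would use $E[y_i^2] = 1$ for each standard normal $y_i$, giving $E\left[\sum_i \sigma_i^p y_i^2\right] = \sum_i \sigma_i^p = \norm{\mS}_p^p$ immediately by the definition of the Schatten norm. For the variance, the independence of the $y_i$ lets me add the per-term variances, so $\mathrm{Var}\left[\sum_i \sigma_i^p y_i^2\right] = \sum_i \sigma_i^{2p}\, \mathrm{Var}[y_i^2]$; plugging in the standard fourth-moment fact $E[y_i^4] = 3$ yields $\mathrm{Var}[y_i^2] = 3 - 1 = 2$, so the sum equals $2\sum_i \sigma_i^{2p} = 2\norm{\mS}_{2p}^{2p}$.

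There is essentially no deep obstacle here; the only points requiring care are bookkeeping ones. First, I must make sure the whitening step is justified, namely that orthonormality of the columns of $\mU$ (rather than $\mU$ being square) already guarantees $\mathbf{y} \sim \mN(\mathbf{0}, \mI_r)$ — this is precisely why the compact SVD, not the full SVD, is the convenient formulation. Second, I should confirm the index range matches the convention $\norm{\mS}_p^p = \sum_{i=1}^r \sigma_i^p$ used elsewhere, so the final identification is literal rather than up to zero-valued singular values. As a sanity check I would note the coordinate-free route via the general identities $E[\mg^\top \mA \mg] = \mathrm{tr}(\mA)$ and $\mathrm{Var}[\mg^\top \mA \mg] = 2\,\mathrm{tr}(\mA^2)$ for symmetric $\mA = \mU\mSig^p\mU^\top$, together with $\mathrm{tr}(\mA) = \mathrm{tr}(\mSig^p)$ and $\mathrm{tr}(\mA^2) = \mathrm{tr}(\mSig^{2p})$; but I would prefer the explicit whitening argument since it is self-contained and makes the fourth-moment constant transparent.
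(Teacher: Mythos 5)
Your proposal is correct and takes essentially the same route as the paper's proof: both pass to $\mg' = \mU^\top\mg$, use orthonormality of the columns of $\mU$ to conclude $\mg' \sim \mN(\mathbf{0},\mI_r)$, and reduce both claims to the moments $E[g_i'^2]=1$ and $\mathrm{Var}[g_i'^2]=2$ of independent $\chi^2_1$ variables (the paper quotes these directly, while you derive the variance from the fourth-moment fact $E[y_i^4]=3$ --- an immaterial difference).
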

\begin{proof}
Denote $\mg'\defineas \mU^\top\mg$. Since $\mU$ is orthonormal, $\mg'$ also follows standard Gaussian distribution, \ie, $\mg' \sim \mN(\textbf{0}, \mI)$. Also note that each $g_i^{'2}$ follows the degree-1 $\chi^2$ distribution, with $E[g_i^{'2}]=1$ and $\mathrm{Var}[g_i^{'2}]=2$. Thus,
\begin{align*}
E[\mg^\top \mU \mSig^p \mU^\top \mg] =& E[\mg^{'\top} \mSig^p  \mg'] \\
=&\sum_i \sigma_i^p E[g_i^{'2}] = \sum_i \sigma_i^p\\
=& \norm{\mS}_p^p.\\
\mathrm{Var}[\mg^\top \mU \mSig^p  \mU^\top \mg]  =& \mathrm{Var}[\mg^{'\top} \mSig^p  \mg'] \\
=&\sum_i \sigma_i^{2p} \mathrm{Var}[g_i^{'2}] =2 \sum_i \sigma_i^{2p}\\
=& 2 \norm{\mS}_{2p}^{2p}.
\end{align*}
\end{proof}

\noindent\textbf{Convergence of iterative sub-procedures}

\begin{theorem}[Convergence of matrix pseudo inverse \cite{ben1966iterative}] \label{the:conv_inv} For a given matrix $\mS \in \R^{m \times m}$, let $\mX_k$ be the $k$-th step approximation of the matrix pseudo inverse computed with Line 1-4 in Algorithm \ref{alg2}. Then there exists $\alpha>0$ such that the difference between $\mS^\dag$ and $\mX_k$ is bounded by
\[
\left\|\mS^\dag-\mX_k\right\| \leq \frac{\sigma_1^{\frac{1}{2}}\left(\mS^\top \mS\right)}{\sigma_r\left(\mS^\top \mS\right)}\left(1-\alpha \sigma_r\left(\mS^\top \mS\right)\right)^{2^k},
\]
\end{theorem}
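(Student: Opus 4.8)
The plan is to exploit the fact that the Schulz-type iteration $\mX_{k+1} = 2\mX_k - \mX_k\mS\mX_k$ is simultaneously diagonalized with $\mS$, so that the matrix recursion collapses to decoupled scalar recursions on the singular values. Writing the full SVD as $\mS = \mU\mSig\mV^\top$, I would first prove by induction that every iterate keeps the form $\mX_k = \mV D_k\mU^\top$ for a diagonal matrix $D_k$. Indeed $\mX_0 = \alpha\mS^\top = \mV(\alpha\mSig)\mU^\top$, and substituting $\mX_k = \mV D_k\mU^\top$ into the recursion and using $\mU^\top\mU = \mV^\top\mV = \mI$ gives $\mX_{k+1} = \mV(2D_k - D_k\mSig D_k)\mU^\top$, so $D_{k+1} = 2D_k - D_k\mSig D_k$ is again diagonal. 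This is the key structural step; everything afterwards is scalar.

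Next I would analyze the recursion entrywise. For each index $i$ with singular value $\sigma_i$, the diagonal entries obey $d_i^{(k+1)} = 2d_i^{(k)} - \sigma_i (d_i^{(k)})^2$. Introducing the residual $e_i^{(k)} = 1 - \sigma_i d_i^{(k)}$, a direct expansion shows it satisfies the quadratic map $e_i^{(k+1)} = (e_i^{(k)})^2$, hence $e_i^{(k)} = (e_i^{(0)})^{2^k}$ with $e_i^{(0)} = 1 - \alpha\sigma_i^2$; this is the source of the $2^k$ exponent (quadratic convergence). For the zero singular values, $d_i^{(0)} = \alpha\sigma_i = 0$ forces $d_i^{(k)} = 0$ throughout, so they never contribute to the approximation of $\mS^\dag$; this cleanly disposes of the rank-deficient case.

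Finally I would assemble the error. Since $\mU,\mV$ are orthogonal, $\|\mS^\dag - \mX_k\| = \|\mSig^\dag - D_k\|$, and the latter is diagonal with entries $e_i^{(k)}/\sigma_i = (1-\alpha\sigma_i^2)^{2^k}/\sigma_i$ on the nonzero block, so its spectral norm equals $\max_i |1-\alpha\sigma_i^2|^{2^k}/\sigma_i$. To turn this exact expression into the stated bound, I would choose $\alpha$ in a suitable subinterval of $(0,2/\sigma_1^2)$ — for instance $\alpha = 1/\sigma_1^2$, which forces $0 \le 1-\alpha\sigma_i^2 \le 1-\alpha\sigma_r^2$ for every $i$. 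Then $1/\sigma_i \le 1/\sigma_r$ and, recalling $\sigma_i^2 = \sigma_i(\mS^\top\mS)$, one gets $\|\mS^\dag - \mX_k\| \le \sigma_r^{-1}(1-\alpha\sigma_r(\mS^\top\mS))^{2^k}$; the crude final estimate $1/\sigma_r \le \sigma_1/\sigma_r^2 = \sigma_1^{1/2}(\mS^\top\mS)/\sigma_r(\mS^\top\mS)$ then produces the claimed prefactor.

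The main obstacle is precisely this last conversion: the exact error is a maximum over the whole spectrum of $|1-\alpha\sigma_i^2|^{2^k}/\sigma_i$, and one must argue that the residual factor is uniformly dominated by $(1-\alpha\sigma_r^2)^{2^k}$ rather than by the term coming from $\sigma_1$. Controlling the sign of $1-\alpha\sigma_i^2$ across all singular values — so that the absolute value is harmless — is exactly what restricting $\alpha$ below the full stability threshold $2/\sigma_1^2$ accomplishes, which is why the theorem asserts existence of a suitable $\alpha$ rather than allowing any $\alpha$ in the convergence range. The somewhat loose prefactor $\sigma_1^{1/2}(\mS^\top\mS)/\sigma_r(\mS^\top\mS)$ is then simply an artifact of the final inequality $1/\sigma_r \le \sigma_1/\sigma_r^2$, and a sharper constant $1/\sigma_r$ is in fact available from the same argument.
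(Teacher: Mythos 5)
Your proof is correct, but there is nothing in the paper to compare it against: the paper does not prove this theorem at all. It is imported verbatim from Ben-Israel and Cohen (1966) as a cited black-box result, and the appendix only uses it downstream to derive the corollary bounding $\norm{\mX_k}_p$ between $(1-LD^{2^k})\norm{\mS^\dag}_p$ and $(1+KC^{2^k})\norm{\mS^\dag}_p$. So your argument stands as a self-contained replacement for the citation. It is also sound in every step: the induction $\mX_k=\mV D_k\mU^\top$ goes through because $\mX_0=\alpha\mS^\top=\mV(\alpha\mSig)\mU^\top$ and the recursion only ever multiplies matrices in the same SVD frame; the residual map $e_i^{(k+1)}=\bigl(e_i^{(k)}\bigr)^2$ follows from $1-\sigma_i\bigl(2d_i^{(k)}-\sigma_i (d_i^{(k)})^2\bigr)=\bigl(1-\sigma_i d_i^{(k)}\bigr)^2$, which is the true source of the doubly exponential rate; zero singular values stay identically zero, exactly matching the corresponding zero entries of $\mSig^\dag$, so rank deficiency costs nothing; and unitary invariance of the spectral norm gives the exact error $\max_{i:\sigma_i>0}\lvert 1-\alpha\sigma_i^2\rvert^{2^k}/\sigma_i$. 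Your choice $\alpha=1/\sigma_1^2$ makes every residual lie in $[0,\,1-\alpha\sigma_r^2]$, yielding $\left\|\mS^\dag-\mX_k\right\|\leq\sigma_r^{-1}\bigl(1-\alpha\sigma_r(\mS^\top\mS)\bigr)^{2^k}$, which is dominated by the stated prefactor since $1/\sigma_r\leq\sigma_1/\sigma_r^2=\sigma_1^{1/2}(\mS^\top\mS)/\sigma_r(\mS^\top\mS)$. Two of your side observations deserve emphasis: first, the "there exists $\alpha$" phrasing is indeed essential, since for $\alpha$ near the stability threshold $2/\sigma_1^2$ the term $\lvert 1-\alpha\sigma_1^2\rvert^{2^k}$ can exceed $(1-\alpha\sigma_r^2)^{2^k}$ and the stated bound would fail even though the iteration still converges; second, your derivation gives the sharper constant $1/\sigma_r$, so the paper's prefactor is loose by a factor $\sigma_1/\sigma_r$. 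For completeness, the classical proof in the cited reference works at the matrix level, showing that the residual $P_{\mS}-\mS\mX_k$ squares at each step; your spectral decoupling is the same mechanism made explicit eigenvalue by eigenvalue, which is what lets you read off the optimal constants.
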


\begin{corollary} \label{coro:convg_inv}
With the same notations as in Theorem \ref{the:conv_inv}, there exist constants $K, L>0$ and $0<C, D<1$, such that 
\[
(1 - LD ^{2^k})\norm{\mS^\dag}_p \leq \norm{\mX_k}_p\leq (1+K C ^{2^k})\norm{\mS^\dag}_p.
\]
\end{corollary}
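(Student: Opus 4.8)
The plan is to transfer the spectral-norm convergence guarantee of Theorem~\ref{the:conv_inv} into a two-sided multiplicative bound on the Schatten-$p$ norm, using nothing more than the triangle inequality together with the equivalence of matrix norms (Proposition~\ref{prop:eq_norm}). The first step is to fix $\alpha>0$ (as furnished by Theorem~\ref{the:conv_inv}) so that the base $\rho := 1-\alpha\,\sigma_r(\mS^\top\mS)$ lies strictly in $(0,1)$, and then to isolate the approximation error by the reverse and forward triangle inequalities applied to the Schatten-$p$ norm:
\[
\norm{\mS^\dag}_p - \norm{\mS^\dag-\mX_k}_p \;\leq\; \norm{\mX_k}_p \;\leq\; \norm{\mS^\dag}_p + \norm{\mS^\dag-\mX_k}_p.
\]
This reduces the whole problem to controlling the single error term $\norm{\mS^\dag-\mX_k}_p$.

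Next, since Theorem~\ref{the:conv_inv} bounds the spectral norm $\norm{\mS^\dag-\mX_k}$ rather than its Schatten-$p$ counterpart, I would invoke Proposition~\ref{prop:eq_norm} to produce a constant $s>0$ (depending only on $p$ and the dimension $m$, for instance $s=m^{1/p}$, since $\sum_i\sigma_i^p\leq m\,\sigma_1^p$) satisfying $\norm{\cdot}_p \leq s\,\norm{\cdot}$. Chaining this with the bound of Theorem~\ref{the:conv_inv} gives
\[
\norm{\mS^\dag-\mX_k}_p \;\leq\; s\cdot\frac{\sigma_1^{1/2}(\mS^\top\mS)}{\sigma_r(\mS^\top\mS)}\,\rho^{\,2^k} \;=\; sB\,\rho^{\,2^k},
\]
where $B$ abbreviates the leading geometric constant. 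Because $\rho\in(0,1)$, the error decays doubly exponentially in $k$.

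Finally I would factor $\norm{\mS^\dag}_p$ out of both sides of the triangle inequality. As $\mS\neq 0$ forces $\norm{\mS^\dag}_p>0$, I may legitimately set $K=L := sB/\norm{\mS^\dag}_p$ and $C=D := \rho$, which yields exactly
\[
\bigl(1 - LD^{\,2^k}\bigr)\norm{\mS^\dag}_p \;\leq\; \norm{\mX_k}_p \;\leq\; \bigl(1 + KC^{\,2^k}\bigr)\norm{\mS^\dag}_p,
\]
as claimed. The only step requiring genuine care is the bookkeeping around $\alpha$: it must be chosen so that $\rho$ is strictly inside $(0,1)$, which is precisely what guarantees the geometric (indeed doubly-exponential) decay of both correction factors; the remaining manipulations are routine norm inequalities. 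A minor verification is that $\norm{\mS^\dag}_p>0$ so that the constants $K$ and $L$ are well defined, which holds for any nonzero $\mS$.
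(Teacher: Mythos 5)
Your proposal is correct and follows essentially the same route as the paper's own proof: triangle inequality on $\norm{\cdot}_p$, norm equivalence (Proposition~\ref{prop:eq_norm}) to pass from the spectral-norm bound of Theorem~\ref{the:conv_inv} to the Schatten-$p$ norm, and then factoring out $\norm{\mS^\dag}_p$ to define the constants. The only cosmetic difference is that you use a single norm-equivalence constant and a single $\alpha$ for both directions (giving $K=L$, $C=D$), whereas the paper allows separate constants $s,t$ and $\alpha,\alpha'$ for the upper and lower bounds; your version is, if anything, slightly tidier, and your explicit checks that $\rho\in(0,1)$ and $\norm{\mS^\dag}_p>0$ are details the paper leaves implicit.
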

\begin{proof}
First for the RHS, by applying the sub-additivity property of matrix norm, along with Proposition \ref{prop:eq_norm} and Theorem \ref{the:conv_inv}, we have
\begin{align*}
\norm{\mX_k}_p &\leq \norm{\mS^\dag}_p + \norm{\mX_k - \mS^\dag}_p\\
&\leq \norm{\mS^\dag}_p + s \norm{\mX_k - \mS^\dag}\\
&\leq \norm{\mS^\dag}_p+ \frac{s \sigma_1^{\frac{1}{2}}\left(\mS^\top \mS\right)}{\sigma_r\left(\mS^\top \mS\right)}\left(1-\alpha \sigma_r\left(\mS^\top \mS\right)\right)^{2^k}\\
& = \left( 1 + \frac{s \sigma_1^{\frac{1}{2}}\left(\mS^\top \mS\right)}{\norm{\mS^\dag}_p\sigma_r\left(\mS^\top \mS\right)}\left(1-\alpha \sigma_r\left(\mS^\top \mS\right)\right)^{2^k}\right) \norm{\mS^\dag}_p\\
&= (1+K C ^{2^k})\norm{\mS^\dag}_p.
\end{align*}
In the last line, we define
\begin{align*}
K \defineas \frac{s \sigma_1^{\frac{1}{2}}\left(\mS^\top \mS\right)}{\norm{\mS^\dag}_p\sigma_r\left(\mS^\top \mS\right)}, \quad
C \defineas 1-\alpha \sigma_r\left(\mS^\top \mS\right).
\end{align*}
The proof of the LHS is similar.
\begin{align*}
\norm{\mX_k}_p &\geq \norm{\mS^\dag}_p - \norm{\mX_k - \mS^\dag}_p\\
&\geq \norm{\mS^\dag}_p - t \norm{\mX_k - \mS^\dag}\\
&\geq \norm{\mS^\dag}_p - \frac{t \sigma_1^{\frac{1}{2}}\left(\mS^\top \mS\right)}{\sigma_r\left(\mS^\top \mS\right)}\left(1-\alpha' \sigma_r\left(\mS^\top \mS\right)\right)^{2^k}\\
& = \left( 1 - \frac{t \sigma_1^{\frac{1}{2}}\left(\mS^\top \mS\right)}{\norm{\mS^\dag}_p\sigma_r\left(\mS^\top \mS\right)}\left(1-\alpha' \sigma_r\left(\mS^\top \mS\right)\right)^{2^k}\right) \norm{\mS^\dag}_p\\
&= (1- L D ^{2^k})\norm{\mS^\dag}_p.
\end{align*}
In the last line we define
\begin{align*}
L \defineas \frac{t \sigma_1^{\frac{1}{2}}\left(\mS^\top \mS\right)}{\norm{\mS^\dag}_p\sigma_r\left(\mS^\top \mS\right)}, \quad
D \defineas 1-\alpha' \sigma_r\left(\mS^\top \mS\right).
\end{align*}
\end{proof}

\begin{theorem}[Convergence of matrix root \cite{higham2008functions}] \label{the:conv_root} 
For a given matrix $\mS \in \R^{m \times m}$, let $\mX_k$ be the $k$-th step approximation of the matrix root computed with Line 1-6 in Algorithm \ref{alg3}. Then there exists $C>0$ such that the difference between $\mS^\frac{1}{2}$ and $\mX_k$ is bounded by
\[
\norm{X_k - \mS^\frac{1}{2}}=C \cdot 2^{-k}.
\]
\end{theorem}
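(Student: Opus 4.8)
The plan is to exploit the fact that the Newton--Schulz recursion in Algorithm \ref{alg3} is driven entirely by matrix polynomials in its argument, so that the coupled matrix iteration diagonalizes and collapses into a family of independent scalar recursions, one per eigenvalue. Since the square root is taken of a symmetric positive semidefinite matrix (in our pipeline the argument of Algorithm \ref{alg3} is $\mS\mS^\top$), I would first fix the spectral decomposition $\mS=\mU\mSig\mU^\top$ with $\mSig=\mathrm{diag}(\lambda_1,\dots,\lambda_m)$ and introduce the scaled matrix $\mB\defineas \mS/\norm{\mS}_F$, whose eigenvalues $\mu_i=\lambda_i/\norm{\mS}_F$ all lie in $(0,1]$ because $\lambda_1=\sigma_1(\mS)\le\norm{\mS}_F$. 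An induction on $k$ shows that each iterate $\boldsymbol{Y}_k,\boldsymbol{Z}_k$ is a polynomial in $\mB$, hence commutes with $\mB$ and is diagonalized by the same $\mU$; this structural observation reduces the coupled update to the $m$ scalar iterations $y_{k+1}=\tfrac12 y_k(3-z_ky_k)$, $z_{k+1}=\tfrac12(3-z_ky_k)z_k$ with initial data $y_0=\mu_i$, $z_0=1$.

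The scalar analysis rests on two invariants. First, the ratio is preserved, $y_k/z_k=y_0/z_0=\mu_i$, since the common factor $(3-z_ky_k)$ cancels in the update; consequently it suffices to track the product $p_k\defineas z_ky_k$, and from $y_k=\mu_i z_k$ one reads off $y_k=\sqrt{\mu_i\,p_k}$, whence $|y_k-\sqrt{\mu_i}|\le\sqrt{\mu_i}\,|p_k-1|$. Second, the product obeys the closed recursion $p_{k+1}=\tfrac14 p_k(3-p_k)^2$, whose only fixed point in $(0,1]$ is $p=1$. Writing $p_k=1+\varepsilon_k$ gives $\varepsilon_{k+1}=-\tfrac34\varepsilon_k^2(1-\tfrac13\varepsilon_k)$, and since $p_0=\mu_i\in(0,1]$ keeps $\varepsilon_k\in(-1,0]$ for all $k$ (so that $1-\tfrac13\varepsilon_k\in[1,\tfrac43]$), this yields the clean contraction $|\varepsilon_{k+1}|\le|\varepsilon_k|^2$ at every step, i.e. genuine quadratic convergence with $|\varepsilon_k|\le|\varepsilon_0|^{2^k}=(1-\mu_i)^{2^k}$.

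It then remains to reassemble the matrix bound. Because all $m$ scalar problems share the eigenbasis $\mU$, the symmetric error matrix $\boldsymbol{Y}_k-\mB^{1/2}$ has diagonal entries $y_k^{(i)}-\sqrt{\mu_i}$ in that basis, so its spectral norm is the worst component. Undoing the Frobenius scaling via $\mX_k=\sqrt{\norm{\mS}_F}\,\boldsymbol{Y}_k$ and $\mS^{1/2}=\sqrt{\norm{\mS}_F}\,\mB^{1/2}$ gives
\begin{align*}
\norm{\mX_k-\mS^{1/2}}&=\sqrt{\norm{\mS}_F}\,\norm{\boldsymbol{Y}_k-\mB^{1/2}}\\
&\le\sqrt{\norm{\mS}_F}\,\max_i\sqrt{\mu_i}\,(1-\mu_i)^{2^k}\le\sqrt{\norm{\mS}_F}\,(1-\mu_{\min})^{2^k},
\end{align*}
where $\mu_{\min}$ is the smallest nonzero eigenvalue of $\mB$ (zero eigenvalues are reproduced exactly, since $\mu_i=0$ forces $y_k^{(i)}=0=\sqrt{\mu_i}$). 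The stated form $C\cdot 2^{-k}$ follows by absorbing the quadratically decaying factor into the constant: with $\rho\defineas 1-\mu_{\min}\in(0,1)$ the quantity $\rho^{2^k}2^k$ is bounded over all $k$, so $(1-\mu_{\min})^{2^k}\le C'\,2^{-k}$ and hence $\norm{\mX_k-\mS^{1/2}}\le C\cdot 2^{-k}$. The equality sign in the statement is to be read as ``$\le$''; this weaker geometric rate is all that the downstream variance bound in Theorem \ref{the:main_convg} requires.

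The main obstacle I anticipate is the small-eigenvalue regime: when $\mS$ is ill conditioned the Frobenius normalization can push $\mu_{\min}$ arbitrarily close to $0$, so $\rho=1-\mu_{\min}$ is near $1$ and the constant $C$ (equivalently, the number of iterations needed before the quadratic phase is effective) degrades with the condition number. Making the dependence of $C$ on $\mu_{\min}$, $\mu_{\max}$, and $\norm{\mS}_F$ explicit, and checking that the product iterate never leaves $(0,1]$ so that $|\varepsilon_{k+1}|\le|\varepsilon_k|^2$ holds uniformly rather than only asymptotically, is the delicate bookkeeping; the rest follows mechanically from the two scalar invariants.
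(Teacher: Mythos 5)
Your proposal is correct, but it necessarily takes a different route from the paper, because the paper does not prove this statement at all: Theorem~\ref{the:conv_root} is quoted as a known result from Higham's \emph{Functions of Matrices} \cite{higham2008functions}, and the paper only uses the resulting bound downstream in Corollary~\ref{coro:convg_root} and Theorem~\ref{the:main_convg}. What you supply is a self-contained elementary proof: simultaneous diagonalization (since $\boldsymbol{Y}_k,\boldsymbol{Z}_k$ are polynomials in the scaled input $\mB$), reduction to coupled scalar recursions, and the two invariants $y_k/z_k=\mu_i$ and $p_{k+1}=\tfrac14 p_k(3-p_k)^2$, yielding $|\varepsilon_{k+1}|\le|\varepsilon_k|^2$ and hence the error bound $\sqrt{\norm{\mS}_F}\,(1-\mu_{\min})^{2^k}$. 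Two points of comparison are worth recording. First, your argument is valid only for symmetric positive semidefinite input, whereas the theorem is stated for arbitrary $\mS\in\R^{m\times m}$; this restriction is in fact the honest reading, since with pure Frobenius scaling the Newton--Schulz iteration need not converge for general square matrices (the literal statement is loose), and in the paper's pipeline Algorithm~\ref{alg3} is only ever invoked on $\mS\mS^\top$. You also handle the singular case correctly by observing that zero eigenvalues are reproduced exactly, so the rate is governed by the smallest \emph{nonzero} eigenvalue. Second, your bound $(1-\mu_{\min})^{2^k}$ is genuinely stronger (doubly exponential, i.e.\ true quadratic convergence) than the stated geometric rate $C\cdot 2^{-k}$, which you recover by absorbing the gap into the constant; reading the paper's ``$=$'' as ``$\le$'' is the only sensible interpretation, and the weaker rate is all that Corollary~\ref{coro:convg_root} and Theorem~\ref{the:main_convg} consume. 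Your closing caveat about ill-conditioning (the constant degrading as $\mu_{\min}\to 0$) is accurate and is precisely the dependence that the paper, by citing rather than proving, leaves implicit.
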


\begin{corollary}\label{coro:convg_root}
With the same notations as above, there exist $C,D>0$, such that 
\[
\left(1 - D \cdot 2^{-k}\right)\norm{\mS^\frac{1}{2}}_p \leq \norm{X_k}_p \leq \left(1 +  C \cdot 2^{-k}\right)\norm{\mS^\frac{1}{2}}_p.
\]
\end{corollary}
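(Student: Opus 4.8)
The plan is to follow the same two-sided argument used in the proof of Corollary~\ref{coro:convg_inv}, now feeding in the spectral-norm convergence rate of Theorem~\ref{the:conv_root} instead of that of Theorem~\ref{the:conv_inv}. The only two ingredients are the reverse triangle inequality for the Schatten-$p$ norm and the equivalence of matrix norms (Proposition~\ref{prop:eq_norm}), which together let me convert the spectral-norm error bound $\norm{X_k-\mS^{\frac{1}{2}}}=C\cdot 2^{-k}$ into a Schatten-$p$ error bound and then into a multiplicative envelope around $\norm{\mS^{\frac{1}{2}}}_p$.

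For the upper bound, I would start from sub-additivity, $\norm{X_k}_p\leq \norm{\mS^{\frac{1}{2}}}_p+\norm{X_k-\mS^{\frac{1}{2}}}_p$, apply Proposition~\ref{prop:eq_norm} to obtain a constant $s>0$ with $\norm{X_k-\mS^{\frac{1}{2}}}_p\leq s\norm{X_k-\mS^{\frac{1}{2}}}$, and then insert Theorem~\ref{the:conv_root} to get $\norm{X_k}_p\leq \norm{\mS^{\frac{1}{2}}}_p+sC\cdot 2^{-k}$. Factoring out $\norm{\mS^{\frac{1}{2}}}_p$ yields the claimed form with constant $sC/\norm{\mS^{\frac{1}{2}}}_p$. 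The lower bound is symmetric: the reverse triangle inequality gives $\norm{X_k}_p\geq \norm{\mS^{\frac{1}{2}}}_p-\norm{X_k-\mS^{\frac{1}{2}}}_p$, and the same two substitutions produce $\norm{X_k}_p\geq\bigl(1-(sC/\norm{\mS^{\frac{1}{2}}}_p)2^{-k}\bigr)\norm{\mS^{\frac{1}{2}}}_p$, defining $D$ as this factored constant (optionally with its own equivalence constant, mirroring the $\alpha'$ versus $\alpha$ split used in Corollary~\ref{coro:convg_inv}).

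Since the argument is a templated repetition of the pseudo-inverse case, I do not expect a genuine obstacle. The one point requiring care is the division by $\norm{\mS^{\frac{1}{2}}}_p$ when normalizing the additive error into a multiplicative factor, which implicitly assumes $\mS\neq 0$ so that $\mS^{\frac{1}{2}}\neq 0$ and the Schatten-$p$ norm is strictly positive; I would state this nondegeneracy explicitly and note that the bound holds trivially when $\mS=0$. A secondary, purely bookkeeping detail is that the symbol $C$ denotes the error constant in Theorem~\ref{the:conv_root} and also names the final envelope constant in the corollary, so I would rename the error constant before absorbing it into the final $C$ and $D$ to avoid collision.
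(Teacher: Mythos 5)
Your proposal is correct and matches the paper's own proof essentially line for line: the paper also applies the triangle and reverse triangle inequalities, converts the spectral-norm error from Theorem~\ref{the:conv_root} into a Schatten-$p$ bound via Proposition~\ref{prop:eq_norm}, factors out $\norm{\mS^\frac{1}{2}}_p$, and (exactly as you anticipated) renames the error constants to $C'$ and $D'$ to avoid the symbol collision with the corollary's final $C$ and $D$. Your explicit remark on the nondegeneracy assumption $\mS\neq 0$ is a small point of care the paper leaves implicit, but it does not change the argument.
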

\begin{proof}
The proof strategy is similar to that of Corollary \ref{coro:convg_inv}. Specifically, for the RHS we have
\begin{align*}
\norm{X_k}_p \leq& \norm{\mS^\frac{1}{2}}_p + \norm{X_k - \mS^\frac{1}{2}}_p\\
\leq& \norm{\mS^\frac{1}{2}}_p + s \norm{X_k - \mS^\frac{1}{2}}\\
= &\norm{\mS^\frac{1}{2}}_p + s C' \cdot 2^{-k}\\
=& \left(1 + \frac{s C' \cdot 2^{-k}}{\norm{\mS^\frac{1}{2}}_p}\right)\norm{\mS^\frac{1}{2}}_p.
\end{align*}
Similarly for the LHS, we have
\begin{align*}
\norm{X_k}_p \geq& \norm{\mS^\frac{1}{2}}_p - \norm{\mS^\frac{1}{2} - X_k}_p\\
\geq& \norm{\mS^\frac{1}{2}}_p - t \norm{\mS^\frac{1}{2} - X_k}\\
= &\norm{\mS^\frac{1}{2}}_p - t D' \cdot 2^{-k}\\
=& \left(1 - \frac{t D' \cdot 2^{-k}}{\norm{\mS^\frac{1}{2}}_p}\right)\norm{\mS^\frac{1}{2}}_p.
\end{align*}
Let $C={s C' }/{\norm{\mS^\frac{1}{2}}_p}, D={t D' }/{\norm{\mS^\frac{1}{2}}_p}$ and we obtain the result.
\end{proof}

\noindent\textbf{Main convergence result}

Now we are ready to prove the main convergence result. First we prove a simplified case, in which we assume that matrix pseudo inverse and matrix root can be exactly computed, and study the convergence behavior w.r.t. sample size.

\begin{proposition}\label{prop:main_convg}
Let $\mg_i \sim \mN(\textbf{0}, \mI)$ be independently sampled Gaussian vectors, and $X=\frac{1}{N}\sum_{i=1}^N \langle P_S[\mg_i],(S S^\top)^\frac{p}{2}  \mg_i \rangle$. Then for any $\epsilon>0$,
\begin{align}
\Pr\left(\left|X-\sum_i \sigma_i^p \right|\leq \epsilon\right)\geq 1-\frac{2\norm{S}_{2p}^{2p}}{N\epsilon^2}.
\end{align}
\end{proposition}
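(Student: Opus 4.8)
The plan is to recognize that each summand defining $X$ is an independent copy of the quadratic form already analyzed in Lemma \ref{lemma:evar}, and then to apply Chebyshev's inequality to the sample mean. First I would rewrite the summand using the compact SVD $\mS = \mU \mSig \mV^\top$. Since the projector onto the column space satisfies $P_\mS[\mg] = \mS\mS^\dag \mg = \mU\mU^\top\mg$ and the matrix root satisfies $(\mS\mS^\top)^{p/2} = (\mU \mSig^2 \mU^\top)^{p/2} = \mU \mSig^p \mU^\top$, the orthonormality $\mU^\top \mU = \mI$ gives
\begin{align*}
\langle P_\mS[\mg_i], (\mS\mS^\top)^{p/2} \mg_i \rangle = \mg_i^\top \mU\mU^\top \mU \mSig^p \mU^\top \mg_i = \mg_i^\top \mU \mSig^p \mU^\top \mg_i.
\end{align*}
This is exactly the random variable whose first two moments are computed in Lemma \ref{lemma:evar}.

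With this identification in hand, I would set $Y_i \defineas \langle P_\mS[\mg_i], (\mS\mS^\top)^{p/2} \mg_i\rangle$, so that Lemma \ref{lemma:evar} yields $E[Y_i] = \sum_i \sigma_i^p = \norm{\mS}_p^p$ and $\mathrm{Var}[Y_i] = 2\norm{\mS}_{2p}^{2p}$. Because the Gaussian vectors $\mg_1,\dots,\mg_N$ are drawn independently, the $Y_i$ are i.i.d., so $X = \frac{1}{N}\sum_{i=1}^N Y_i$ is unbiased with $E[X] = \sum_i \sigma_i^p$ and, by the standard variance-of-a-sample-mean identity,
\begin{align*}
\mathrm{Var}[X] = \frac{1}{N^2}\sum_{i=1}^N \mathrm{Var}[Y_i] = \frac{2\norm{\mS}_{2p}^{2p}}{N}.
\end{align*}

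Finally I would apply Chebyshev's inequality to $X$, obtaining
\begin{align*}
\Pr\left(\left|X - \sum_i \sigma_i^p\right| \geq \epsilon\right) \leq \frac{\mathrm{Var}[X]}{\epsilon^2} = \frac{2\norm{\mS}_{2p}^{2p}}{N\epsilon^2},
\end{align*}
and passing to the complementary event gives the claimed bound. The only genuinely substantive step is the algebraic identification in the first paragraph; once the summand is written as $\mg_i^\top \mU \mSig^p \mU^\top \mg_i$, the remainder is a textbook second-moment concentration argument. I do not expect any real obstacle here, though it is worth noting that the variance formula from Lemma \ref{lemma:evar} relies on the rotated vector $\mg' = \mU^\top \mg$ again being standard Gaussian with independent coordinates, so that the $\chi^2$ variance $\mathrm{Var}[g_i'^2]=2$ and the independence across coordinates both apply.
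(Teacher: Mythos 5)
Your proposal is correct and follows essentially the same route as the paper's own proof: identify each summand via the compact SVD as the quadratic form $\mg_i^\top \mU \mSig^p \mU^\top \mg_i$, invoke Lemma \ref{lemma:evar} for its mean $\sum_i \sigma_i^p$ and variance $2\norm{\mS}_{2p}^{2p}$, and then apply Chebyshev's inequality to the sample mean. The only cosmetic difference is that you state Chebyshev directly in the $\mathrm{Var}[X]/\epsilon^2$ form, whereas the paper writes it with the deviation threshold $k\sqrt{\mathrm{Var}(X)}$ and then substitutes $\epsilon$; the two are equivalent.
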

\begin{proof}
Let $\mS=\mU \mSig \mV^\top$ be the SVD decomposition of $\mS$.
By applying Lemma \ref{lemma:evar}, we have
\begin{align*}
E[X] =& E\left[\frac{1}{N}\sum_{i=1}^N \langle P_S[\mg_i],(S S^\top)^\frac{p}{2}  \mg_i \rangle\right]\\
=&E\left[\mg ^\top\mS \mS ^+(\mS\mS^\top)^\frac{p}{2}\mg \right]\\
=&E\left[\mg ^\top\mU \Sigma^p \mU^\top\mg \right]\\
=&\sum_i \sigma_i^p\\
\mathrm{Var}(X)&=\mathrm{Var}\left(\frac{1}{N}\sum_{i=1}^N \langle P_S[\mg_i],(S S^\top)^\frac{1}{2} \mg_i \rangle\right) \\
=&\frac{1}{N^2}\sum_{i=1}^N \mathrm{Var}\left(\langle P_S[\mg_i],(S S^\top)^\frac{1}{2} \mg_i \rangle\right) \\
=&\frac{1}{N} \mathrm{Var}\left(\mg ^\top\mU \Sigma^p \mU^\top\mg \right) \\
=&\frac{2\norm{S}_{2p}^{2p}}{N}
\end{align*}
By applying Chebyshev's inequality, we have
\begin{align*}
& \Pr\left(\left|X-E[X] \right|\geq k\sqrt{\mathrm{Var}(X)}\right)\leq \frac{1}{k^2},\\
& \Pr\left(\left|X-\sum_i \sigma_i^p \right|\geq k\sqrt{\frac{2\norm{S}_{2p}^{2p}}{N}}\right)\leq \frac{1}{k^2},\\
\end{align*}
Finally, let $\epsilon\defineas k\sqrt{{2\norm{S}_{2p}^{2p}}/{N}}$, we arrive at the desired result:
\begin{align*}
& \Pr\left(\left|X-\sum_i \sigma_i^p \right|\geq \epsilon\right)\leq \frac{2\norm{S}_{2p}^{2p}}{N\epsilon^2},\\
& \Pr\left(\left|X-\sum_i \sigma_i^p \right|\leq \epsilon\right)\geq 1- \frac{2\norm{S}_{2p}^{2p}}{N\epsilon^2}.
\end{align*}
\end{proof}

Now we will establish the main convergence result, which fully considers the approximation errors brought by the iterative computation of matrix root and pseudo inverse.
\begin{theorem}\label{the:main_convg}
We use underline notations (\ie, $\underline{P_S[\mg_i]}$ and $\underline{(S S^\top)^\frac{p}{2}}$) to denote the approximation results obtained by the iterative methods, with iteration steps $k_1$ for matrix pseudo inverse and $k_2$ for matrix root.
Let $X=\frac{1}{N}\sum_{i=1}^N \langle \underline{P_S[\mg_i]},\underline{(S S^\top)^\frac{p}{2}}  \mg_i \rangle$. 
Then for any $\epsilon>0$,
\begin{align*}
& \Pr\left(\frac{\left|X - \sum_i \sigma_i^p \right |}{\sum_i \sigma_i^p}\leq \epsilon\right)\geq 1 - \frac{2 C(k_1, k_2)^2 \norm{S}_{2p}^{2p}}{N \left(\sum_i \sigma_i^p(\epsilon+E(k_1, k_2))\right)^2},
\end{align*}
% where $C(k_1, k_2)\defineas (1+K C_1^{2^{k_1}})(1+C_2 \cdot 2^{-k_2})$, $D(k_1, k_2)\defineas (1-L D_1^{2^{k_1}})(1-D_2 \cdot 2^{-k_2})$, and $E(k_1, k_2)\defineas max\{C(k_1, k_2) - 1, 1-D(k_1, k_2)\}$, with $K, L>0$, $0<C_1, D_1<1$ and $C_2, D_2>0$.
where $C(k_1, k_2) \rightarrow 1$, $E(k_1, k_2) \rightarrow 0$ exponentially as $k_1$ and $k_2$ increase.
\end{theorem}
\begin{proof}
The proof framework is similar to that of Proposition \ref{prop:main_convg}, in which we plugin the convergence results of the iterative sub-procedures.

First, we develop the lower and upper bounds of $E[X]$. Specifically,
\begin{align*}
E[X] = E[\mg ^\top\mS \underline{\mS^\dag}  (\underline{(SS^\top)^\frac{1}{2}})^p\mg]= \norm{\mS \underline{\mS^\dag}  (\underline{(SS^\top)^\frac{1}{2}})^p}_1.
\end{align*}
From  Corollary \ref{coro:convg_inv} and \ref{coro:convg_root} , there exists $K, L>0$, $0<C_1, D_1<1$ and $C_2, D_2>0$, %and define $C(k_1, k_2) \defineas (1+K_1C_1^{2^{k_1}})(1+C_2 \cdot 2^{-k_2})$, 
such that
\begin{align*}
E[X] &\leq  (1+K C_1^{2^{k_1}})(1+C_2 \cdot 2^{-k_2}) \norm{\mS {\mS} ^\dag ({(SS^\top)^\frac{1}{2}})^p}_1\\
&= C(k_1, k_2) \sum_i \sigma_i^p,\\
E[X] &\geq  (1-L D_1^{2^{k_1}})(1-D_2 \cdot 2^{-k_2}) \norm{\mS {\mS} ^\dag ({(SS^\top)^\frac{1}{2}})^p}_1\\
&= D(k_1, k_2) \sum_i \sigma_i^p,\\
\end{align*}
here we define 
\begin{align*}
C(k_1, k_2) \defineas (1+K C_1^{2^{k_1}})(1+C_2 \cdot 2^{-k_2}),\\
D(k_1, k_2) \defineas (1-L D_1^{2^{k_1}})(1-D_2 \cdot 2^{-k_2}).
\end{align*}
Let $E(k_1, k_2)\defineas max\{C(k_1, k_2) - 1, 1-D(k_1, k_2)\}$, we have
\begin{align*}
&D(k_1, k_2) - 1 \leq \frac{ E[X] -  \sum_i \sigma_i^p}{\sum_i \sigma_i^p} \leq  C(k_1, k_2) - 1,\\
&-E(k_1, k_2)  \leq \frac{ E[X] -  \sum_i \sigma_i^p}{\sum_i \sigma_i^p} \leq  E(k_1, k_2) ,\\
&\frac{ \left|{E[X] -  \sum_i \sigma_i^p}\right|}{\sum_i \sigma_i^p} \leq  E(k_1, k_2).
\end{align*}
The upper bound of the variance is derived as
\begin{align*}
\mathrm{Var}[X] 
&= \mathrm{Var}[\mg ^\top\mS \underline{\mS^\dag}  (\underline{(SS^\top)^\frac{1}{2}})^p\mg]\\
&= \frac{2}{N} \norm{\mS \underline{\mS^\dag}  (\underline{(SS^\top)^\frac{1}{2}})^p}_2^2\\
&\leq \frac{2 (1+K C_1^{2^{k_1}})^2(1+C_2 \cdot 2^{-k_2})^2 }{N}\norm{\mS {\mS} ^\dag ({(SS^\top)^\frac{1}{2}})^p}_2^2\\
&=\frac{2 C(k_1, k_2)^2}{N} \norm{\Sigma^p}_2^2 =\frac{2 C(k_1, k_2)^2}{N} \norm{S}_{2p}^{2p}.
\end{align*}

From Chebyshev's inequality,
\begin{align*}
& \Pr\left(\left|X- E(X) \right|\geq k C(k_1, k_2) \sqrt{\frac{2 \norm{S}_{2p}^{2p}}{N}}\right)\leq {\frac {1}{k^{2}}},\\
& \Pr\left(\frac{\left|X- E(X) \right|}{\sum_i \sigma_i^p} \geq \frac{k C(k_1, k_2)}{\sum_i \sigma_i^p} \sqrt{\frac{2 \norm{S}_{2p}^{2p}}{N}}\right)\leq {\frac {1}{k^{2}}}.
\end{align*}
Notice that
\begin{align*}
\frac{\left|X- E(X) \right|}{\sum_i \sigma_i^p} &\leq
\frac{\left|X - \sum_i \sigma_i^p \right | + \left | \sum_i \sigma_i^p- E(X) \right|}{\sum_i \sigma_i^p} \\
&\leq \frac{\left|X - \sum_i \sigma_i^p \right |}{\sum_i \sigma_i^p} + E(k_1, k_2).
\end{align*}
Let $\epsilon=\frac{k C(k_1, k_2)}{\sum_i \sigma_i^p} \sqrt{\frac{2 \norm{S}_{2p}^{2p}}{N}} - E(k_1, k_2)$,
%i.e., $k^2=\frac{N\epsilon^2}{2 C(k_1,k_2)^2 \norm{S}_{2p}^{2p}}$, 
we finally have
\begin{align*}
&\Pr\left(\frac{\left|X - \sum_i \sigma_i^p \right |}{\sum_i \sigma_i^p}\geq \epsilon\right)\leq  \frac{2 C(k_1, k_2)^2 \norm{S}_{2p}^{2p}}{N \left(\sum_i \sigma_i^p(\epsilon+E(k_1, k_2))\right)^2},\\
&\Pr\left(\frac{\left|X - \sum_i \sigma_i^p \right |}{\sum_i \sigma_i^p}\leq \epsilon\right)\geq \\
& ~~~~~~~~~~~~~~~~~~ 1 - \frac{2 C(k_1, k_2)^2 \norm{S}_{2p}^{2p}}{N \left(\sum_i \sigma_i^p(\epsilon+E(k_1, k_2))\right)^2}.
\end{align*}
\end{proof}

\noindent\textbf{Convergence with truncated Taylor's expansion}

Next we present the convergence result of generalized LRR, which use a function $h$ to penalizes small singular values. First, we present the well-known convergence result of Taylor's expansion in Proposition \ref{prop:taylor}. Then the convergence of applying Taylor's expansion is shown in Proposition \ref{prop:taylor_convg2}.

\begin{proposition}[Mean-value form of Taylor's theorem] \label{prop:taylor} For a $T+1$ times differentiable function $h$, define the reminder term of the Taylor's expansion truncated at degree $T$ as $R_T(x) \defineas h(x) - \sum_{p=0}^T\frac{h^{(p)}(a)}{p!}(x-a)^p$.
Then there exists $a\leq \xi \leq x$ such that
\[
R_{T}(x)={\frac {f^{(T+1)}(\xi)}{(T+1)!}}(x-a)^{T+1}.
\]
\end{proposition}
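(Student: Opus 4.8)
The plan is to prove this classical Lagrange remainder formula by the standard auxiliary-function argument resting on Rolle's theorem, treating the \emph{expansion point} as the variable rather than $x$. First I would dispose of the trivial case $x=a$, in which $R_T(a)=0$ and the identity holds for any $\xi$. Assuming $x\neq a$, I would fix $x$ and introduce the auxiliary function
\[
g(t)=h(x)-\sum_{p=0}^T \frac{h^{(p)}(t)}{p!}(x-t)^p,
\]
regarded as a function of the base point $t$ on the closed interval between $a$ and $x$. By construction $g(x)=h(x)-h(x)=0$, while $g(a)=R_T(x)$, so the remainder is encoded as the single value $g(a)$.

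The key computation is to differentiate $g$. Applying the product rule to each summand gives $\frac{d}{dt}\bigl[\frac{h^{(p)}(t)}{p!}(x-t)^p\bigr]=\frac{h^{(p+1)}(t)}{p!}(x-t)^p-\frac{h^{(p)}(t)}{(p-1)!}(x-t)^{p-1}$, and summing over $p=0,\dots,T$ the expression telescopes: every intermediate term cancels against its neighbour and only the topmost derivative survives, yielding
\[
g'(t)=-\frac{h^{(T+1)}(t)}{T!}(x-t)^T.
\]
I regard this telescoping as the main obstacle, in the sense that it is the one step where the bookkeeping must be handled with care; in particular the $p=0$ summand contributes no spurious $(x-t)^{-1}$ term because of the factor $p$ produced by differentiation.

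Finally I would introduce a second auxiliary function carrying the appropriate power weight,
\[
\phi(t)=g(t)-\left(\frac{x-t}{x-a}\right)^{T+1} R_T(x),
\]
which satisfies $\phi(a)=g(a)-R_T(x)=0$ and $\phi(x)=g(x)-0=0$. By Rolle's theorem there is a point $\xi$ strictly between $a$ and $x$ with $\phi'(\xi)=0$. Substituting the telescoped form of $g'(\xi)$ together with the derivative of the weight, the equation $\phi'(\xi)=0$ reads $-\frac{h^{(T+1)}(\xi)}{T!}(x-\xi)^T+(T+1)\frac{(x-\xi)^T}{(x-a)^{T+1}}R_T(x)=0$; cancelling the common factor $(x-\xi)^T$, which is nonzero since $\xi\neq x$, and solving for $R_T(x)$ gives exactly $R_T(x)=\frac{h^{(T+1)}(\xi)}{(T+1)!}(x-a)^{T+1}$. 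The only genuine design choice in the argument is the weight $\bigl(\tfrac{x-t}{x-a}\bigr)^{T+1}$ in $\phi$, which is precisely what tunes the conclusion to the Lagrange form rather than, for instance, the Cauchy form of the remainder.
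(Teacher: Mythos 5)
Your proof is correct: the auxiliary function $g(t)=h(x)-\sum_{p=0}^T \frac{h^{(p)}(t)}{p!}(x-t)^p$, the telescoping computation of $g'$, and the Rolle's-theorem argument applied to $\phi$ with the weight $\bigl(\tfrac{x-t}{x-a}\bigr)^{T+1}$ constitute the standard, complete derivation of the Lagrange remainder, and you handle the degenerate case $x=a$ and the cancellation of $(x-\xi)^T$ properly. Note that the paper itself states this proposition as a classical, well-known result and offers no proof at all, so there is nothing to compare against; your argument simply supplies the standard justification (and implicitly corrects the paper's typo $f^{(T+1)}$ for $h^{(T+1)}$).
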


\begin{proposition}
    \label{prop:taylor_convg2}
Let $Z=TayLRR(\mS, h, T, k_1, k_2)$ be the result computed by Algorithm \ref{alg4}, which approximates $\sum_{i=1}^r h(\sigma_i(\mS))$. 
Following previous notations, \eg, $X_p=\frac{1}{N}\sum_{i=1}^N \langle \underline{P_S[\mg_i]},\underline{(S S^\top)^\frac{p}{2}}  \mg_i \rangle$. 
Then for any $\epsilon>0$,
\begin{align*}
& \Pr\left(\left|\sum_{i=1}^r h(\sigma_i(\mS)) - Z \right| \leq \epsilon' \right) \geq \\
& ~~~~~~~~~~~~~~~~ 1 - \frac{2 C(k_1, k_2)^2 \norm{S}_{2p}^{2p}}{N \left(\sum_i \sigma_i^p(\epsilon+E(k_1, k_2))\right)^2},
\end{align*}
where 
\begin{align*}
&\epsilon'=\epsilon \sum_{i=0}^r \underline{h_T(\sigma_i)} + \sum_{i=1}^r R_T(\sigma_i),\\
&\underline{h_T(\sigma_i)} = \sum_{p=0}^T \frac{h^{(p)}(0)}{p!}\sigma_i^p.
\end{align*}
Here $R_T(\sigma_i)$ is the reminder defined in Proposition \ref{prop:taylor}.
\end{proposition}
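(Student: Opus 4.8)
The plan is to split the total error $\sum_{i=1}^r h(\sigma_i(\mS)) - Z$ into a \emph{deterministic} truncation part and a \emph{stochastic} estimation part, control each separately, and recombine via the triangle inequality. Write $H_T \defineas \sum_{i=1}^r \underline{h_T(\sigma_i)} = \sum_{p=0}^T \frac{h^{(p)}(0)}{p!}\sum_i \sigma_i^p$ for the degree-$T$ Taylor polynomial evaluated with the \emph{exact} Schatten-$p$ norms, and recall that Algorithm \ref{alg4} returns $Z = \sum_{p=0}^T \frac{h^{(p)}(0)}{p!} X_p$, where $X_p$ is the approximate $p$-th Schatten norm from $SchattenNorm$. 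Then
\begin{align*}
\sum_{i=1}^r h(\sigma_i(\mS)) - Z = \Big(\sum_{i=1}^r h(\sigma_i) - H_T\Big) + \big(H_T - Z\big).
\end{align*}

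For the deterministic part, I would apply the mean-value form of Taylor's theorem (Proposition \ref{prop:taylor}) termwise: since $\underline{h_T}$ is the degree-$T$ Taylor polynomial of $h$ about $0$, we have $h(\sigma_i) - \underline{h_T(\sigma_i)} = R_T(\sigma_i)$, so $\sum_i h(\sigma_i) - H_T = \sum_{i=1}^r R_T(\sigma_i)$, which is exactly the second contribution to $\epsilon'$ and carries no randomness. For the stochastic part, note $H_T - Z = \sum_{p=0}^T \frac{h^{(p)}(0)}{p!}\big(\sum_i \sigma_i^p - X_p\big)$, a weighted sum of the per-degree Schatten-norm errors already controlled by Theorem \ref{the:main_convg}. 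On the event that every relative deviation satisfies $|X_p - \sum_i \sigma_i^p| \leq \epsilon \sum_i \sigma_i^p$, the triangle inequality yields $|H_T - Z| \leq \epsilon \sum_{p=0}^T \frac{h^{(p)}(0)}{p!}\sum_i \sigma_i^p = \epsilon \sum_{i=1}^r \underline{h_T(\sigma_i)}$, the first contribution to $\epsilon'$. Adding the two bounds gives $|\sum_i h(\sigma_i) - Z| \leq \epsilon'$ on this event, and lower-bounding its probability through Theorem \ref{the:main_convg} produces the stated guarantee.

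I expect the final probability accounting to be the main obstacle. The estimator $Z$ aggregates $T+1$ separately computed Schatten norms $X_0,\dots,X_T$, each produced by an independent call to $SchattenNorm$ with its own Gaussian samples, so controlling all of them simultaneously strictly requires a union bound, which would replace the single term $\frac{2 C(k_1,k_2)^2 \norm{S}_{2p}^{2p}}{N(\sum_i \sigma_i^p(\epsilon+E(k_1,k_2)))^2}$ by a sum over $p$. Reconciling this with the single-term bound as written calls for either isolating a dominant degree $p$ or absorbing the union-bound factor into the constants, and I would flag this as the delicate step.

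A secondary subtlety concerns the sign of the Taylor coefficients $h^{(p)}(0)/p!$: the expression for $\epsilon'$ omits absolute values, so the weighted-sum bound in the stochastic step is clean only when these coefficients are nonnegative, as holds for the monotone penalty functions of interest. Otherwise one should replace the bound by $|H_T - Z| \leq \epsilon \sum_{p=0}^T |h^{(p)}(0)/p!| \sum_i \sigma_i^p$ and adjust $\epsilon'$ accordingly; I would state the nonnegativity assumption explicitly rather than leave it implicit.
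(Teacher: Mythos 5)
Your proposal takes essentially the same route as the paper's own proof: expand each $h(\sigma_i)$ into its degree-$T$ Taylor polynomial plus the remainder $R_T(\sigma_i)$, pull $\sum_{i=1}^r R_T(\sigma_i)$ out by the triangle inequality, and bound the weighted per-degree errors $\left|X_p - \sum_i \sigma_i^p\right|$ through Theorem \ref{the:main_convg}, so the argument is correct and matches. The two subtleties you flag are genuine, but the paper's proof glosses over them as well: it asserts the per-degree deviation bound holds ``with the same probability'' for all $p=0,\dots,T$ without the union bound you correctly identify (note $p$ even appears as a free variable in the stated probability), and it moves the coefficients $h^{(p)}(0)/p!$ and the remainders $R_T(\sigma_i)$ outside absolute values without stating the nonnegativity assumption you make explicit.
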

\begin{proof}
First, note that $Z=TayLRR(\mS, h, T, k_1, k_2)$ essentially computes
\[
\sum_{p=0}^T \frac{h^{(p)}(0)}{p !}X_p.
\]
From Theorem \ref{the:main_convg}, with probability at least $1 - \frac{2 C(k_1, k_2)^2 \norm{S}_{2p}^{2p}}{N \left(\sum_i \sigma_i^p(\epsilon+E(k_1, k_2))\right)^2}$, we have $\left|X_p-\sum_i \sigma_i^p \right| \leq \epsilon \sum_i \sigma_i^p.$ Thus, with the same probability, we have
\begin{align*}
& \left| \sum_{i=1}^r h(\sigma_i(\mS))-Z\right | \\
=& \left| \sum_{i=1}^r h(\sigma_i(\mS))-\sum_{p=0}^T \frac{h^{(p)}(0)}{p !}X_p\right | \\
=& \left| \sum_{i=1}^r \left( \sum_{p=0}^T\frac{h^{(p)}(0)}{p!}\sigma_i^p + R_T(\sigma_i) \right)-\sum_{p=0}^T \frac{h^{(p)}(0)}{p !}X_p\right | \\
\leq & \left| \sum_{i=1}^r \sum_{p=0}^T\frac{h^{(p)}(0)}{p!}\sigma_i^p-\sum_{p=0}^T \frac{h^{(p)}(0)}{p !}X_p\right |  + \sum_{i=1}^r R_T(\sigma_i)\\
= & \sum_{p=0}^T \frac{h^{(p)}(0)}{p!}\left| \sum_{i=1}^r\sigma_i^p- X_p\right |  + \sum_{i=1}^r R_T(\sigma_i)\\
\leq & \sum_{p=0}^T \frac{h^{(p)}(0)}{p!} \epsilon \sum_i \sigma_i^p + \sum_{i=1}^r R_T(\sigma_i)\\
= & \epsilon \sum_{i=0}^r \underline{h_T(\sigma_i)} + \sum_{i=1}^r R_T(\sigma_i). \qedhere
\end{align*}
\end{proof}
% Remarks: The bias $\epsilon'$ is composed of three terms. As iteration steps $k_1$ and $k_2$ increase, $C(k_1, k_2)\rightarrow 1$, so the first term diminishes. The second term can be arbitrarily small by setting $\epsilon$. The third term represents the extra bias caused by truncated Taylor's expansion.

\subsection*{Compute Taylor and Laguerre expansion with Mathematica}

% \begin{figure*}[!t]
%     \centering
%     \includegraphics[width=1\linewidth]{basic_experiment/taylor_lap2.png}
%     \caption{Approximating the Laplace relaxation function with Taylor's expansion, with degree 10-15 polynomials. 1) Expansion coefficients up to 15th degree; 2) Visualization of the approximation quality with degree 10-15 polynomials.`}
%     \label{fig:tay_lap}
% \end{figure*}

\begin{figure*}[!t]
    \centering
    \includegraphics[width=1\linewidth]{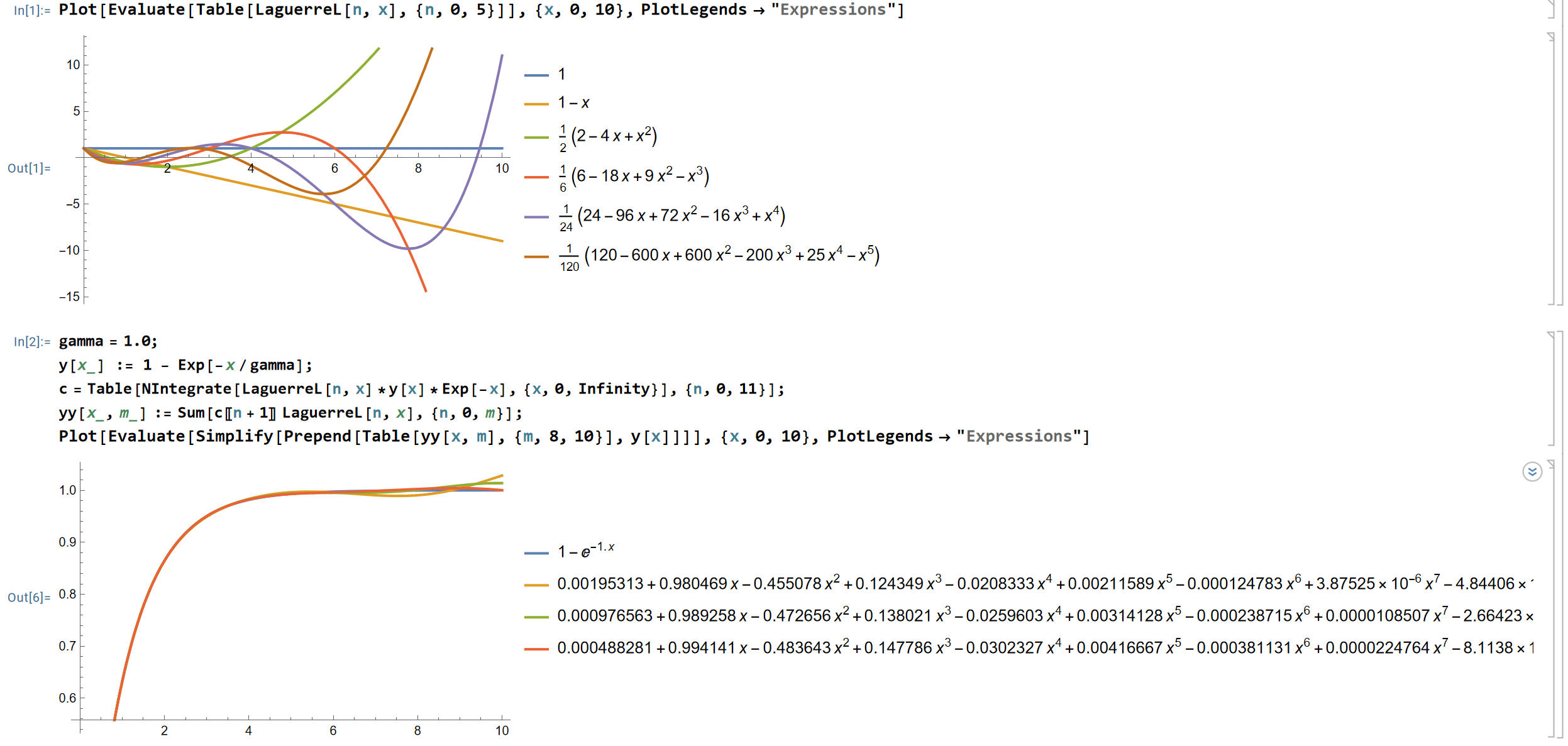}
    \caption{Approximating the Laplace relaxation function with Laguerre's expansion, with degree 8-10 polynomials. 1) Visualization of the standard degree 0-5 Laguerre polynomials; 2) Visualization of the approximation quality with degree 8-10 polynomials.`}
    \label{fig:lag_lap}
\end{figure*}

The coefficients of the series expansion required in Algorithms \ref{alg4} and \ref{alg5} can conveniently computed by software like Mathematica. 
We demonstrate an example of approximating the Laplace relaxation.

\noindent \textbf{Laplace relaxation} \citep{trzasko2008highly,hu2021low}:
The Laplace relaxation penalizes small singular values with a function $h$, defined as
$h(\sigma)=1-\exp{(-\sigma/\gamma)}$. Note that $h$ essentially puts more weights on small singular values, so they will be more rapidly reduced during optimization.

The code and results are presented in  Figure \ref{fig:lag_lap} (Laguerre's expansion).

Remarks: 1) The implementation is quite convenient, requiring only a few lines of code. 2) The computational time is negligible. In our experiments all expansion coefficients can be instantly computed. 3) Empirically, Laguerre's expansion has better approximation quality with fewer terms.

\begin{figure*}[!t]
	\centering
	\subfigure[]{
		\begin{minipage}[t]{0.24\linewidth}
			\centering
			\includegraphics[width=1.38in]{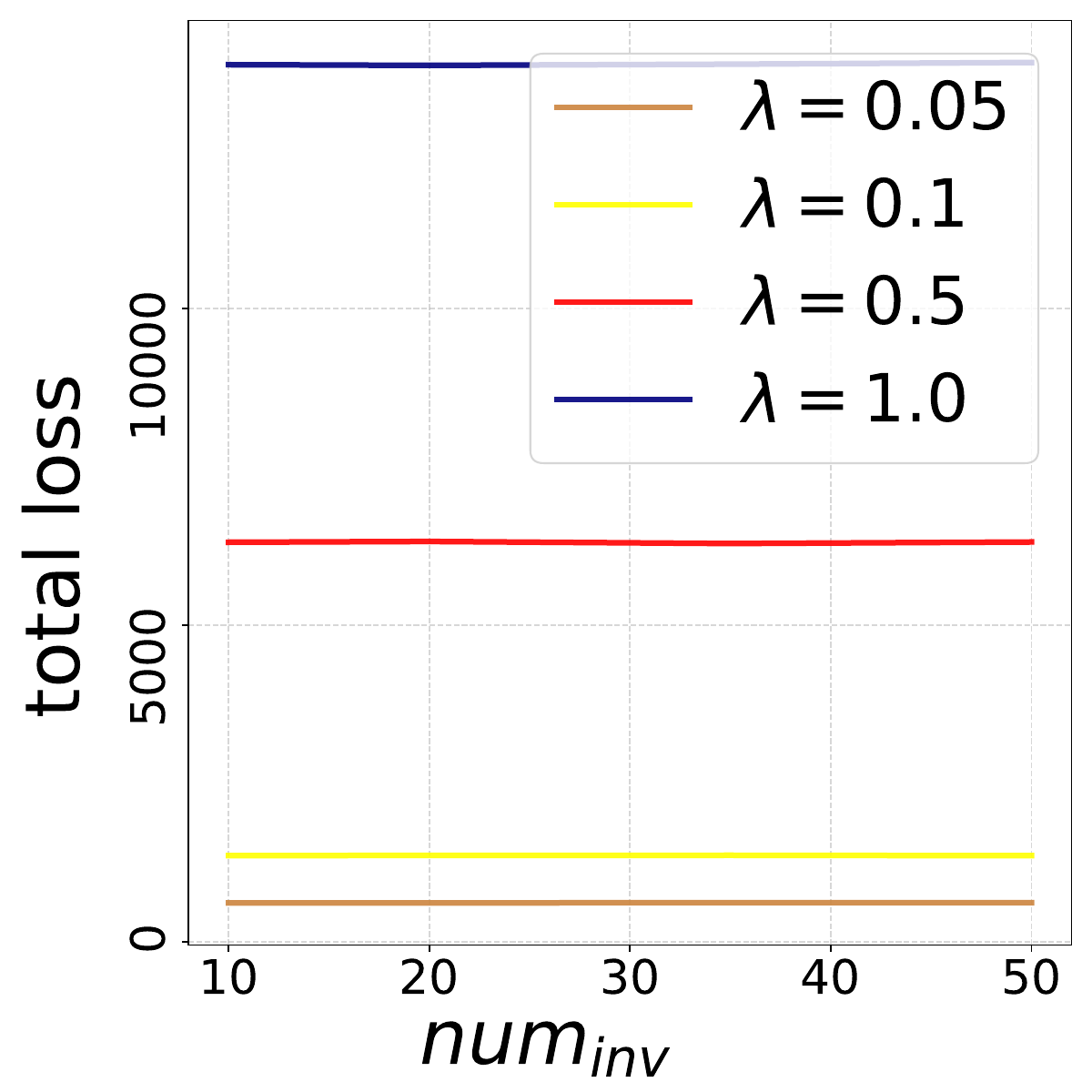}
		\end{minipage}
	}%
	\subfigure[]{
		\begin{minipage}[t]{0.24\linewidth}
			\centering
			\includegraphics[width=1.38in]{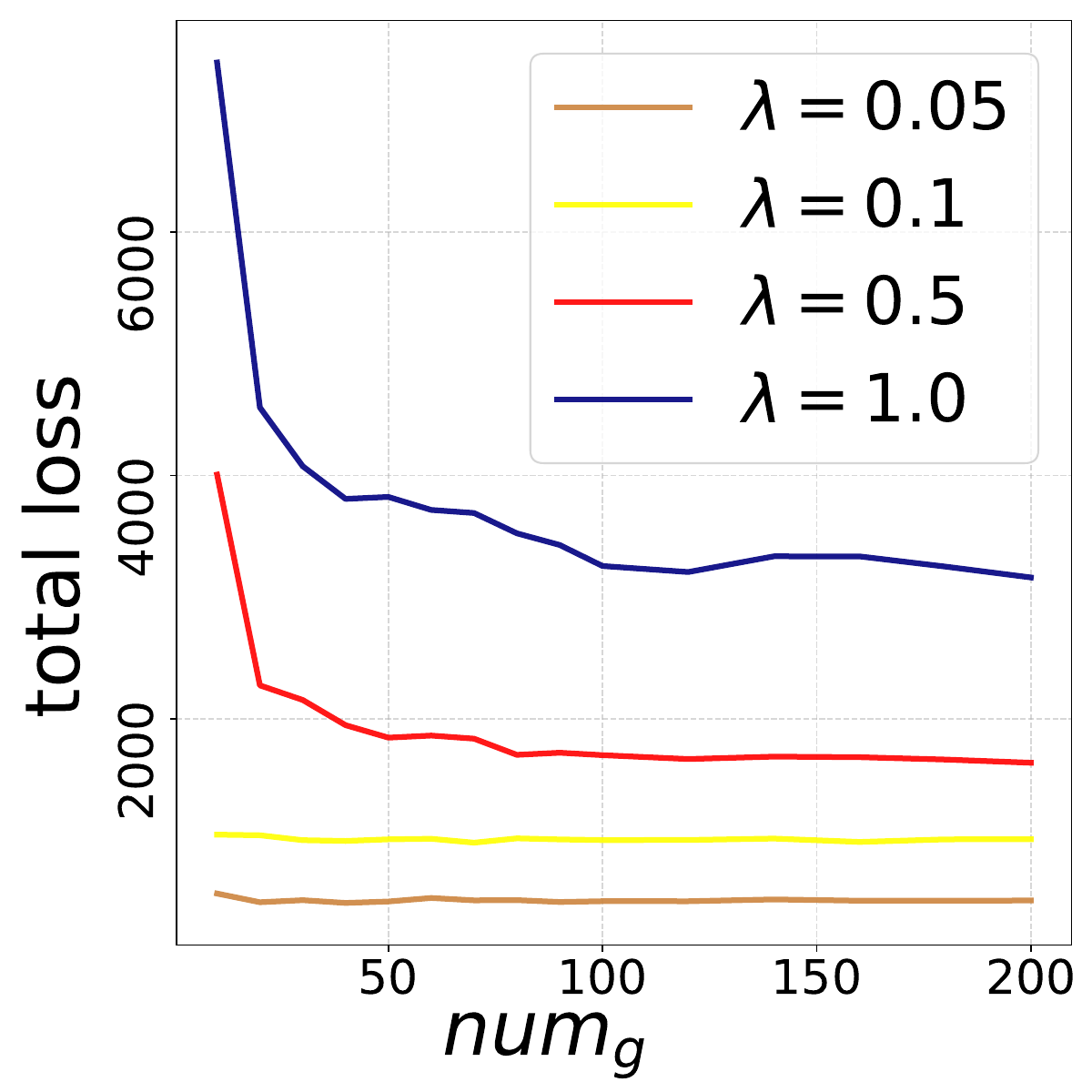}
		\end{minipage}
	}%
	\subfigure[]{
		\begin{minipage}[t]{0.24\linewidth}
			\centering
			\includegraphics[width=1.38in]{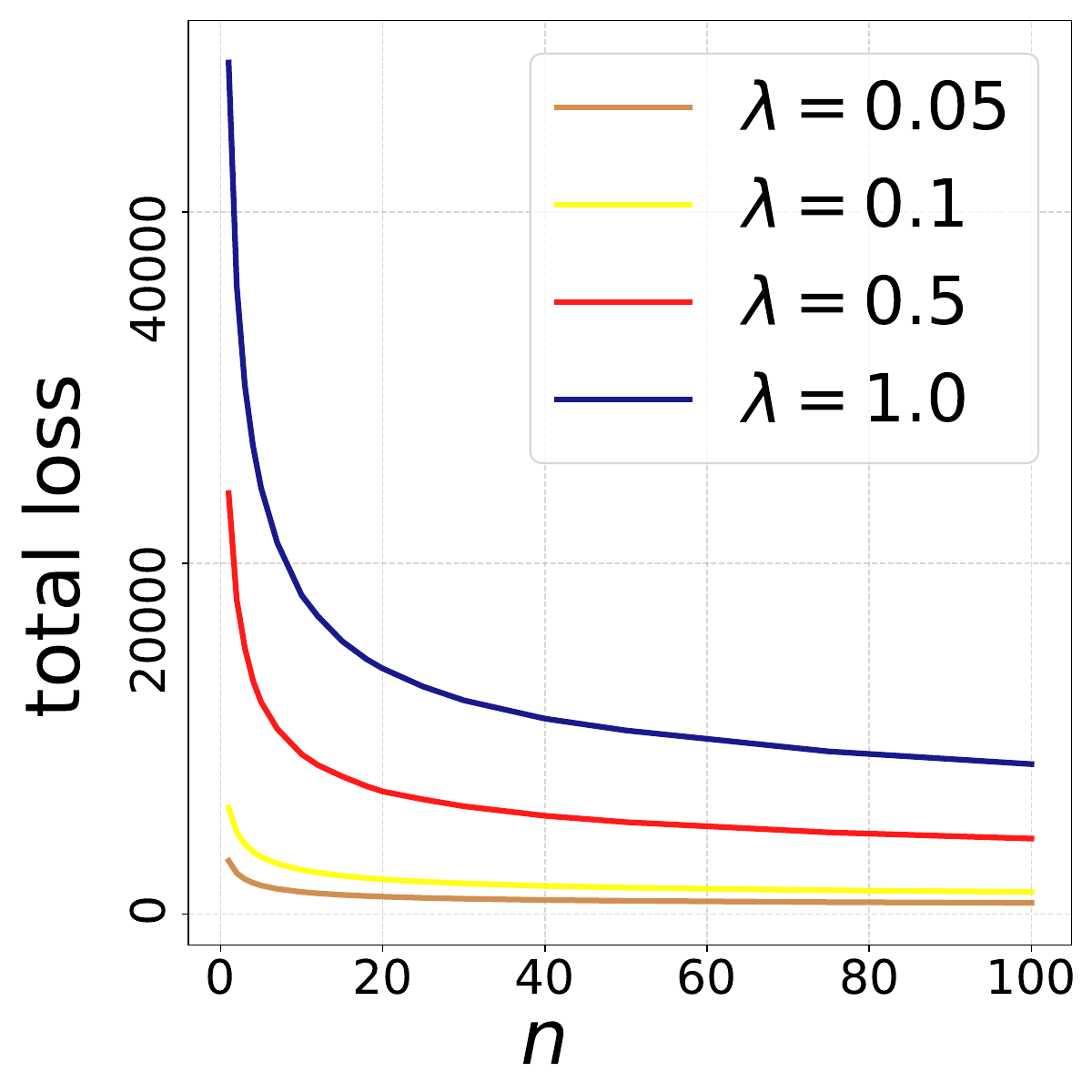}
		\end{minipage}
	}%
	\subfigure[]{
		\begin{minipage}[t]{0.27\linewidth}
			\centering
			\includegraphics[height=1.38in, width=1.56in]{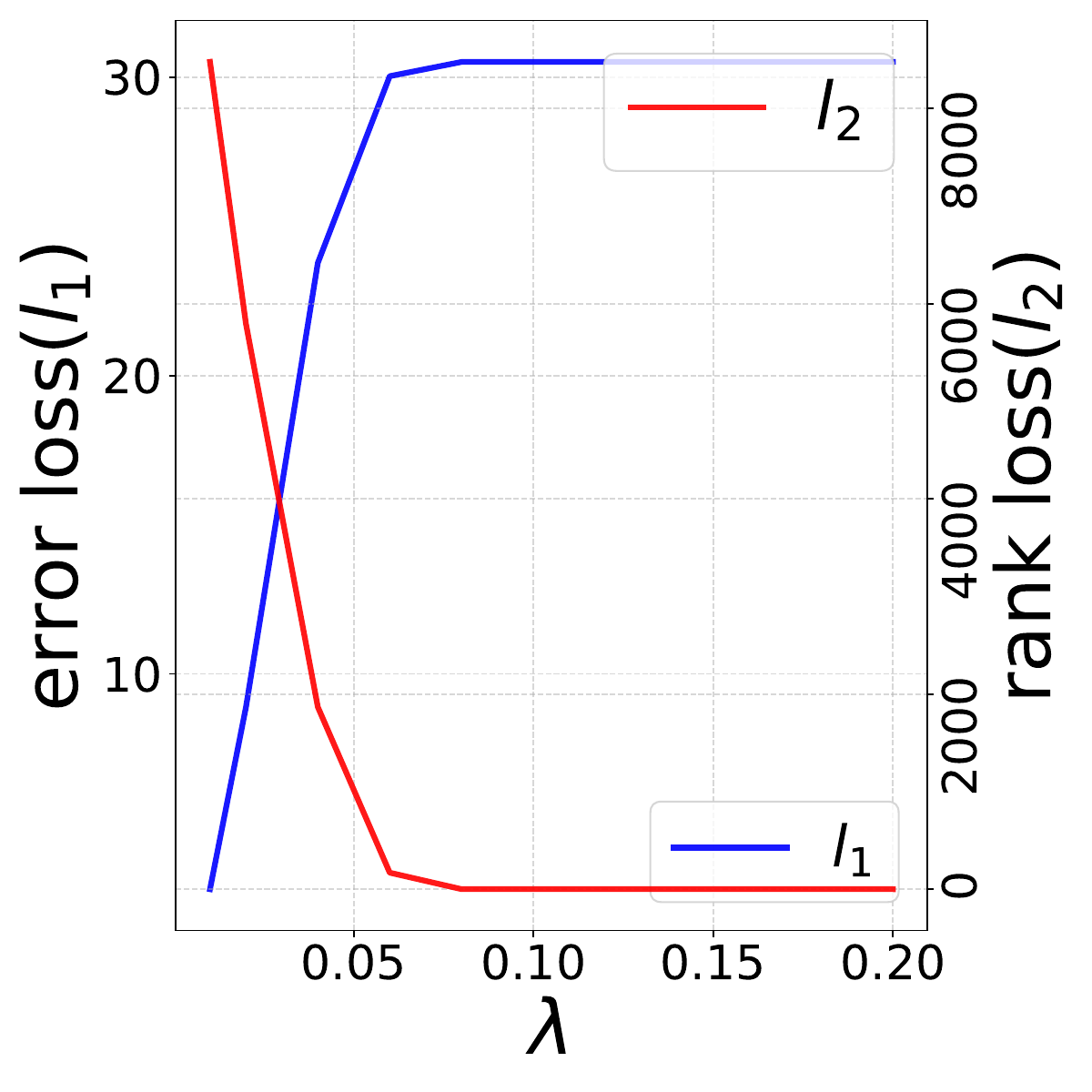}
		\end{minipage}
	}%
	\centering
	% \caption{The converged losses when using different hyper-parameters. All the experiments were conducted with a 300*300 size matrix, which original rank is 30, so that the experiment results can reveal some patterns under real image processing situation.(a) converged $l_1$ and $l_2$ when using different $\lambda$. (b) converged total loss with different $\lambda$ and $num_{inv}$. (c) converged total loss with different $\lambda$ and $num_{g}$. (d) converged total loss with different $\lambda$ and $n$.}
    \caption{Numerical analysis on the synthetic dataset.}
	\label{fig:toy}
\end{figure*}

\begin{figure*}[!t]
	\centering
	\subfigure[observed image]{
		\begin{minipage}[t]{0.2\linewidth}
			\centering
			\includegraphics[width=1.1in]{baseline_compare/X_obs.jpg}\\
		\end{minipage}%
	}%
        \subfigure[\parbox{2.4cm}{\centering RPCA\\(nuclear norm)}]{
		\begin{minipage}[t]{0.2\linewidth}
			\centering
			\includegraphics[width=1.1in]{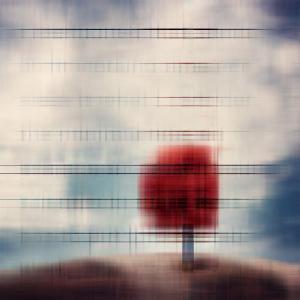}\\
		\end{minipage}%
	}%
	\subfigure[\parbox{2.2cm}{\centering RPCA\\(F-nuclear norm)}]{
		\begin{minipage}[t]{0.2\linewidth}
			\centering
			\includegraphics[width=1.1in]{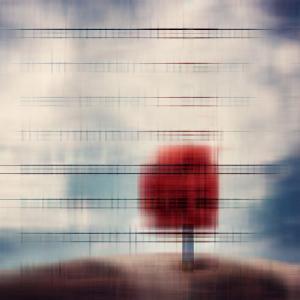}\\
		\end{minipage}%
	}%
        \subfigure[\parbox{2cm}{\centering SNN}]{
		\begin{minipage}[t]{0.2\linewidth}
			\centering
			\includegraphics[width=1.1in]{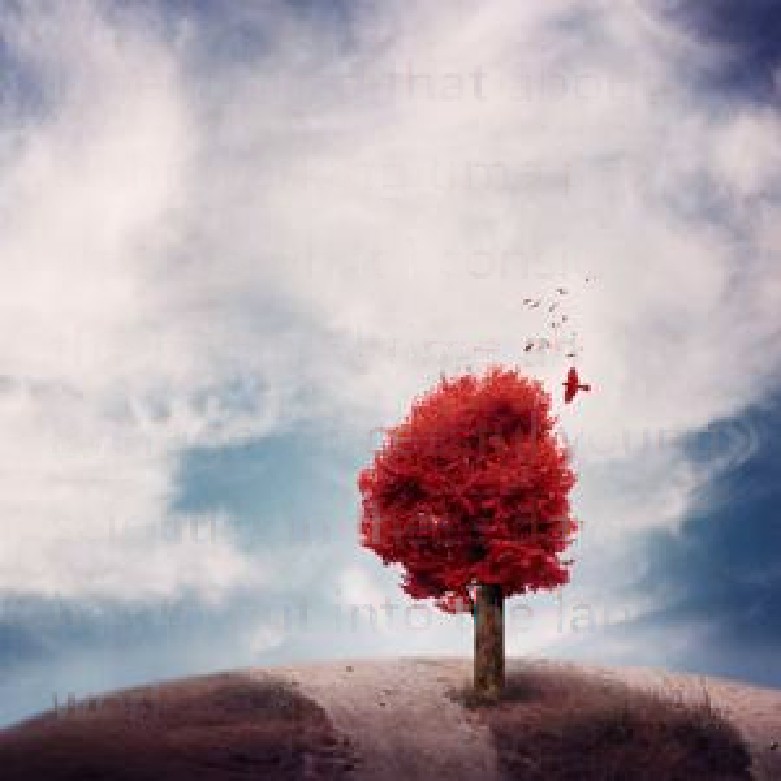}\\
		\end{minipage}%
	}%
        \subfigure[\parbox{2cm}{\centering TNN}]{
		\begin{minipage}[t]{0.2\linewidth}
			\centering
			\includegraphics[width=1.1in]{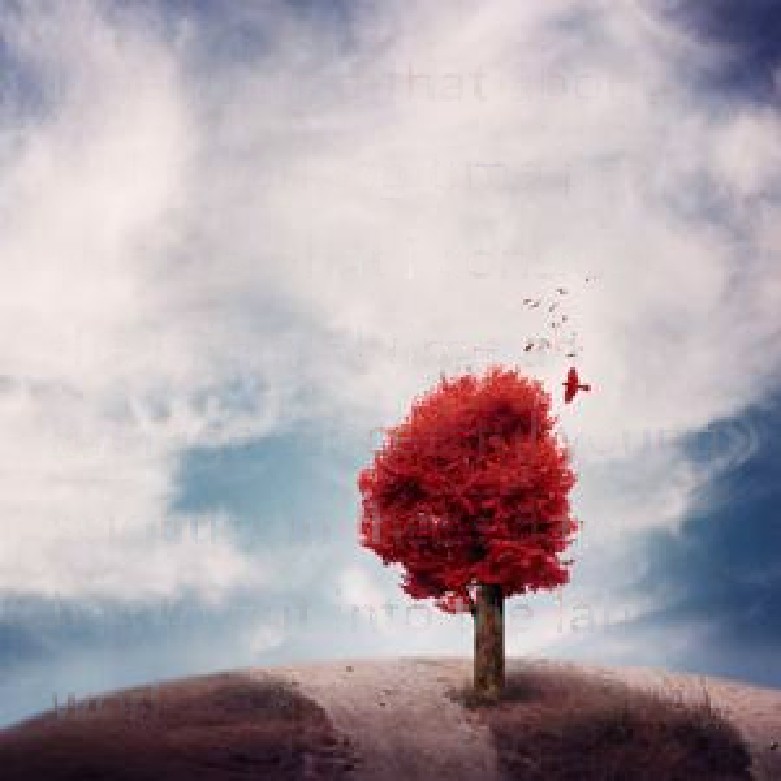}\\
		\end{minipage}%
	}%
	\centering
	\caption{More text inpaiting results. (a) image with text (b)-(e) recovered images.}
	\vspace{-0.2cm}
	\label{fig:textrec1}
\end{figure*}

% \subsection*{More experimental results}

\subsection*{Convergence and parameter sensitivity analysis}
\label{app:para}

We perform experiments on a synthetic dataset, the goal of which is to analyze the convergence and parameter sensitivity of the proposed method.
The synthetic data is construed by sampling $\mA \in \R^{m \times r}$, $\mB \in \R^{r \times n}$, and $\mE \in \R^{m \times n}$, the entries of each matrix follow i.i.d. Gaussian distribution ($m=n=30, r=30)$.
We define $\mC = \mA  \mB$, $\mS = \mC + \sigma \mE$, where $\mC$ is the underlying low-rank matrix need to be discovered, and $\mS$ represents the observed full-rank matrix obstructed by the noise $\sigma \mE$.
With this formulation, the target low-rank matrix can be found by solving:
\begin{align}
l=\operatorname*{min}_{\mX}\underbrace{\|\mS-\mX\|_F^2}_{l_1\text{: reconstruction loss}}+\underbrace{\lambda \hR(\mX)}_{l_2\text{: low-rank regularization}}.
\end{align}
Here we decompose the total loss $l$ into the reconstruction loss $l_1$ and the regularization loss $l_2$, and $\lambda$ controls the strength of regularization.
In this experiment we consider the regularization to be the nuclear norm, \ie, $\hR(\mX)=\|\mX\|_*$.
In our method, the following parameters play significant roles:

$num_{inv}$: In Proposition 1, the pseudo-inverse of a matrix is approximated by finite iterations. The iteration step is denoted as $num_{inv}$.\\
$num_g$: In Proposition 2, the matrix square root is approximated by a finite power series. The number of the summation terms is denoted as $num_g$.\\
$n$: In Theorem 1, the expectation will be approximated by an average of finite samples.
The number of samples is denoted as $n$.

% \begin{figure}
%   \centering
%   \includegraphics[width=10cm]{num_inv.pdf}
%   \caption{How different part of converged loss change with different lam }
% \end{figure}

The sensitivity analysis results of these key parameters are shown in Figure \ref{fig:toy} (a)-(c), from which we can see that: 1) Surprisingly, the choice of $num_{inv}$ has little impact on the convergence, and 10 iterations are sufficient for approximating the pseudo-inverse. A probable explanation is that gradient-based optimization involves many iteration steps, so its requirement for per-step accuracy is not strict.
2) The convergence is quite stable w.r.t. $num_g$ and $n$, as long as their values are reasonably large ($num_g\geq 100, n\geq 80$).
We further vary parameter $\lambda$ to control the strength of regularization. As shown in Figure \ref{fig:toy} (d), this parameter can effectively control the tradeoff between the reconstruction loss and the regularization loss as expected.

\subsection*{More results for matrix completion}

% \subsubsubsection{Image inpainting}

\begin{table}[!t]
    \centering
    \caption{More results for comparison of matrix completion algorithms for image inpainting.}\label{tab:inpaint1}
    \small
    \setlength{\tabcolsep}{4pt}
    \begin{tabular}{l | c c c c c c c c c c c}
    
      \hline
     \multirow{2}{*}{\diagbox[width=6em,trim=l]{PSNR}{Method}} & RPCA & RPCA & \multirow{2}{*}{SNN} & \multirow{2}{*}{TNN} \\
      &nuclear norm&F-nuclear norm& & \\

      \hline
      drop 20\% & 27.50 & 27.49 & 30.14 & 30.09\\
      drop 30\% & 23.70 & 23.70 & 29.95 & 29.87\\
      drop 40\% & 12.09 & 12.08 & 29.68 & 29.56\\
      drop 50\% & 6.40 & 6.40 & 29.23 & 29.03\\
      block & 13.73 & 13.73 & 27.54 & 27.84\\
      text & 22.25 & 22.26 & 29.81 & 29.81\\
      \hline
      time (s) & 27.64 & 5.16 & 6.73 & 9.08\\
      \hline
    \end{tabular}
\end{table}

Table \ref{tab:inpaint1} and Figure \ref{fig:textrec1} present more results of matrix completion algorithms for image inpaiting task. 
Specifically, RPCA (nuclear norm) and RPCA (F-nuclear norm) are weaker versions of RPCA (FGSR) \citep{fan2019factor}. Sum of the Nuclear Norm (SNN) \citep{liu2012tensor} and Tensor Nuclear Norm (TNN) \citep{zhang2014novel} are weaker versions of TNN-3DTV.

\begin{figure*}[t]
	\centering
	\subfigure[\parbox{4cm}{\centering observed image}]{
		\begin{minipage}[t]{0.33\linewidth}
			\centering
			\includegraphics[width=1.7in]{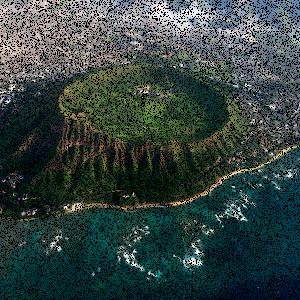}
                \vspace{0.5cm}
		\end{minipage}
	}%
	\subfigure[\parbox{4cm}{\centering recovered by ours-nuclear\\PSNR=28.45, SSIM=0.9231}]{
		\begin{minipage}[t]{0.33\linewidth}
			\centering
			\includegraphics[width=1.7in]{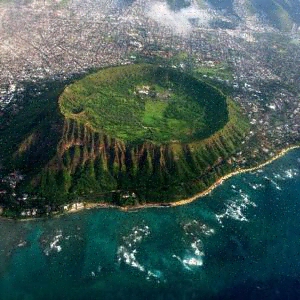}
                \vspace{0.3cm}
		\end{minipage}
	}%
	\subfigure[\parbox{4cm}{\centering recovered by ours-L-Lap \\ PSNR=29.92, SSIM=0.9427}]{
		\begin{minipage}[t]{0.33\linewidth}
			\centering
			\includegraphics[width=1.7in]{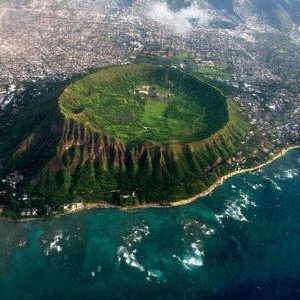}
                \vspace{0.3cm}
		\end{minipage}
	}%
	\centering
	\caption{Results of applying our method with nuclear norm and $h$-functions to an image randomly blocked by 25\%.}
	\label{fig:1}
\end{figure*}

\begin{figure*}[!t]
	\centering
	\subfigure[\parbox{4cm}{\centering observed image}]{
		\begin{minipage}[t]{0.33\linewidth}
			\centering
			\includegraphics[width=1.7in]{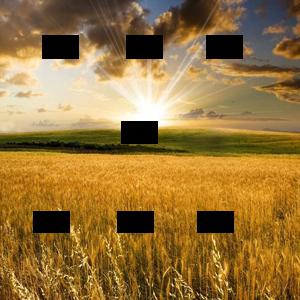}
                \vspace{0.5cm}
		\end{minipage}
	}%
	\subfigure[\parbox{4cm}{\centering recovered by ours-nuclear\\PSNR=26.62, SSIM=0.9694}]{
		\begin{minipage}[t]{0.33\linewidth}
			\centering
			\includegraphics[width=1.7in]{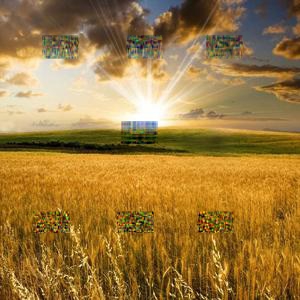}
                \vspace{0.3cm}
		\end{minipage}
	}%
	\subfigure[\parbox{4cm}{\centering recovered by ours-L-Lap \\ PSNR=31.40, SSIM=0.9885}]{
		\begin{minipage}[t]{0.33\linewidth}
			\centering
			\includegraphics[width=1.7in]{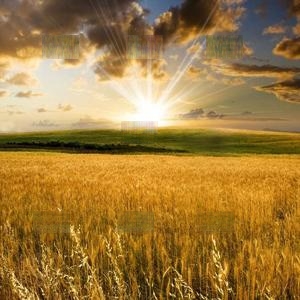}
                \vspace{0.3cm}
		\end{minipage}
	}%
	\centering
	\caption{Results of applying our method with nuclear norm and $h$-functions to an image blocked by multiple rectangular regions.}
	\label{fig:2}
\end{figure*}

\begin{figure*}[!t]
	\centering
	\subfigure[\parbox{4cm}{\centering observed image}]{
		\begin{minipage}[t]{0.33\linewidth}
			\centering
			\includegraphics[width=1.7in]{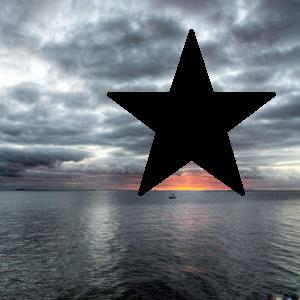}
                \vspace{0.5cm}
		\end{minipage}
	}%
	\subfigure[\parbox{4cm}{\centering recovered by ours-nuclear\\PSNR=29.27, SSIM=0.9500}]{
		\begin{minipage}[t]{0.33\linewidth}
			\centering
			\includegraphics[width=1.7in]{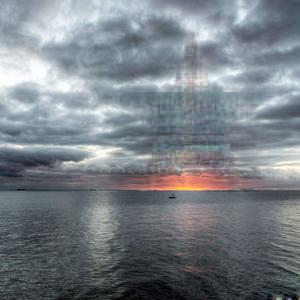}
                \vspace{0.3cm}
		\end{minipage}
	}%
	\subfigure[\parbox{4cm}{\centering recovered by ours-L-Lap \\ PSNR=29.57, SSIM=0.9503}]{
		\begin{minipage}[t]{0.33\linewidth}
			\centering
			\includegraphics[width=1.7in]{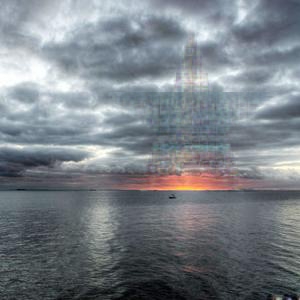}
                \vspace{0.3cm}
		\end{minipage}
	}%
	\centering
	\caption{Results of applying our method with nuclear norm and L-Lap to an image blocked by a prominent star-shaped region.}
	\label{fig:3}
\end{figure*}

To further investigate the performance of our method utilizing nuclear norm and $h$ functions, we present the experimental results in Figure \ref{fig:1}, \ref{fig:2}, and \ref{fig:3}. These figures showcase a subset of the conducted experiments where we applied both algorithms to images with various masking scenarios, enabling us to evaluate their performance across different conditions. The $h$ function we used in this experiment is Laguerre expansion-based Laplace function. Our findings consistently demonstrate that our method incorporating $h$ functions yields superior results in all image inpainting tasks. This substantiates the efficacy of $h$ functions, which approximate the pure rank, when employed in our approach.

Remarkably, even when confronted with challenging scenarios such as substantial and contiguous image regions being blocked, our method adeptly fills the missing parts based solely on the low-rank prior, thus providing plausible solutions. Notably, Figure \ref{fig:2} illustrates the block recovery problem, where our method with nuclear norm failed to achieve the desired peak signal-to-noise ratio (PSNR). However, it still exhibited a relatively high structural similarity index (SSIM), indicating that the algorithm successfully generated missing parts that closely resemble the original image's structure.

\end{document}